\pgfplotsset{compat=newest}
\newlength\figureheight
\newlength\figurewidth
\title{Cyclic Nullspace Coordination: Perpetual Flight of Aerial Carriers for Static Suspension}
\author{Chiara Gabellieri$^{*}$, Yaolei Shen$^*$, Martina Paolucci$^\dagger$, and Antonio Franchi$^{*,\dagger}$
\thanks{$^*$ Robotics and Mechatronics Department, Electrical Engineering,  Mathematics, and Computer Science (EEMCS) Faculty, University of Twente, 7500 AE Enschede, The Netherlands. {\tt\footnotesize c.gabellieri@utwente.nl}, {\tt\footnotesize schol@r-franchi.eu}}
\thanks{$^\dagger$ Department of Computer, Control and Management Engineering, Sapienza University of Rome, 00185 Rome, Italy, {\tt\footnotesize schol@r-franchi.eu}}
\thanks{This work was partially funded by the Horizon Europe agreement no. 101120732 (Autoassess).}}
\begin{document}
% ---------- Standard Commands ------------------

\newcommand\red[1]{{\textcolor{red}{#1}}}
\newcommand\blue[1]{{\textcolor{blue}{#1}}}
\newcommand\green[1]{{\textcolor{green}{#1}}}
% ---------- new variables ------------------

% Sensitivity

% ---------- new commands ------------------
\newcommand{\vect}[1]{\bm{#1}}		% vectors
\newcommand{\matr}[1]{\bm{#1}}		% matrices
\newcommand{\nR}[1]{\mathbb{R}^{#1}}		% real number
\newcommand{\nN}[1]{\mathbb{N}^{#1}}% inter number
\newcommand{\SO}[1]{SO({#1})}		% orthogonal group
\newcommand{\define}{:=}			% define symbol
\newcommand{\modulus}[1]{\left| #1 \right|}	% abs
\newcommand{\matrice}[1]{\begin{bmatrix} #1 \end{bmatrix}}	% matrix
\newcommand{\smallmatrice}[1]{\left[\begin{smallmatrix} #1 \end{smallmatrix}\right]}	% matrix
\newcommand{\cosp}[1]{\cos \left( #1 \right)}	% cos with brace
\newcommand{\sinp}[1]{\sin \left( #1 \right)}	% sin with brace

\newcommand{\sgn}[1]{\text{sgn}\left( #1 \right)}			% signum
\newcommand{\atanTwo}[1]{{\rm atan2}\left( #1\right)}		% atan2
\newcommand{\acotTwo}[1]{{\rm acot2}\left( #1\right)}		% acot2
\newcommand{\upperRomannumeral}[1]{\uppercase\expandafter{\romannumeral#1}}	% roman numbers
\newcommand{\lowerromannumeral}[1]{\romannumeral#1\relax}
\newcommand{\vSpace}{\;\,}
\newcommand{\image}[1]{\text{Im}\left( #1 \right)}
\newcommand{\pinv}{\dagger}
\newcommand{\diag}[1]{\text{diag}\left( #1 \right)}
\newcommand{\unitOfMeasure}[1]{\; [ \rm #1]}
\newcommand{\Ker}{{\rm Null}}
\newcommand{\ith}[1]{#1^{\rm{th}}}
% --------- References ----------------------
\newcommand{\fig}{Fig.~}	% figure ref
\newcommand{\eqn}{Eq.~}	% equation ref
\newcommand{\tab}{Tab.~}	% table ref
\newcommand{\cha}{Chap.~}	% chapter ref
\newcommand{\sect}{Sec.~}	% section ref
\newcommand{\theo}{Theorem~}	% section ref

% --------- Variables -----------------------

% General
\renewcommand{\frame}{\mathcal{F}}		% frame
\newcommand{\origin}{O}						% origin
\newcommand{\vX}{\vect{x}}					% x-axis
\newcommand{\vY}{\vect{y}}					% y-axis
\newcommand{\vZ}{\vect{z}}					% z-axis
\newcommand{\vE}[1]{\vect{e}_{#1}}			% standard unit axis
\newcommand{\vV}{\vect{v}}					% general vector
\newcommand{\pos}{\vect{p}}				% position vector
\newcommand{\dpos}{\dot{\vect{p}}}		% derivative of position vector
\newcommand{\ddpos}{\ddot{\vect{p}}}	% double derivative of position vector
\newcommand{\rotMat}{\matr{R}}			% rotation matrix
\newcommand{\drotMat}{\dot{\matr{R}}}			% derivative of rotation matrix
\newcommand{\rotMatVectAngle}[2]{\rotMat_{#1}(#2)}	% rotation matrix representing the rotation about a vector of a certain angle
\newcommand{\angVel}{\vect{\omega}}				% angular velocity
\newcommand{\angAcc}{\dot{\vect{\omega}}}		% angular acceleration
\newcommand{\vZero}{\vect{0}}				% vect/matr of zeros
\newcommand{\gravity}{\vect{g}}			% gravity vector
\renewcommand{\skew}[1]{\matr{S}(#1)}				% sckew operator
\newcommand{\eye}[1]{\matr{I}_{#1}}		% identity matrix
\newcommand{\roll}{\phi}		% roll
\newcommand{\pitch}{\theta}		% pitch
\newcommand{\pitchDes}{\bar{\theta}}		% desired pitch
\newcommand{\pitchEq}{\theta^{eq}}		% equilibrium pitch
\newcommand{\yaw}{\psi}		% yaw
\newcommand{\yawDes}{\bar{\psi}}		% desired yaw 
\newcommand{\yawEq}{\psi^{eq}}		% equilibrium yaw
\newcommand{\eulerAngles}{\vect{\eta}}

% World frame
\newcommand{\frameW}{\frame_W}			% world frame
\newcommand{\originW}{\origin_W}		% origin world frame
\newcommand{\xW}{\vX_W}				% x-axis world frame
\newcommand{\yW}{\vY_W}				% y-axis world frame
\newcommand{\zW}{\vZ_W}				% z-axis world frame

% Load
\newcommand{\frameB}{\frame_B}			% load frame
\newcommand{\originB}{\origin_B}			% origin load frame
\newcommand{\xB}{\vX_B}				% x-axis load frame
\newcommand{\yB}{\vY_B}				% y-axis load frame
\newcommand{\zB}{\vZ_B}				% z-axis load frame
\newcommand{\pL}{\pos_L}			% load position
\newcommand{\pLW}{\prescript{W}{}{\pL}} % load position wrt world frame (explicit)
\newcommand{\dpL}{\dpos_L}			% load  velocity position
\newcommand{\ddpL}{\ddpos_L}		% load acceleration position
\newcommand{\rotMatL}{\rotMat_L}% rotation matrix for load orientation
\newcommand{\drotMatL}{\drotMat_L}%derivative of rotation matrix for load orientation

\newcommand{\rotMatLEquilib}{\rotMat^{eq}_{L}}
\newcommand{\rotMatLInc}{\hat{\rotMat}_L}
\newcommand{\rotMatLW}{\prescript{W}{}{\rotMat_L}}	% rotation matrix for load orientation wrt world frame (explicit)
\newcommand{\angVelL}{\angVel_L}		% angular velocity
\newcommand{\angAccL}{\angAcc_L}		% angular acceleration
\newcommand{\massL}{{m_L}}
\newcommand{\massLU}{\hat{{m}}_L}				% load mass
\newcommand{\inertiaL}{\matr{J}_L}	% load inertia
\newcommand{\InertiaL}{\matr{M}_L}	% general load inertia
\newcommand{\coriolisL}{\vect{c}_L}	% coriolis terms of the load
\newcommand{\gravityL}{\vect{g}_L}	% gravity terms of the load	
\newcommand{\graspL}{\matr{G}}		% grasp matrix of the load	
\newcommand{\dampingL}{\matr{B}_L}	% damping coeff
\newcommand{\f}{\vect{f}}% vector of cable forces

%robot

% Cables
\newcommand{\length}[1]{{l}_{0#1}}
\newcommand{\lengthU}[1]{\hat{{l}}_{0#1}}	% nominal length of the cable
\newcommand{\springCoeff}[1]{{k}_{#1}}
\newcommand{\springCoeffU}[1]{\hat{{k}}_{#1}}% spring coefficent
\newcommand{\cableForce}[1]{\vect{f}_{#1}}
\newcommand{\cableForceEquilib}[1]{\vect{f}^{eq}_{#1}}
\newcommand{\cableForceInc}[1]{\hat{\vect{f}}_{#1}}			% cable force
\newcommand{\cableForceU}[1]{\hat{\vect{f}}_{#1}}
\newcommand{\cableAttitudeNorm}[1]{\vect{n}_{#1}}% cable attitude
\newcommand{\cableAttitude}[1]{\vect{l}_{#1}}% cable attitude
\newcommand{\dcableAttitude}[1]{\dot{\vect{l}}_{#1}}	% dot cable attitude

\newcommand{\condZero}{\xi}
\newcommand{\anchorPoint}[1]{B_{#1}}			% attachement point to the load
\newcommand{\anchorPos}[1]{\vect{b}_{#1}}		% attachement position to the load
\newcommand{\anchorLength}[1]{{b}_{#1}}	
\newcommand{\anchorLengthU}[1]{\hat{{b}}_{#1}}		% attachement position to the load
\newcommand{\anchorPosL}[1]{\prescript{L}{}{\vect{b}}_{#1}}		% attachement position to the load wrt P
\newcommand{\anchorPosLU}[1]{\prescript{L}{}{\hat{\vect{b}}}_{#1}}		% attachement position to the load wrt P

\newcommand{\robotPoint}[1]{A_{#1}}				% attachment point to the robot
\newcommand{\robotPos}[1]{\vect{a}_{ #1}}		% attachement position to the robot
\newcommand{\robotPosP}[1]{\prescript{P}{}{\vect{a}}_{#1}}		% attachement position to the load wrt P
\newcommand{\angleCable}[1]{\alpha_{#1}}		% generalized coordinate of the cable
\newcommand{\angleCables}{\vect{\alpha}}		% vector of all the angles of the cables
\newcommand{\tension}[1]{t_{#1}}				% tension
\newcommand{\tensionMax}[1]{\overline{f}_{L#1}}				% tension max
\newcommand{\tensionMin}[1]{\underline{f}_{L#1}}				% tension min
\newcommand{\cableForces}{\cableForce{}}		% cable forces

% Robot
\newcommand{\frameR}[1]{\frame_{R #1}}			% robot frame
\newcommand{\originR}[1]{O_{R #1}}					% origin robot frame
\newcommand{\xR}[1]{\vX_{R #1}}								% x-axis robot frame
\newcommand{\yR}[1]{\vY_{R #1}}								% y-axis robot frame
\newcommand{\zR}[1]{\vZ_{R #1}}								% z-axis robot frame
\newcommand{\pR}[1]{\pos_{R #1}}						% robot position
\newcommand{\dpR}[1]{\dpos_{R #1}}					% robot velocity
\newcommand{\ddpR}[1]{\ddpos_{R #1}}				% robot acceleration
\newcommand{\uR}[1]{\vect{u}_{R #1}}				% acceleration input
\newcommand{\pRW}[1]{\prescript{W}{}{\pR{#1}}} 	% robot position wrt world frame (explicit)
\newcommand{\rotMatR}[1]{\rotMat_{R #1}}			% rotation matrix for robot 
\newcommand{\thrust}[1]{\vect{f}_{R #1}}		% thrust
\newcommand{\maxThrust}[1]{h_{#1}}				% maximum thrust
\newcommand{\thrustIJ}{\thrust{ij}}				% thrust robot ij
\newcommand{\maxThrustIJ}{\maxThrust{ij}}		% maximum thrust robot ij
\newcommand{\gravityIJ}{\vect{g}_{ij}}			% gravity vector robot ij
\newcommand{\massR}[1]{m_{R#1}}					% robot mass
\newcommand{\inertiaR}[1]{\vect{J}_{R#1}}		% robot inertia

% Admittance control
\newcommand{\dampingA}[1]{\matr{B}_{A#1}}		% damping
\newcommand{\springA}[1]{\matr{K}_{A#1}}		% spring
\newcommand{\inertiaA}[1]{\matr{M}_{A#1}}		% damping

\newcommand{\paramA}[1]{\vect{\pi}_{A#1}}			% parameters

% System
\newcommand{\config}{\vect{q}}					% configuration system
\newcommand{\dconfig}{\vect{v}}			% derivative configuration system
\newcommand{\ddconfig}{\dot{\dconfig}}			% second derivative configuration system
\newcommand{\configR}{\config_R}					% configuration robots 
\newcommand{\dconfigR}{\dconfig_R}				% derivative configuration robots 
\newcommand{\ddconfigR}{\ddconfig_R}			% second derivative configuration robots
\newcommand{\configL}{\config_L}					% configuration load 
\newcommand{\dconfigL}{\dconfig_L}				% derivative configuration load 
\newcommand{\ddconfigL}{\ddconfig_L}			% second derivative configuration load
\newcommand{\state}{\vect{x}}						% state
\newcommand{\dynamicModelFun}{m}					% function that describes the dynamic model

% Equilibrium
\newcommand{\configEq}{\bar{\config}}			% configuration Load equilibrium
\newcommand{\configLEq}{\bar{\config}_L}		% configuration Load equilibrium
\newcommand{\configREq}{\bar{\config}_R}		% configuration Load equilibrium 
\newcommand{\paramAEq}[1]{\bar{\vect{\pi}}_{A#1}}			% parameters
\newcommand{\paramAEqInc}[1]{\hat{\bar{\vect{\pi}}}_{A#1}}
\newcommand{\paramAEqU}[1]{\hat{\bar{\vect{\pi}}}_{A#1}}% parameters
\newcommand{\pLEq}{\bar{\pos}_L}

\newcommand{\pLEquilib}{{\pos}^{eq}_L}% load position
\newcommand{\rotMatLEq}{\bar{\rotMat}_L}		% rotation matrix for load orientation
\newcommand{\paramASetEq}{{\Pi}_{A}(\configLEq)}				% parameters set for equilibrium
\newcommand{\paramASetEqPrime}{{\Pi}_{A}(\configLEq')}				% parameters set for equilibrium
\newcommand{\configRSetEq}{\mathcal{P}_{R}}			% robot pos set for equilibrium
\newcommand{\cableForceEq}[1]{\bar{\vect{f}}_{#1}}
\newcommand{\cableForceEqInc}[1]{\hat{\bar{\vect{f}}}_{#1}}				% cable force equilibrium
\newcommand{\cableForcesEq}{\bar{\vect{f}}}	
\newcommand{\cableForcesEqInc}{\hat{\bar{\vect{f}}}}					% cable forces equilibrium
\newcommand{\cableForcesSetEq}{\mathcal{F}_{L}}		% cable force set for equilibrium
\newcommand{\nullGrasp}{\vect{r}_L}				% orthogonal base of the null space of the grasp matrix
\newcommand{\internalTension}{t_L}				% internal tension
\newcommand{\internalForceDir}{\vect{n}_L}	% internal tension
\newcommand{\internalForceDirL}{\prescript{L}{}{\vect{n}}_L}	% internal tension in body frame
\newcommand{\pREq}[1]{\bar{\pos}_{R #1}}	
\newcommand{\pREqInc}[1]{\hat{\bar{\pos}}_{R #1}}		% robot position

\newcommand{\pREquilib}[1]{\pos^{eq}_{R #1}}
\newcommand{\configRparamASetEq}{\mathcal{S}(\configLEq)}
\newcommand{\configRparamAEq}{\bar{\vect{s}}}
\newcommand{\configRparamA}{{\vect{s}}}
\newcommand{\configSetEq}{\mathcal{Q}(\internalTension,\configLEq)}
\newcommand{\configSetEqZero}{\mathcal{Q}(0,\configLEq)}
\newcommand{\configSetEqZeroi}[1]{\mathcal{Q}_{#1}(0,\configLEq)}

\newcommand{\configSetEqPlus}{\mathcal{Q}^+(\internalTension,\configLEq)}
\newcommand{\configSetEqMinus}{\mathcal{Q}^-(\internalTension,\configLEq)}

\newcommand{\configRLSetEq}{\mathcal{R}(\internalTension,\configLEq)}
\newcommand{\configRLSetEqZero}{\mathcal{R}(0,\configLEq)}

\newcommand{\errorpREq}[1]{\vect{e}_{R#1}}	% position error with respect to the equilibrium robot position
\newcommand{\errorPL}{{\vect{e}_{p}}_L}

% Static Analysis
\newcommand{\screwJacobian}{\matr{J}(\genCoord)}		% Screw Jacobian
\newcommand{\vectI}{\vect{v}_i}
\newcommand{\vectII}{{v}_{i,i}}
\newcommand{\minTensionResolving}{\underline{t}_i(\genCoord)}		% minimum tension on the i-th cable resolving W
\newcommand{\maxTensionResolving}{\overline{t}_i(\genCoord)}		% maximum tension on the i-th cable resolving W

% Stability
\newcommand{\stateEq}{\bar{\state}}
\newcommand{\invariantSet}{\Omega_{\alpha}}		% positively invariant set
\newcommand{\invariantSetZero}{\Omega_{0}}		% positively invariant set
\newcommand{\dVZeroSet}{\mathcal{E}}				% state subspace for which the dV is zero
\newcommand{\stateSetEq}{\mathcal{X}(\internalTension,\configLEq)}
\newcommand{\stateSetEqZero}{\mathcal{X}(0,\configLEq)}
\newcommand{\stateSetEqZeroi}[1]{\mathcal{X}_{#1}(0,\configLEq)}
\newcommand{\stateSetEqPlus}{\mathcal{X}^+(\internalTension,\configLEq)}
\newcommand{\stateSetEqMinus}{\mathcal{X}^-(\internalTension,\configLEq)}
\newcommand{\stateSetEqPlusPrime}{\mathcal{X}^+(\internalTension',\configLEq)}
\newcommand{\lyapunovFun}{V(\state)}				% lyapunov function

\newcommand{\Vadd}{V_R(\state)}				% lyapunov function
\newcommand{\dVadd}{\dot{V}_1(\state)}	

\newcommand{\dlyapunovFun}{\dot{V}(\state)}				% lyapunov function
\newcommand{\maxInvariantSet}{\mathcal{M}}

% Passivity
\newcommand{\inp}{\vect{u}}
\newcommand{\out}{\vect{y}}
\newcommand{\outputFunction}{\vect{\Phi}(\out)}

\newcommand{\displacement}{\vect{d}}
\newcommand{\pREqIncRef}[1]{\pREqInc{#1}^r}
\newcommand{\rotMatRRef}[1]{\rotMatR{#1}^r}
\newcommand{\ut}{u(t)}

\newcommand{\ioneton}{i=1,\ldots,n}

\newtheorem{problem}{Problem}
\newtheorem{problem*}{Problem}
\newtheorem{prop}{Proposition}
\newtheorem{lemma}{Lemma}
\newtheorem{example}{Example}

\newtheorem{fact}{Fact}
\newtheorem{result}{Result}
\newtheorem{assumption}{Assumption}

\theoremstyle{remark}
\newtheorem{remark}{Remark}
\maketitle

% \begin{abstract} 
% This work demonstrates that the non-stop flights of three or more carriers are compatible with holding a constant pose of a cable-suspended load. It also presents an algorithm for generating the carriers' coordinated non-stop trajectories.
% The proposed method builds upon two pillars: (1)~the choice of $n$ special linearly independent directions of internal forces within the $3n-6$-dimensional nullspace of the grasp matrix of the load, chosen as the edges of a Hamiltonian cycle on the graph that connects the cable attachment points on the load. Adjacent pairs of directions are used to generate $n$ forces evolving on distinct 2D affine subspaces, despite the attachment points being generically in 3D; (2)~the construction of elliptical trajectories within these subspaces by mapping, through appropriate graph coloring, each edge of the Hamiltonian cycle to a periodic coordinate while ensuring that no adjacent coordinates exhibit simultaneous zero derivatives. Combined with conditions for load statics and attachment point positions, these choices ensure that each of the $n$ force trajectories projects onto the corresponding cable constraint sphere with non-zero tangential velocity, enabling perpetual motion of the carriers while the load is still.
% The theoretical findings are validated through simulations and laboratory experiments with \textcolor{black}{quadrotor} UAVs.
% \end{abstract}

\begin{abstract}
    This work demonstrates that the non-stop flights of three or more carriers are compatible with holding a constant pose of a cable-suspended load. It also presents an algorithm for generating the carriers' coordinated non-stop trajectories.
The proposed method builds upon two pillars: (1) the choice of $n$ special linearly independent directions of internal forces within the $3n-6$-dimensional nullspace of the grasp matrix of the load, chosen as the edges of a Hamiltonian cycle on the graph that connects the cable attachment points on the load. Adjacent pairs of directions are used to generate $n$ forces evolving on distinct 2D affine subspaces, despite the attachment points being generically in 3D; (2) the construction of elliptical trajectories within these subspaces by mapping, through appropriate graph coloring, each edge of the Hamiltonian cycle to a periodic coordinate while ensuring that no adjacent coordinates exhibit simultaneous zero derivatives. Combined with conditions for load statics and attachment point positions, these choices ensure that each of the $n$ force trajectories projects onto the corresponding cable constraint sphere with non-zero tangential velocity, enabling perpetual motion of the carriers while the load is still.
The work provides a scalable constructive design for any $n\!\ge\!3$ with tuning guidelines, quantifies sensitivity and single-carrier failures, and provides a fixed-wing–compatible planner that preserves load statics under speed/bank/flight-path constraints.
The theoretical findings are validated through simulations and laboratory experiments with quadrotor UAVs. 
\end{abstract}

\maketitle

\section{Introduction}\label{sec:intro}
Aerial manipulation has been intensely studied thanks also to the potential impact in many different real-world use cases. Interesting applications are, e.g., parcel delivery and assembly tasks in the construction field, to name a few~\cite{ruggiero2018aerial}. 
Various aerial manipulation concepts have been proposed in the literature ~\cite{ollero2021past}, but  
lightweight manipulation tools such as simple low-cost cables have often been preferred ~\cite{drones8020035}. Not only are they especially suitable due to the flying carriers' payload limitations, but, if attached ideally to the carrier's Center of Mass (CoM), they do not perturb the attitude dynamics of the carrier itself.

When disregarding the full-pose control of the suspended load, a single carrier may be considered. In the literature, we find examples of a slung load suspended below a single multi-rotor~\cite{sreenath2013trajectory, pereira2016slung} or a small helicopter~\cite{bernard2009generic}.

\textit{Cooperative} aerial manipulation allows for going beyond the single-carrier payload and avoids single points of failure. 
 Among the examples of cooperative aerial manipulation with cables, uncrewed helicopters have been used in~\cite{bernard2011autonomous}, while multirotor UAVs (Uncrewed Aerial Vehicles) have been more generally considered~\cite{michael2011cooperative, fink2011planning, li2021cooperative,gassner2017dynamic,wahba2024efficient, goodman2022geometric}.  The two-multirotor scenario has been largely studied, especially for the manipulation of bar-shaped loads~\cite{gabellieri2023equilibria,pereira2018asymmetric,8995928}. 
 However, three is the minimum number of carriers that can control the full pose of a cable-suspended generic rigid body~\cite{jiang2012inverse}; such a 3-carrier system has been studied, e.g., in~\cite{9508879,michael2011cooperative, li2021cooperative,masone2016}.

As it has been shown, e.g., in~\cite{leutenegger2016flying}, fixed-wing UAVs have a longer flight endurance than multirotor UAVs. However, they cannot stop in mid-air. 
 Convertible UAVs, such as tail sitters, have intermediate flight endurance. They have the feature of transitioning between VTOL platforms and fixed-wing ones, so as to exploit the benefit of the latter and multirotors, as well. An example of such a design is in proposed~\cite{leutenegger2016flying}. Note that, when flying horizontally, also these platforms are also constrained to follow non-stop trajectories. 

The compatibility of non-stop flights with cable manipulation tasks has been theorized in \cite{williams2009dynamics}, and \textit{point-mass-load} lifting using fixed-wing aircraft has been shown in \cite{quenneville2023experimental}. \cite{foss2026energy} demonstrates the potential energy efficiency of non-stop trajectories in manipulating a point-mass load even with standard quadrotor UAVs. More recently, the compatibility of non-stop flights with the cooperative manipulation
of a \textit{rigid-body} pose has been explored. These systems might enable energy-efficient aerial manipulation, seamlessly integrating long-distance transport with precise control of the load's pose.

This work focuses on the challenging scenario of maintaining a constant rigid-body-load pose while the carriers move. Ensuring a fixed load pose during transportation is critical in many practical situations, such as at the destination or to enhance obstacle avoidance and safety. An abstract depiction of the future scenario made possible by this research is illustrated in Fig.~\ref{fig:fig1}.

The non-stop flight dynamics of the carriers make it challenging to determine whether they can hold a suspended load in static equilibrium while all loitering with a nonzero forward speed.
In previous  work~\cite{gabellieri2024existence}, we presented preliminary results demonstrating the theoretical feasibility of three UAVs achieving this task, while establishing the impossibility of achieving it with only one or two carriers. However, the proof presented in~\cite{gabellieri2024existence} was specifically tailored to the three-UAV scenario.
Consequently,~\cite{gabellieri2024existence} left unanswered the fundamental question of whether non-stop flights are compatible with cable manipulation in the general case of multiple UAVs. %Moreover, the preliminary work in~\cite{gabellieri2024existence} lacked a flexible algorithm for constructing feasible trajectories, and neither statistical simulation studies nor real-world feasibility experiments were conducted.

This work constitutes a substantial leap forward beyond the minimal findings of~\cite{gabellieri2024existence} by introducing the first method for generating non-stop flight trajectories for any $n \geq 3$ flying carriers. This method enables the carriers to maintain a cable-suspended load in static equilibrium while simultaneously executing their respective non-stop flight paths. This work also provides the first proof of the existence of such trajectories for any $n \geq 3$, as extending the results of~\cite{gabellieri2024existence} to the case of $n>3$ is non-trivial.
The introduction of a novel methodology, based on the selection of Hamiltonian cycles in the graph associated with the manipulation system, was crucial. This work also reformulates the theoretical framework using graph theory, resulting in a more general and rigorous approach. Furthermore, this work presents the first complete algorithm for generating compatible trajectories, along with guidelines for tuning the trajectory's free parameters to optimize performance based on the unique capabilities of each UAV.

\textcolor{black}{We analyze for the first time how parameter tuning (frequency, amplitude, edge coloring) and Hamiltonian-cycle choice affect performance, including a uniformly attached planar-load case where orthogonal internal-force directions yield constant-speed profiles, whereas non-orthogonality increases speed variability and reduces margins. We demonstrate a translating-load scenario (piecewise-static/constant-attitude) and add a fixed-wing–compatible optimization layer that enforces bounds on speed, bank, and flight-path angles while preserving the static load.} 
This work also provides the first comprehensive validation of the theoretical findings through extensive numerical simulations assessing the influence of various parameters, investigating the impact of failures, and offering valuable insights into the system's behavior. 

\textcolor{black}{Finally, experiments with quadrotors validate the cable abstraction: with our cyclic nullspace trajectories, the load remains essentially static, while without our planner, it becomes clearly non-static with linear/angular velocities which are  orders of magnitude larger.}

The work is organized as follows. Sec.~\ref{sec:model} formalizes the notation and problem; Sec.~\ref{sec:method} derives a general method for non-stop trajectories with $n\!\ge\!3$ carriers. Sec.~\ref{sec:algo} presents the algorithm, \textcolor{black}{parameter-tuning guidance (frequency, amplitude, edge coloring) and Hamiltonian-cycle selection, including the uniformly attached planar-load case and its constant-speed condition}. Sec.~\ref{sec:sim} reports numerical validation \textcolor{black}{covering sensitivity to uncertainties/tracking, cycle choice, and single-carrier failures, a translating-load scenario (piecewise-static/constant-attitude), and a fixed-wing–compatible optimization/planning layer}. \textcolor{black}{Sec.~\ref{sec:exp} provides laboratory evidence that the cable abstraction holds in practice: our cyclic nullspace planner keeps the load essentially static, whereas without it the load exhibits orders-of-magnitude larger velocities.} Sec.~\ref{sec:conclusion} concludes with future directions.

%%%%%%

\section{Problem Statement}\label{sec:model}
\subsection{Kinematics}
\begin{figure}[t]
    \centering
    \includegraphics[trim=0 0 0 0.1cm, clip, width=0.99\linewidth]{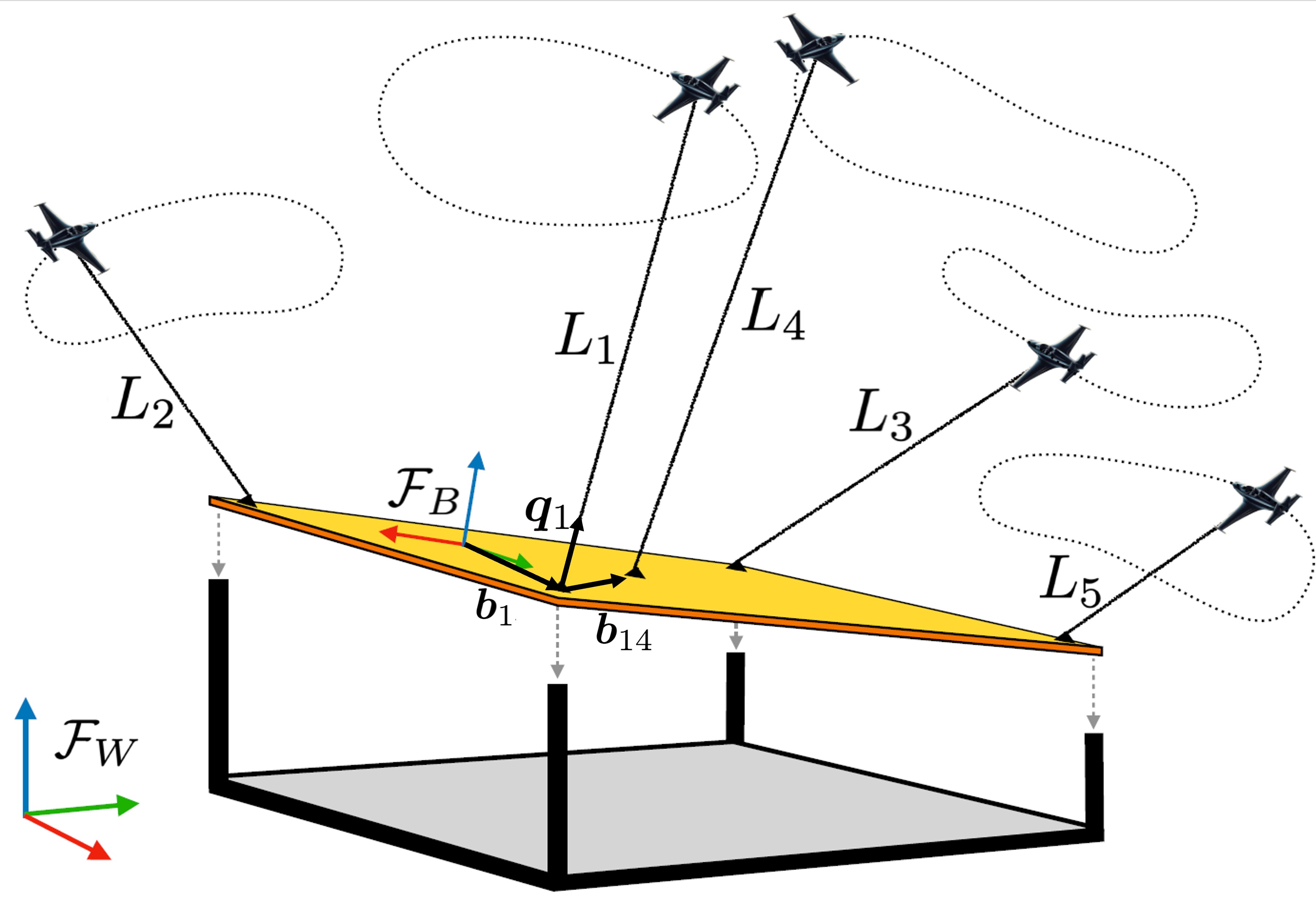}
    \caption{A team of $n\geq3$ non-stop flying carriers maintains a cable-suspended load in a fixed position to enable a potential construction scenario, while continuing on their non-zero-speed flight paths.}
    \label{fig:fig1}
\end{figure}
Denote with ${\frameW=\{\originW,\xW, \yW, \zW\},}$ the inertial world-fixed frame where $\originW$ is the origin, and $\xW, \yW, \zW$ are the x-, y-, and z- axes, respectively, see Fig.~\ref{fig:fig1}. 
Denote with ${\frameB=\{\originB, \xB, \yB, \zB\}}$ a frame attached to the manipulated \emph{load}, modeled as a rigid body, where $\originB$ is the  Center of Mass (CoM) of the load. 
The position of $\originB$ in $\frameW$ is denoted with ${\pL\in \nR{3}}$ and $\rotMat_L\in SO(3)$ expresses the orientation of $\frameB$ w.r.t. $\frameW$. The angular velocity of $\frameB$ w.r.t. $\frameW$, expressed in $\frameB$ is indicated as $^B\angVel_L$\footnote{The upper left superscript expresses the reference frame, and if omitted it is considered equal to $W$ unless differently specified.}.

The load is manipulated by the flying carriers through $n$ cables, each one of constant length $L_i>0$ 
%$L_1>0$$L_2>0$$L_3>0$ %$L_4>0$$L_5>0$$L_3>0$ 
and attached to a point $B_i$ on the load and a point $C_i$ to the carrier, with $\ioneton$. Denote with $^B\vect{b}_i\in\nR{3}$  the constant position of $B_i$ expressed in $\frameB$ and with $\pR{i}\in\nR{3}$ the position of $C_i$ in $\frameW$. 
Let the unit vector $\vect{q}_i\in S^2=\{\vect{x}\in\nR{3}\mid  \|\vect{x}\|=1\}$ pointing from $B_i$ to $C_i$
represent the direction of the $i$-th cable in $\frameW$.  
The kinematics of the points $C_i$'s, with $\ioneton$., is given by
\begin{align}
\pR{i}&=\pL+\rotMatL{^B\vect{b}}_i+L_i\vect{q}_i\label{eq:kinematics}\\
\dpR{i}&=\dpL+\drotMatL{^B\vect{b}}_i+L_i\dot{\vect{q}}_i.\label{eq:diff_kin}
\end{align}
\subsection{Dynamics}
The cables' mass and inertia are considered negligible compared to the carriers' and load's. This is a reasonable assumption which has been validated in practice~\cite{sanalitro2020full, gabellieri2023equilibria}.

Denote with $T_i\in \nR{}$ the tension of the $i$-th cable, where $\ioneton$, and with $\vect{f}_i$ the coordinates in $\frameW$ of the force that the $i$-th cable exerts on the load at~$B_i$. We have the following relation between the cable force, its tension, and direction: 
\begin{equation}
    \vect{f}_i=T_i\vect{q}_i.\label{eq:fi}
\end{equation} 
The dynamics of the load is described by
\begin{align}
    \massL\ddpL &=-\massL g \vE{3} + \sum_{i=1}^n{\vect{f}_i}\label{eq:load_dyn_tr}\\
    \inertiaL{^B\angAcc}_L &= - S(\inertiaL^B\angVel_L) {^B\angVel}_{L} + \sum_{i=1}^n{S(^B\vect{b}_i) \rotMatL^ \top \vect{f}_i}\label{eq:load_dyn_rot}\\
    \drotMatL &= S(^B\angVel_L)\rotMatL
    %\massR{i} \ddpR{i}=-b_{Ri}\dpR{i}+k_{Ri}(\pR{i}^d-\pR{i}) +(\vect{f}^d-\vect{f}_i) \label{eq:rob_dyn_tr}
\end{align}
where $\massL, \inertiaL$ are the mass and the rotational inertia of the load, $S(\star)$ indicates the skew operator that implements the cross product between two vectors, and $\vE{i}$ is the $i$-th column of the $3\times3$ identity matrix $\eye{3}$. 

To compactly rewrite the load dynamics, we use the matrix $\matr{G}\in\nR{6\times3n}$ that maps the cable forces to the wrench applied at the load's center of mass, referred to as the grasp matrix in the literature~\cite{yoshikawa1999virtual, tognon2018aerial}  
\begin{equation}
    \matr{G}=\begin{bmatrix}
        \eye{3} & \eye{3} & \ldots & \ \eye{3}\\ S(^B\vect{b}_1)\rotMatL^ \top& S(^B\vect{b}_2)\rotMatL^ \top&\ldots&S(^B\vect{b}_n)\rotMatL^ \top,\label{eq:grasp}
    \end{bmatrix}.
\end{equation}  

The load dynamics~\eqref{eq:load_dyn_tr}--\eqref{eq:load_dyn_rot} can be compactly rewritten as
\begin{align}
\vect{w}:=\begin{bmatrix}
         \massL(\ddpL+ g \vE{3})  %+\massL\angVel_L\times\dpL 
        \\
        \inertiaL{^B\angAcc}_L+ \inertiaL{^B\angVel}_L\times {^B\angVel}_{L}
            \end{bmatrix}
= \matr{G}\f
\label{eq:compact-dynamics}
\end{align}
where $\f=[\cableForce{1}^\top\ \cableForce{2}^\top\ \cdots\ \cableForce{n}^\top]^\top\in\nR{3n}$ stacks all forces that the cables apply to the load and $\vect{w}\in \nR{6}$ represents the coordinates of the resulting wrench applied by the all cables on the load.

\subsection{Statics}

Consider a static equilibrium of the suspended load characterized by constant position $\pLEq$ and orientation $\rotMatLEq$, i.e.,  
\begin{align}
\begin{cases} 
\pL(t)= \pLEq\\ 
\rotMatL(t)= \rotMatLEq
\end{cases}
\quad \forall t\in  [0,\infty).
\label{eq:equilib}
\end{align} 
Imposing~\eqref{eq:equilib} on~\eqref{eq:compact-dynamics} and~\eqref{eq:kinematics}--\eqref{eq:diff_kin} and highlighting the time-varying quantities we obtain
\begin{align}
\vect{w}_0 &:=\begin{bmatrix}
         \massL g \vE{3}  %+\massL\angVel_L\times\dpL 
        \\
        \vect{0}
    \end{bmatrix}
= \matr{G}(\rotMatLEq)\f(t)&
\label{eq:compact-statics}\\
\pR{i}(t)&=\pLEq+\rotMatLEq{^B\vect{b}}_i+L_i\vect{q}_i(t)&\label{eq:kinematics-const}  \forall i=1,\ldots,n\\
\dpR{i}(t) &= L_i\dot{\vect{q}}_i(t) &\forall i=1,\ldots,n.\label{eq:diff_kin-const}
\end{align}

If $\vect{f}_i$ is of class $C^1$ (i.e., it is at least continuously differentiable)
then we can differentiate~\eqref{eq:fi} with respect to time obtaining
\begin{equation}
\dot{\vect{f}}_i(t)=\dot{T}_i(t)\vect{q}_i(t)+T_i(t)\dot{\vect{q}}_i(t).\label{eq:fi_dot}
\end{equation} 
If  $T_i(t) = \|\vect{f}_i(t)\| \neq 0$, $\forall t \in [0, \infty)$, then $\vect{q}_i(t)$ and $\dot{\vect{q}}_i(t)$ can be derived from $\vect{f}_i(t)$, using~\eqref{eq:fi} and~\eqref{eq:fi_dot} respectively. Substituting the obtained $\vect{q}_i(t)$ and $\dot{\vect{q}}_i(t)$ into~\eqref{eq:kinematics-const}--\eqref{eq:diff_kin-const}, one can compute $\pR{i}(t)$ and $\dpR{i}(t)$. This demonstrates the intuitive idea that in static equilibrium—where the load's position and orientation remain constant over time—knowledge of the force trajectories is sufficient to determine how $\pR{i}(t)$ and $\dpR{i}(t)$ evolve over time. Consequently, it is reasonable to formulate the following problem.

\begin{problem*}[Coordinated Trajectories for Non-stop Flying
Carriers Holding a Cable-Suspended Load]\label{prob:nonstop}
Assume that the load is held at a constant position and orientation defined by~\eqref{eq:equilib}. Find, if it exists, a coordinated trajectory of the forces 
\begin{align}
\vect{f}_i: [0,\infty)\to \nR{3} \quad \forall i=1,\ldots,n
\end{align}
which satisfies the following conditions, for $\ioneton$
\begin{itemize}
    \item they are of class $C^1$ (at least continuously differentiable),
    \item they respect the equilibrium constraint~\eqref{eq:compact-statics} for the constant orientation, i.e., 
        \begin{align}
            \matr{G}(\rotMatLEq)\vect{f}(t) = \vect{w}_0 \quad \forall t\in  [0,\infty),
            \label{eq:compact-statics_time}
        \end{align}
        where $\vect{f}(t) = [\vect{f}_1^\top(t) \cdots \vect{f}_n^\top(t)]^\top$,
    \item they don't vanish and do not grow unbounded, i.e., $\exists \,\underline{T}>0$ and $\exists\,\overline{T}<\infty$ such that the cable tension  $T_i(t)=\| \vect{f}_i(t) \|$ satisfies 
    \begin{align}\label{eq:problem3}
        \underline{T} < T_i(t) < \overline{T}  \quad   \forall t\in  [0,\infty),
    \end{align}
\end{itemize}
and, finally, such that the carriers are never stopping and the norm of their velocity is lower bounded by a non-zero speed, i.e., for which $\exists \underline{v}>0$ such that
\begin{align}\label{eq:problem4}
\|\dpR{i}(t)\| \geq \underline{v}, \quad   \forall t\in  [0,\infty).
\end{align}
\end{problem*} 

%%%%%%%%%%%%%

\section{Proposed Methodology}\label{sec:method}

We know from~\cite{gabellieri2024existence} that the problem is not solvable for $n< 3$, and that there is a way to solve it in the particular case of $n=3$. Therefore, in the following, we aim at solving the general problem for $n\geq 3$.
Any coordinated trajectory of the  collective forces that maintains the  equilibrium is given by the following generic solution of~\eqref{eq:compact-statics}
\begin{align}
\f(t)=\matr{G}^\dagger\vect{w}_{0}+\matr{N}\vect{\lambda}(t)
    \label{eq:f_t_form}
\end{align}   
where $\matr{G}^\dagger\in\nR{3n\times 6}$ is the Moore-Penrose inverse of $\matr{G}(\rotMatLEq)$, $\matr{N}$ is \emph{any} constant matrix with $m$ columns which are linearly independent and  belong to $\operatorname{null}(\matr{G})$, and $\vect{\lambda}:[0,\infty)^m \to \nR{m}$ is an array of time-varying coefficients $\vect{\lambda}(t)=[\lambda_1(t)\cdots \lambda_m(t)]^\top$ for the linear combination of the columns of $\matr{N}$ expressed by $\matr{N}\vect{\lambda}(t)$.

{ \color{black}
Let $\mathcal{K}_n=(\mathcal{V},\mathcal{V}^2)$ denote the complete directed graph with $n$ vertices, where $\mathcal{V}=\{1,\ldots,n\}$. Each vertex $i \in \mathcal{V}$ is associated with the $i$-th carrier. In this work, we propose a design for the nullspace matrix $\matr{N}$ consisting of $n$ columns, where each column corresponds to an edge of $\mathcal{K}_n$ (i.e., a specific pair of carriers).

The design procedure proceeds by first selecting a Hamiltonian cycle $H$ of $\mathcal{K}_n$, and then generating the columns of $\matr{N}$ based on the edges of $H$. This means that this method allows $(n-1)!/2$ possible choices of $\matr{N}$, one for each possible choice of a Hamiltonian cycle $H$ of $\mathcal{K}_n$.

The precise definition of $\matr{N}$ given a selected $H$ is detailed in Sec.~\ref{sec:definition_N}. However, to facilitate this definition, we first introduce the necessary preliminary notation regarding Hamiltonian cycles and the indexing of their vertices and edges.}

\subsection{Preliminaries on Hamiltonian Cycles}

Consider any directed Hamiltonian cycle $H$ of \textcolor{black}{$\mathcal{K}_n$}.
The cycle $H$ is composed of a loop of $n$ edges connecting one after the other all the $n$ vertices and coming back to the starting vertex. Denote with $\matr{H}\in\{0,1,-1\}^{n\times n}$ the incidence matrix of $H$.  Denote the  list of consecutive edges of $H$ with $e_1,\ldots,e_n,e_{n+1}$, where $e_{n+1}:=e_1$ is introduced for convenience. Denote with $e_i^1$ and $e_i^2$ the index of the first and the second vertex of $e_i$, respectively, i.e., if $e_i=(j,k)$ then $e_i^1=j$ and $e_i^2=k$. 
Denote with $h_i$ the index of the edge of $H$ incoming to the vertex $i$, and consequently, $h_i+1$ represents the index of the edge of $H$ outgoing from the vertex $i$. This is due to the way the edges of $H$ are indexed, see before.  For example if $e_l=(*,i)$ then $h_i=l$ and $e_{h_i+1}=(i,*)$, where $*$ stands for an unspecified vertex number. 

\begin{example}
For example, if $n=4$ and  $H$ is the cycle that connects cyclically the vertices with indexes $1\to 3\to4\to2\to1$ in such order. We have $e_1=(1,3)$, $e_2=(3,4)$, $e_3=(4,2)$, $e_4=(2,1)$, $e_5=e_1=(1,3)$. Additionally, 
for example, $e_1^1=1$, $e_2^1=3$,  $e_3^1=4$,  $e_4^2=1$, and so on; and $h_1 = 4$, $h_1+1 = 1$, $h_3+1 = 2$, $h_4 = 2$, $h_2+1 = 4$, and so on. The incidence matrix in this case is $
\matr{H}=
\left[\begin{smallmatrix}
    1& 0    & 0     & -1\\
    0    & 0    & -1 & 1\\
    -1& 1 & 0     & 0\\
    0    & -1& 1  & 0
\end{smallmatrix}\right],
$
where the $i$-th column corresponds to the edge $e_i$ and the $j$-th row corresponds to the vertex $j$.
\label{example:Hamilton}
\end{example}

\begin{remark}\label{rem:structure_H}
    Notice the  structure of  the square matrix $\matr{H}$:
\begin{itemize}
    \item the $i$-th column (which represents the edge $e_i$) has a $1$ corresponding to the $e_i^1$-th row , a $-1$ corresponding to the $e_i^2$-th row, and the rest $n-2$ entries are all zero's;  
    \item the $i$-th row (which represents the vertex $i$) has a $-1$ corresponding to the $h_i$-th column, a $1$ corresponding to the ${h_i+1}$-th column, and the rest $n-2$ entries are all zero's. 
\end{itemize}
\end{remark}

% \medskip
\subsection{Decision of the null-space matrix \textbf{\textit{N}}}
\label{sec:definition_N}

Define the vector $\bm{b}_{ij}\in\mathbb{R}^3$ as follows:
$$\bm{b}_{ij}:=\rotMatLEq\frac{{^B\vect{b}}_j-{^B\vect{b}}_i}{||^B\vect{b}_j-{^B\vect{b}}_i||_2}$$ 
for any $i,j\in\{1,\ldots,n\}$. \textcolor{black}{In other terms, $\bm{b}_{ij}$ is a vector applied in the cable attachment point $B_i$ and ending in $B_j$.} Notice that $\bm{b}_{ij}=-\bm{b}_{ji}$. 

{\color{black}
In order to systematically design the nullspace matrix $\matr{N}$, we propose a graph-based construction. First, we arbitrarily select a Hamiltonian cycle $H$ from the $(n-1)!/2$ possible cycles available in $\mathcal{K}_n$. Based on this selection, we \textit{define} our proposed design for the $3n \times n$ matrix $\matr{N}$ as follows:
\begin{align}
\matr{N}(H) = (\matr{H}\otimes \matr{I}_3) 
\cdot \operatorname{diag}(\bm{b}_{e_1^1e_1^2},\ldots,\bm{b}_{e_n^1e_n^2}),
\label{eq:choice_of_N}
\end{align}
    where $\otimes$ denotes the Kronecker product and $\operatorname{diag}$ denotes a block diagonal matrix.\footnote{For the sake of completeness we note that the matrix $\matr{N}(H)$ is the transposed of the Rigidity matrix associated to the framework $\left(H, (\pR{1},\ldots, \pR{n})\right)$, see e.g.,~\cite{Zelazo2015} for a definition.} It will be shown in the subsequent analysis that this specific construction ensures that all columns of $\matr{N}$ lie strictly within the nullspace of the grasp matrix $\matr{G}$, thereby effectively generating internal forces only.
}

\begin{example}
With reference to the Example~\ref{example:Hamilton} we have

$$
\operatorname{diag}(\bm{b}_{e_1^1e_1^2},\ldots,\bm{b}_{e_n^1e_n^2})
=
\left[\begin{smallmatrix}
    \bm{b}_{13} & \vect{0}    & \vect{0}     & \vect{0} \\
    \vect{0}    & \bm{b}_{34}    & \vect{0}  & \vect{0} \\
    \vect{0} & \vect{0}  & \bm{b}_{42}   & \vect{0}\\
    \vect{0}    & \vect{0} & \vect{0}   & \bm{b}_{21}
\end{smallmatrix}\right]\in\mathbb{R}^{12 \times 4}.
$$

$$
 \matr{N}(H)=
\left[\begin{smallmatrix}
    \bm{b}_{13} & \vect{0}    & \vect{0}     & -\bm{b}_{21}\\
    \vect{0}    & \vect{0}    & -\bm{b}_{42} & \bm{b}_{21}\\
    -\bm{b}_{13}& \bm{b}_{34} & \vect{0}     & \vect{0}\\
    \vect{0}    & -\bm{b}_{34}& \bm{b}_{42}  & \vect{0}
\end{smallmatrix}\right]\in\mathbb{R}^{12 \times 4}.
$$
\end{example}

For simplicity, in the following, we omit the dependency of $\matr{N}$ on the particular $H$  when the choice of $H$ is either implicit or non-discriminatory.

We can easily check that the $i$-th column of the chosen $\matr{N}$, denoted with $\matr{N}_{(:,i)}$, belongs to $\operatorname{null}(\matr{G})$, in fact: 
$$\matr{G}\matr{N}_{(:,i)} = 
\begin{bmatrix}
   \bm{b}_{e_i^1 e_i^2} - \bm{b}_{e_i^1e_i^2} \\
   S(^B\vect{b}_{e_i^1})\rotMatL^\top  \bm{b}_{e_i^1 e_i^2} 
   -
   S(^B\vect{b}_{e_i^2})\rotMatL^\top  \bm{b}_{e_i^1e_i^2} 
\end{bmatrix}
=
$$
$$
\begin{bmatrix}
   \vect{0} \\
   S(^B\vect{b}_{e_i^1})({^B\vect{b}}_{e_i^2} -{^B\vect{b}}_{e_i^1})  
   -
   S(^B\vect{b}_{e_i^2})({^B\vect{b}}_{e_i^2} - {^B\vect{b}}_{e_i^1})
\end{bmatrix}=
$$
$$
\begin{bmatrix}
   \vect{0} \\
   S(^B\vect{b}_{e_i^1}){^B\vect{b}}_{e_i^2}
   +
   S(^B\vect{b}_{e_i^2}){^B\vect{b}}_{e_i^1}
\end{bmatrix}
=
\begin{bmatrix}
   \vect{0} \\
   \vect{0} 
\end{bmatrix},
$$
where we exploited the fact that $S(\vect{x})\vect{x}=\vect{0}$ and $S(\vect{x_1})\vect{x_2}=-S(\vect{x_2})\vect{x_1}$ for any $\vect{x},\vect{x}_1,\vect{x}_2\in\mathbb{R}^3$.

\textcolor{black}{Note that the Hamiltonian cycle is a tool that allows selecting each cable (so that no cable is left with no internal forces) and ensuring that its internal force is generated by exactly two independent directions. Note also that the directions into which the internal force of each cable is decomposed depend on the other cables' and cannot be chosen at will but must belong to the nullspace of $\bm{G}$. For any number of contact points on a rigid body, admissible directions are the ones connecting the contact points themselves \cite{yoshikawa1999virtual}.}
% \medskip
\subsection{Force of the $i$-th cable for the chosen \textbf{\textit{N}}}

From~\eqref{eq:f_t_form}, the force applied by the 
$i$-th cable is
$$ \f_i(t)=\vect{f}_{0i}+\matr{N}_i\vect{\lambda}(t)$$
where 
$\vect{f}_{0i}\in\mathbb{R}^3$ is the vector obtained by the three components of $\matr{G}^\dagger\vect{w}_{0}$ at the positions $3(i-1)+\{1,2,3\}$,
% with
% $\vect{w}_0=\left[\begin{smallmatrix}
%          \massL g \vE{3}  %+\massL\angVel_L\times\dpL 
%         \\
%         \vect{0}
% \end{smallmatrix}\right]$, 
and $\matr{N}_i\in \mathbb{R}^{3\times n}$ is made by the $3$ rows of  $\matr{N}$ located at the same positions, which induces the following block partition $
\matr{N}=
\left[\begin{smallmatrix}
    \matr{N}_1^\top & \cdots &
    \matr{N}_n^\top
\end{smallmatrix}\right]^\top.
$
% where the block $\matr{N}_i\in \mathbb{R}^{3\times n}$ is made by the $3$ rows of  $\matr{N}$ located at the same positions $3(i-1)+\{1,2,3\}$.

Using the definition of $h_i$ and ${h_i+1}$ and remembering what said in Remark~\ref{rem:structure_H} regarding the rows of $\matr{H}$, we have that
$$
\matr{N}_i \vect{\lambda}(t) =
-\lambda_{h_i}(t)\;\bm{b}_{e_{(h_i)}^1e_{(h_i)}^2} + 
\lambda_{{h_i+1}}(t)\;\bm{b}_{e_{({h_i+1})}^1e_{({h_i+1})}^2}.  
$$
For the sake of brevity, we define
$\vect{\delta}_i:=-\bm{b}_{e_{h_i}^1e_{h_i}^2}$,
$\bar{\vect{\delta}}_i:=\bm{b}_{e_{{h_i+1}}^1e_{{h_i+1}}^2}$, $\mu_i(t):=\lambda_{h_i}(t)$, and  $\bar{\mu}_i(t):=\lambda_{{h_i+1}}(t)$.
Thus we can rewrite the previous equation as 
\begin{align}
\matr{N}_i \vect{\lambda}(t) =
\mu_i(t)\;\vect{\delta}_i + 
\bar{\mu}_i(t)\;\bar{\vect{\delta}}_i. 
\label{eq:time_var_force_Ni}
\end{align}

% \begin{assumption}
% There are no three attachment points aligned. 
% \label{assumpt:not_aligned}
% \end{assumption}

\begin{assumption}
For the selected Hamiltonian cycle, we have that for any $i=1,\ldots,n$, the three attachment points  $B_i$, $B_{e_{h_i}^1}$ and $B_{e_{h_i+1}^2}$ are not aligned, or equivalently, the vectors 
${^B\vect{b}}_i - {^B\vect{b}}_{e_{h_i}^1}$ and 
${^B\vect{b}}_{e_{h_i+1}^2}-{^B\vect{b}}_i$ are linearly independent.
\label{assumpt:not_aligned}
\end{assumption}

Assumption~\ref{assumpt:not_aligned} ensures that for any $i=1,\ldots,n$ the vectors  $\vect{\delta}_i$ and $\bar{\vect{\delta}}_i$ are not parallel and therefore 
\begin{itemize}
    \item $\operatorname{span}\{\vect{\delta}_i,\bar{\vect{\delta}}_i\}$ is a 2D subspace of $\mathbb{R}^3$, and
    \item $\operatorname{rank}(\matr{N})=n$, i.e., all the columns of $\matr{N}$ are linearly independent.
\end{itemize}
To assess the truth of the second statement, consider that for all $i = 1, \dots, n$, we have $\matr{N}_i \vect{\lambda} = \vect{0}$ if and only if both $\mu_i = 0$ and $\bar{\mu}_i = 0$, see~\eqref{eq:time_var_force_Ni}. Therefore, the only $\vect{\lambda}\in\mathbb{R}^n$ which makes $\matr{N} \vect{\lambda}=\vect{0}$ is $\vect{\lambda}=\vect{0}$.

We can then recap the main results obtained so far in the following statement.
\begin{result}
After arbitrarily selecting one of the Hamiltonian cycles of the complete graph with $n$ vertices, construct a matrix $\matr{N}(H)\in\mathbb{R}^{3n\times n}$ as described in~\eqref{eq:choice_of_N}. If Assumption~\ref{assumpt:not_aligned} is valid, then:
\begin{itemize}
    \item  the $n$ columns of $\matr{N}$ belong to the nullspace of $\matr{G}$ and are linearly independent.

    \item The force applied by the $i$-th carrier has the following affine structure: \begin{align}\label{eq:forces_mu}
    \f_i(t) = \vect{f}_{0i} + \underbrace{\mu_i(t)\;\vect{\delta}_i + \bar{\mu}_i(t)\;\bar{\vect{\delta}}_i}_{=:\tilde{\vect{f}}_i(t) \in \operatorname{span}\{\vect{\delta}_i, \bar{\vect{\delta}}i\}}. 
    \end{align} 
    This expression shows that at any time $t$, $\f_i(t)$ lies in the affine 2D subspace of $\mathbb{R}^3$ that passes through $\vect{f}_{0i}$ and is parallel to $\operatorname{span}\{\vect{\delta}_i, \bar{\vect{\delta}}_i\}$. The quantities $\mu_i(t)$ and $\bar{\mu}_i(t)$ represent the coordinates of the projection of $\f_i(t)$ along $-\vect{f}_{0i}$ onto  this subspace w.r.t. $\vect{\delta}_i$ and $\bar{\vect{\delta}}_i$, respectively (see Fig.~\ref{fig:geom_deltas}).

% \item The time derivative of the force is 
% \begin{align}
% \dot\f_i(t)=
% \dot\mu_i(t)\;\vect{\delta}_i + 
% \dot{\bar{\mu}}_i(t)\;\bar{\vect{\delta}}_i,
% \end{align}
% i.e., $\dot\f_i(t)$ belongs  to the 2D subspace of $\mathbb{R}^3$ defined by $\operatorname{span}\{\vect{\delta}_i,\bar{\vect{\delta}}_i\}$ and $\dot \mu_i(t)$ and $\dot{\bar{\mu}}_i(t)$ can be seen at its coordinates on it along $\vect{\delta}_i$ and $\bar{\vect{\delta}}_i\}$, respectively.
    \item The time derivative of the force is given by 
    \begin{align} \label{eq:fidot_mudot}
    \dot{\f}_i(t) = \dot{\mu}_i(t) \;\vect{\delta}_i + \dot{\bar{\mu}}_i(t) \;\bar{\vect{\delta}}_i, 
    \end{align} 
    which means that $\dot{\f}_i(t)$ lies in the 2D subspace of $\mathbb{R}^3$ spanned by ${\vect{\delta}_i, \bar{\vect{\delta}}_i}$. The quantities $\dot{\mu}_i(t)$ and $\dot{\bar{\mu}}_i(t)$ represent the coordinates of $\dot{\f}_i(t)$ in this subspace along $\vect{\delta}_i$ and $\bar{\vect{\delta}}_i$, respectively.
    \end{itemize}
\end{result}

\begin{figure}[t]
\centering
\includegraphics[width=0.7\columnwidth]{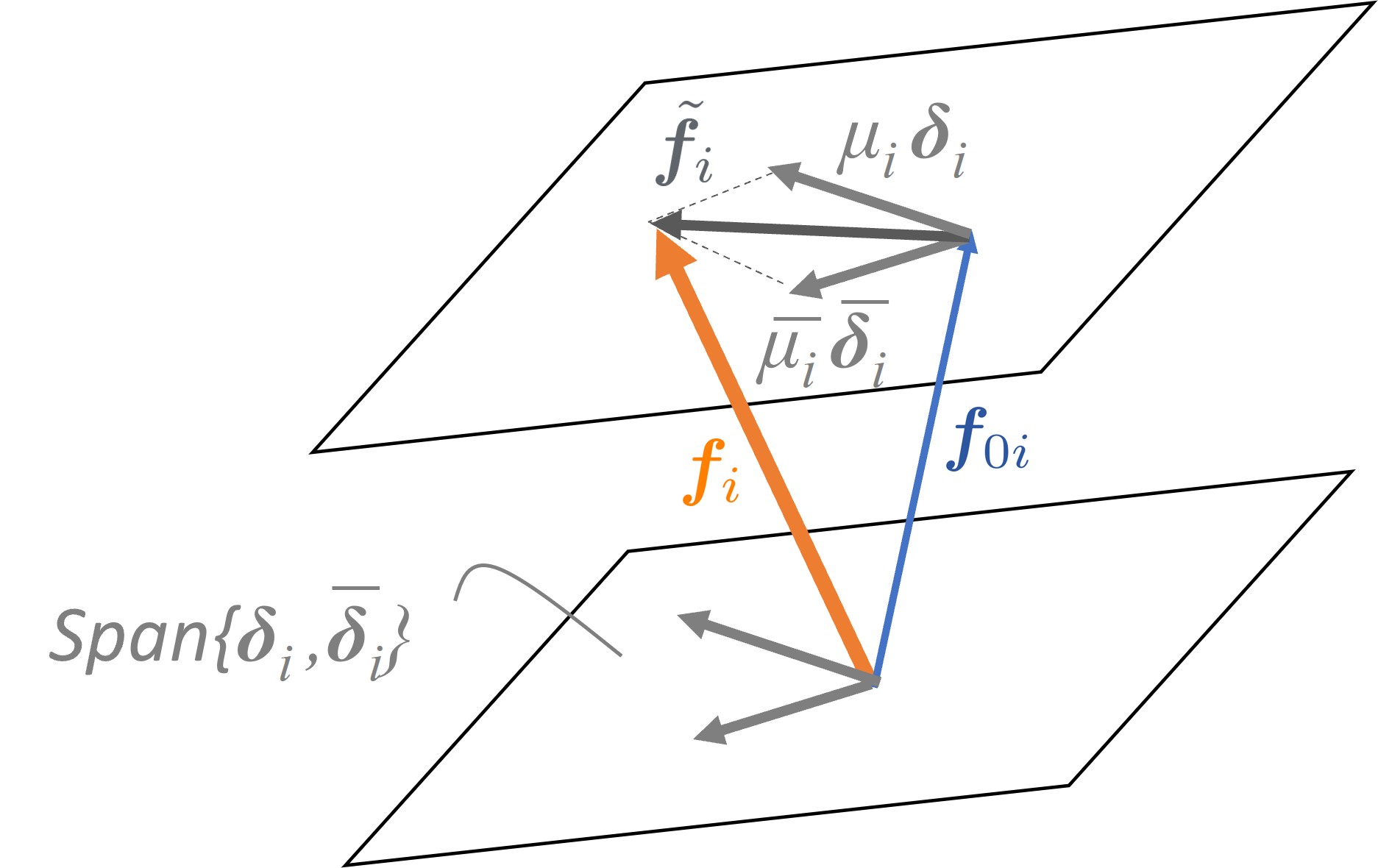}\caption{Geometric visualization of the forces in the $i$-th cable.}\label{fig:geom_deltas}
\end{figure}

\subsection{Relation between force derivative and  carrier velocity}

So far, we have seen how the particular choice of $\matr{N}$ simplifies the evolution of the force applied by the $i$-th cable and its time derivative, by restricting the latter to evolve within a 2D subspace, and the former to evolve within a parallel 2D affine subspace passing through $\vect{f}_{0i}$.
\begin{remark}
Notice the fact that with this choice, the $n$ forces belong to $n$ different 2D affine subspaces despite the fact that the attachment points are, in general, distributed in 3D.     
\end{remark}
In this section, we will explore how this result is instrumental in drawing important consequences for relating the time derivative of the force to the velocity of the $i$-th carrier.

Starting from and further analyzing~\eqref{eq:fi_dot} the time derivative of the $i$-cable force is given by
$$
\dot{\vect{f}}_i=\dot{T}_i\vect{q}_i+T_i\dot{\vect{q}}_i = 
\dot{\vect{f}}_i^{\parallel}+\dot{\vect{f}}_i^{\perp}
$$
where $\dot{\vect{f}}_i^{\parallel}=\dot{T}_i\vect{q}_i$ represents the change along the direction parallel to the one of the force, i.e., the change of the force intensity, and $\dot{\vect{f}}_i^{\perp}=T_i\dot{\vect{q}}_i$ represents the change orthogonal to the force, i.e., the change of the force direction. 

% Due to the static load assumption the $i$-th carrier velocity is
% $$
% \dpR{i} =\dot{\vect{q}}_iL_i.
% $$
Recalling~\eqref{eq:diff_kin-const}, we have in general the following
\begin{result}\label{result_2}
    With the assumption of static load, and as long as the tension on the $i$-th cable $T_i$ is positive and bounded,
    \begin{itemize}
        \item the velocity of the $i$-th carrier is collinear and proportional to $\dot{\vect{f}}_i^{\perp}$ through the following relation
        \begin{equation}
        \dpR{i}(t) = \tfrac{L_i}{T_i(t)}\dot{\vect{f}}_i^{\perp}(t).\label{eq:dpr_df}
      \end{equation}
    \item The velocity of the $i$-th carrier $\dpR{i}(t)$ is nonzero as long as the direction of the force changes, i.e., as long as $\dot{\vect{f}}_i^{\perp}(t)$ is nonzero.
    % \item In particular the following bounds hold for any $\underline{v}>0$
    % $$
    % \|\dot{\vect{f}}_i^{\perp}(t)\| \geq   \tfrac{T_i(t)}{L_i}\underline{v}
    % \;\Rightarrow \;
    % \|\dpR{i}(t)\|\geq\underline{v}. 
    % $$
    \end{itemize}
\label{res:velocity_force_derivative}
\end{result}

We can now state the main result of this section, which is a consequence of our particular choice of $\matr{N}$ as an $\matr{N}(H)$ for a Hamiltonian cycle $H$.

\begin{lemma}\label{prop_1}
    % If we chose any Hamiltonian cycle of the complete graph and construct a matrix $\matr{N}\in\mathbb{R}^{3n\times n}$ as described in~\eqref{eq:choice_of_N} 
    After arbitrarily selecting one of the Hamiltonian cycles of the complete graph with $n$ vertices, denoted with $H$,  construct a matrix $\matr{N}(H)\in\mathbb{R}^{3n\times n}$ as described in~\eqref{eq:choice_of_N}. 
    Then, the following conditions are sufficient to have $\|\dpR{i}(t)\|>0$
    \begin{enumerate}
        \item Assumption~\ref{assumpt:not_aligned} holds,
        \item $\vect{f}_{0i}\not\in \operatorname{span}\{\vect{\delta}_i,\bar{\vect{\delta}}_i\}$,
        \item $\mu_i(t)$ and $\bar{\mu}_i(t)$ are bounded,
        \item $|\dot{\bar{\mu}}_i(t)| + |\dot{\mu}_i(t)| \neq 0$. 
        % \textcolor{black}{CG: we should not state this, it is too strong an assumption, and not even fulfilled by our choice of $\mu_i(t), \bar{\mu}_i(t)$ in the following. I change it with simply "they are not simultaneously zero". We can say the sum of their absolute values is different than zero.}
    \end{enumerate}
    % Additionally, given $\underline{v}>0$ the previous conditions plus the next one are sufficient to have  $\|\dpR{i}(t)\|\geq\underline{v}$
    % \begin{align}    
    % \text{5)}\;\; \|\dot\mu_i(t)\;\vect{\delta}_i + \dot{\bar{\mu}}_i(t)\;\bar{\vect{\delta}}_i\| \geq   \tfrac{\underline{v}}{L_i}\,\left\|\vect{f}_{0i}+\mu_i(t)\;\vect{\delta}_i + \bar{\mu}_i(t)\;\bar{\vect{\delta}}_i\right\|
    % \label{eq:v_underbar_suff_cond}
    % \end{align}
\label{prop:sufficient_cond_nonzer_vel}
\end{lemma}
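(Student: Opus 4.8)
The plan is to reduce the statement, via Result~\ref{res:velocity_force_derivative}, to showing that the orthogonal component $\dot{\vect{f}}_i^{\perp}(t)$ of the force derivative never vanishes, and then to contradict $\dot{\vect{f}}_i^{\perp}(t)=\vect{0}$ using the affine structure of $\vect{f}_i(t)$ and the linear structure of $\dot{\vect{f}}_i(t)$ recorded in~\eqref{eq:forces_mu}--\eqref{eq:fidot_mudot}.

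First I would verify that the hypotheses of Result~\ref{res:velocity_force_derivative} hold at every $t$, namely that $T_i(t)$ is positive and bounded. Boundedness comes from condition~3: since $\vect{f}_i(t)=\vect{f}_{0i}+\mu_i(t)\,\vect{\delta}_i+\bar{\mu}_i(t)\,\bar{\vect{\delta}}_i$ with $\vect{f}_{0i},\vect{\delta}_i,\bar{\vect{\delta}}_i$ constant and $\mu_i,\bar{\mu}_i$ bounded, $T_i(t)=\|\vect{f}_i(t)\|$ is bounded. Positivity comes from condition~2: if $\vect{f}_i(t)=\vect{0}$ then $\vect{f}_{0i}=-\bigl(\mu_i(t)\,\vect{\delta}_i+\bar{\mu}_i(t)\,\bar{\vect{\delta}}_i\bigr)\in\operatorname{span}\{\vect{\delta}_i,\bar{\vect{\delta}}_i\}$, contradicting condition~2. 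Hence Result~\ref{res:velocity_force_derivative} applies and it suffices to prove $\dot{\vect{f}}_i^{\perp}(t)\neq\vect{0}$ for all $t$.

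Next I would use that $\dot{\vect{f}}_i=\dot{\vect{f}}_i^{\parallel}+\dot{\vect{f}}_i^{\perp}$ is the orthogonal decomposition of $\dot{\vect{f}}_i$ with respect to the unit direction $\vect{q}_i=\vect{f}_i/T_i$: indeed $\dot{\vect{f}}_i^{\parallel}=\dot{T}_i\vect{q}_i$ is collinear with $\vect{f}_i$, and $\dot{\vect{f}}_i^{\perp}=T_i\dot{\vect{q}}_i$ is orthogonal to $\vect{q}_i$ because $\|\vect{q}_i\|=1$. Since $T_i(t)>0$, it follows that $\dot{\vect{f}}_i^{\perp}(t)=\vect{0}$ is equivalent to the existence of a scalar $c\in\mathbb{R}$ with $\dot{\vect{f}}_i(t)=c\,\vect{f}_i(t)$. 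Substituting~\eqref{eq:forces_mu} and~\eqref{eq:fidot_mudot} into this identity and rearranging gives
\[
c\,\vect{f}_{0i}=\bigl(\dot{\mu}_i(t)-c\,\mu_i(t)\bigr)\vect{\delta}_i+\bigl(\dot{\bar{\mu}}_i(t)-c\,\bar{\mu}_i(t)\bigr)\bar{\vect{\delta}}_i\in\operatorname{span}\{\vect{\delta}_i,\bar{\vect{\delta}}_i\}.
\]
I would then split into two cases. If $c\neq 0$, the displayed relation forces $\vect{f}_{0i}\in\operatorname{span}\{\vect{\delta}_i,\bar{\vect{\delta}}_i\}$, contradicting condition~2. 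If $c=0$, then $\dot{\mu}_i(t)\,\vect{\delta}_i+\dot{\bar{\mu}}_i(t)\,\bar{\vect{\delta}}_i=\vect{0}$, and since condition~1 (Assumption~\ref{assumpt:not_aligned}) guarantees $\vect{\delta}_i$ and $\bar{\vect{\delta}}_i$ are linearly independent, this forces $\dot{\mu}_i(t)=\dot{\bar{\mu}}_i(t)=0$, i.e.\ $|\dot{\bar{\mu}}_i(t)|+|\dot{\mu}_i(t)|=0$, contradicting condition~4. In both cases we reach a contradiction, so $\dot{\vect{f}}_i^{\perp}(t)\neq\vect{0}$, hence $\|\dpR{i}(t)\|>0$ by Result~\ref{res:velocity_force_derivative}.

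The argument is short and essentially linear-algebraic once the structural facts~\eqref{eq:forces_mu}--\eqref{eq:fidot_mudot} and Result~\ref{res:velocity_force_derivative} are in hand, so the only points requiring care are: (i) dispatching the positivity of $T_i$ first, since the characterization ``$\dot{\vect{f}}_i^{\perp}=\vect{0}$ iff $\dot{\vect{f}}_i$ collinear with $\vect{f}_i$'' relies on $T_i\neq 0$; and (ii) isolating the degenerate subcase $c=0$, where it is condition~4 together with the linear independence from Assumption~\ref{assumpt:not_aligned} — not condition~2 — that closes the argument. I do not expect any genuine obstacle beyond this bookkeeping.
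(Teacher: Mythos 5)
Your proposal is correct and follows essentially the same route as the paper: verify positivity and boundedness of $T_i$ from conditions 2 and 3 so that Result~\ref{res:velocity_force_derivative} applies, then rule out $\dot{\vect{f}}_i^{\perp}=\vect{0}$ using the affine/linear structure of~\eqref{eq:forces_mu}--\eqref{eq:fidot_mudot}. The only difference is organizational — the paper separately proves ``non-vanishing derivative'' and ``non-collinearity by contradiction,'' whereas you merge both into the single case split on the scalar $c$ in $\dot{\vect{f}}_i=c\,\vect{f}_i$ — and your version makes explicit the use of linear independence of $\vect{\delta}_i,\bar{\vect{\delta}}_i$ in the $c=0$ case, which the paper leaves implicit.
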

\begin{proof}
The assumptions of Result~\ref{res:velocity_force_derivative} are satisfied because of the following two facts:
    \begin{itemize}    
        \item since $\vect{f}_{0i}\not\in \operatorname{span}\{\vect{\delta}_i,\bar{\vect{\delta}}_i\}$ 
        the affine 2D subspace  where $\vect{f}_{i}(t)$ belongs to does not pass through the origin and therefore  $\vect{f}_{i}(t)$ is nonzero for any $t$ and so is $T_i(t)$.
        \item Since $\mu_i(t)$ and $\bar{\mu}_i(t)$ are bounded then $\|\vect{f}_{i}(t)\|$ is bounded and so is $T_i(t)$.  
        % \textcolor{magenta}{AF:\footnote{\textcolor{magenta}  {two comments here instrumental to the reformulation of the Problem 1: \\ 1) it seems that  in this way we don't need in this work to assume in the problem that the lambdas are bounded, it comes from the method, this makes the problem more compact and elegant in my view. \\ 2) I believe that we can say that since with these traj the Ti are never zero and are bounded that they are also never negative if we start from an initial condition in which they are positive. I don't know if we want to stress this because the proof can be tricky and it could open a can of worms with the reviewers and also might make the presentation heavy, but it is good to keep it in mind and note it down somewhere, in case a reviewer asks for it or somebody in a future presentation of the work.}}}
    \end{itemize}

From Result~\ref{res:velocity_force_derivative} we know that in order to show that $\|\dpR{i}(t)\|>0$ it is enough to show that $\|\dot{\vect{f}}_i^{\perp}(t)\| > 0$, and we do this by showing the next two points 
    \begin{itemize}
        \item (non vanishing derivative) $\|\dot{\vect{f}}_i(t)\|\neq 0$, and
        \item (non-collinear force and derivative) $\dot{\vect{f}}_i(t)$ is not collinear with $\vect{f}_i(t)$.
    \end{itemize}

\emph{Non vanishing} From $|\dot{\bar{\mu}}_i(t)|+|\dot{\mu}_i(t)| \neq 0$ it follows that $\dot{\bar{\mu}}_i(t)$ and $\dot{\mu}_i(t)$ are not simultaneously zero and therefore $\|\dot{\vect{f}}_i(t)\|\neq 0$. 

\emph{Non-collinear}. By contradiction assume that $\vect{f}_i(t)$ is collinear with  $\dot{\vect{f}}_i(t)$, therefore also their sum  
    $$
    \dot{\vect{f}}_i(t) + \vect{f}_i(t) = \vect{f}_{0i}+\tilde{\vect{f}}_{i}(t)+\dot{\vect{f}}_i(t)
    $$
    is collinear with $\dot{\vect{f}}_i(t)$, and therefore their sum is contained in $\operatorname{span}\{\vect{\delta}_i,\bar{\vect{\delta}}_i\}$. However on the right hand side of the previous equation, $\tilde{\vect{f}}_{i}(t)+\dot{\vect{f}}_i(t)$ is contained in $\operatorname{span}\{\vect{\delta}_i,\bar{\vect{\delta}}_i\}$ while $\vect{f}_{0i}\not\in\operatorname{span}\{\vect{\delta}_i,\bar{\vect{\delta}}_i\}$, which implies that the sum is not contained in $\operatorname{span}\{\vect{\delta}_i,\bar{\vect{\delta}}_i\}$, which brings to the sought contradiction.

\smallskip
We have shown that the assumptions of Result~\ref{res:velocity_force_derivative} are satisfied, and so the proposition is proven by applying the second point in Result~\ref{res:velocity_force_derivative}.
%
% In order to prove the second and last part of the proposition it is enough to show that~\eqref{eq:v_underbar_suff_cond} implies the additional condition of the third point of Result~\ref{res:velocity_force_derivative}. In particular~\eqref{eq:v_underbar_suff_cond} is equivalent to $\|\dot\mu_i(t)\;\vect{\delta}_i + \dot{\bar{\mu}}_i(t)\;\bar{\vect{\delta}}_i\| \geq   
% \tfrac{T_i(t)}{L_i}\underline{v}$ which implies 
%         $\|\dot{\vect{f}}_i^{\perp}(t)\| \geq   
%         \tfrac{T_i(t)}{L_i}\underline{v}$.
\end{proof}

\subsection{Decision of the time-varying coefficients ${\lambda}(t)$}\label{subsec:coloring}

In Lemma~\ref{prop:sufficient_cond_nonzer_vel}, we have seen that two important conditions to be satisfied are that,  for every carrier $i=1,\ldots,n$,
\begin{itemize}
    \item $\mu_i(t)$ and $\bar{\mu}_i(t)$ are bounded, and
    \item $|\dot{\bar{\mu}}_i(t)| + |\dot{\mu}_i(t)|\neq 0$, i.e., $\dot{\bar{\mu}}_i$ and $\dot{\mu}_i$ are not simultaneously zero. 
\end{itemize}
In order to satisfy the first condition, it is enough to chose smooth bounded functions for the $\lambda_i(t)$'s, for example, a natural choice is to choose periodic functions which oscillate between a maximum and a minimum, as, e.g., the combination of some trigonometric functions of time.

Restating the second condition for the original $\dot\lambda_i(t)$'s variables, we see that for the second condition, we must guarantee that
% . From the definition of
% $\mu_i(t)$ and ${\bar{\mu}}_i(t)$ 
%Denote with $j=h_i$, then because of the definition of the Hamiltonian cycle we have that ${h_i+1}=j+1$. 
% The conditions to satisfy are that
$\dot\lambda_i(t)$ and $\dot\lambda_{i+1}(t)$ are not simultaneously zero for all $i=1,\ldots,n$ (with $\dot\lambda_{n+1}:=\dot\lambda_1$). Since in order to satisfy the boundedness condition we have opted for smooth oscillatory functions, we need to choose carefully those functions such that the pair $\lambda_i(t)$ and $\lambda_{i+1}(t)$ do not attain their stationary points simultaneously, for every consecutive pair of edges obtained when $i=1,\ldots,n$. 

Assume that we define a finite library $C$ of periodical functions of time which do not have any simultaneous stationary point in common, denote such library with $C = \{c_1(t), c_2(t), \ldots,c_{n_c}(t)\}$. Formally, we are imposing that   $\{t \in \mathbb{R} \;|\; \dot c_i(t)=0\}\cap \{t \in \mathbb{R} \;|\; \dot c_j(t)=0\}=\emptyset$ for every $i,j=1,\ldots,n_c$, with $j\neq i$. 

The problem is then to assign to each edge of the Hamiltonian cycle $H$ one function in $C$ such that   
two incident (consecutive) edges, $e_i$ and $e_{i+1}$, do not have the same function assigned. This is a type of problem called \emph{edge coloring} in graph theory. The minimum number of colors for edge coloring in a cycle graph such as $H$ is two if the number of edges is even and three if the number of edges is odd~\cite{soifer2008mathematical}. Therefore, to account for any  possible case, it is enough to have a library with three functions, e.g., 
\begin{align}\label{eq:c1c2c3}
    \begin{split}
        c_1(t)&=A\cos(\xi t)\\
        c_2(t)&=A\cos(\xi t + \tfrac{\pi}{3})\\
        c_3(t)&=A\cos(\xi t + \tfrac{2\pi}{3})
    \end{split}
\end{align}
where $A>0$ and $\xi>0$ are any amplitude and angular frequency, respectively, and then equate each $\lambda_i(t)$ to one of such functions, taking care of avoiding that the same function is assigned both to $\lambda_i(t)$ and $\lambda_{i+1}(t)$ for any $i=1,\ldots,n$, with $\lambda_{n+1}(t):=\lambda_1(t)$. 
With this choice, and following an appropriate assignment corresponding to the coloring of $ H $, we obtain that when $ \dot{\lambda}_i(t) = 0 $, it follows that $ \dot{\lambda}_{i+1}(t) = \pm A\xi\frac{\sqrt{3}}{2} $.  
Therefore, by Lemma~\ref{prop_1}, we conclude that $ \| \dpR{i} \| > 0 $.

Of course, an infinite number of other choices are possible, e.g., in the case of an even number of carriers, one could use a simplified library with just two functions out of phase by $\frac{\pi}{2}$ radians
\begin{align}\label{eq:c1c2}
        c_1(t)&=A\cos(\xi t),\ 
        c_2(t)=A\cos(\xi t + \tfrac{\pi}{2}).   
\end{align}
By assigning $ c_1 $ and $ c_2 $ alternatively to consecutive edges of $ H $, we have that when $ \dot{\lambda}_i(t) = 0 $, then $ \dot{\lambda}_{i+1}(t) = \pm A\xi $. 

We can now state the conclusive result of this section. 

\begin{prop}\label{prop_2}
After arbitrarily selecting one of the Hamiltonian cycles of the complete graph with $n$ vertices, denoted with $H$, construct a matrix $\matr{N}(H)\in\mathbb{R}^{3n\times n}$ as described in~\eqref{eq:choice_of_N}. 
    Furthermore,   assume that the following conditions are true:
    \begin{enumerate}
        \item Assumption~\ref{assumpt:not_aligned} holds,
        \item $\vect{f}_{0i}\not\in \operatorname{span}\{\vect{\delta}_i,\bar{\vect{\delta}}_i\}$,        
    \item the $\lambda_i(t)$'s are chosen via edge coloring of the selected Hamiltonian cycle using as a library of functions~\eqref{eq:c1c2} for the case of an odd~$n$ or~\eqref{eq:c1c2c3} for the case of an even~$n$;

    \end{enumerate}
then, the Problem of Coordinated Trajectories for Non-stop Flying Carriers holding a Cable-Suspended Load is solved.
\end{prop}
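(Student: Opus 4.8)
The plan is to proceed constructively: fix the single candidate trajectory $\f(t)=\matr{G}^\dagger\vect{w}_{0}+\matr{N}(H)\vect{\lambda}(t)$, where each $\lambda_i(t)$ is set equal to one of the cosine functions of the prescribed library via a proper edge coloring of $H$, and then verify in turn the four requirements of the Problem. Two of them come for free. Since $\f(t)$ has exactly the form~\eqref{eq:f_t_form} and the columns of $\matr{N}(H)$ lie in $\operatorname{null}(\matr{G})$ (as verified in \sect\ref{sec:definition_N}), the static equilibrium constraint~\eqref{eq:compact-statics_time} holds identically in $t$; and since each $\lambda_i$ is a cosine, $\f_i(t)=\vect{f}_{0i}+\matr{N}_i\vect{\lambda}(t)$ in~\eqref{eq:forces_mu} is $C^{\infty}$, hence in particular $C^1$.

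For the tension bounds~\eqref{eq:problem3}, I would argue as follows. The library functions are bounded by $A$, so $\mu_i$ and $\bar{\mu}_i$ are bounded, hence $\tilde{\vect{f}}_i$ and therefore $T_i(t)=\|\f_i(t)\|$ are bounded uniformly in $t$, and one may pick a single $\overline{T}<\infty$ valid for all $i$. For the lower bound, hypothesis~(2) states $\vect{f}_{0i}\notin\operatorname{span}\{\vect{\delta}_i,\bar{\vect{\delta}}_i\}$, so the affine $2$-plane $\vect{f}_{0i}+\operatorname{span}\{\vect{\delta}_i,\bar{\vect{\delta}}_i\}$, which contains $\f_i(t)$ for all $t$, lies at a strictly positive distance $d_i$ from the origin; therefore $T_i(t)\ge d_i$ for all $t$, and $\underline{T}:=\min_i d_i>0$ works.

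The non-stop condition is the substance of the argument, and it is handled by Lemma~\ref{prop_1}. Its hypotheses~(1)--(2) coincide with hypotheses~(1)--(2) of the present statement; its hypothesis~(3), boundedness of $\mu_i,\bar{\mu}_i$, was just established; and its hypothesis~(4), that $|\dot{\bar{\mu}}_i(t)|+|\dot{\mu}_i(t)|\neq 0$ for all $t$, is exactly what the edge coloring delivers: $\mu_i=\lambda_{h_i}$ and $\bar{\mu}_i=\lambda_{h_i+1}$ are attached to the two consecutive edges $e_{h_i},e_{h_i+1}$ of $H$ incident on vertex $i$, a proper edge coloring assigns them distinct library functions, and distinct library functions have, by construction, no common stationary point. (Such a coloring exists with the prescribed library because the chromatic index of an $n$-edge cycle is $2$ for even $n$ and $3$ for odd $n$.) Lemma~\ref{prop_1} then yields $\|\dpR{i}(t)\|>0$ for every $t\ge 0$ and every $i$.

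The remaining step --- and the one I expect to be the only genuine obstacle --- is to upgrade this pointwise positivity to the uniform lower bound~\eqref{eq:problem4}, because Lemma~\ref{prop_1} is only a pointwise statement whereas the Problem demands a fixed $\underline{v}>0$. Here I would invoke periodicity and compactness: all $\lambda_i$ share the period $\mathcal{T}=2\pi/\xi$, so $\f_i$, the tension $T_i\ge\underline{T}>0$, the cable direction $\vect{q}_i=\f_i/T_i$, its derivative $\dot{\vect{q}}_i$, and hence $\dpR{i}=L_i\dot{\vect{q}}_i=\tfrac{L_i}{T_i}\dot{\vect{f}}_i^{\perp}$ (by~\eqref{eq:dpr_df} in Result~\ref{res:velocity_force_derivative}) are all continuous and $\mathcal{T}$-periodic. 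A continuous, strictly positive, periodic function attains a strictly positive minimum over one period $[0,\mathcal{T}]$, so $\underline{v}:=\min_i\min_{t\in[0,\mathcal{T}]}\|\dpR{i}(t)\|>0$ and~\eqref{eq:problem4} follows. One could alternatively lower-bound $\|\dot{\vect{f}}_i^{\perp}\|$ directly, using that whenever $\dot{\lambda}_i$ vanishes the partner derivative equals $\pm A\xi\tfrac{\sqrt{3}}{2}$ (or $\pm A\xi$ for the two-function library) as noted in \sect\ref{subsec:coloring}, but the compactness argument is shorter. With all four requirements verified, the constructed $\f(t)$ solves the Problem.
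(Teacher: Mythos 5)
Your proposal is correct, and its skeleton (verify the four requirements one by one, with Lemma~\ref{prop_1} carrying the non-stop condition) matches the paper's proof. The interesting divergence is in the last step, the uniform bound~\eqref{eq:problem4}. The paper does not pass through ``pointwise positivity plus compactness'': it computes $\dot{\f}_i$ explicitly as $\xi\Delta_i\matr{M}_i[\cos(\xi t)\;\;\sin(\xi t)]^\top$, observes that the edge coloring makes $\matr{M}_i$ full rank and Assumption~\ref{assumpt:not_aligned} makes $\Delta_i$ full rank, so that $\dot{\f}_i(t)$ traces a nondegenerate ellipse with minor semi-axis $\underline{\gamma}_i>0$; it then introduces a factor $\alpha_i>0$ with $\|\dot{\f}_i^{\perp}\|\geq\alpha_i\|\dot{\f}_i\|$ and arrives at the explicit bound $\underline{v}=\tfrac{L_i}{\overline{T}}\xi\alpha_i\underline{\gamma}_i$ of~\eqref{eq:vel_lower_bound}. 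Your periodicity-and-compactness argument is shorter and perfectly sound (all quantities are continuous and $2\pi/\xi$-periodic, and $T_i$ is bounded away from zero, so the minimum over one period is attained and positive), but it is non-constructive: it proves existence of $\underline{v}$ without exhibiting it, whereas the paper's explicit formula is what later supports the qualitative remarks on how $\xi$, the cycle choice, and the geometry of $\vect{\delta}_i,\bar{\vect{\delta}}_i$ affect the minimum speed. Conversely, your treatment of the lower tension bound is cleaner than the paper's: bounding $T_i(t)$ below by the distance $d_i>0$ of the affine plane $\vect{f}_{0i}+\operatorname{span}\{\vect{\delta}_i,\bar{\vect{\delta}}_i\}$ from the origin gives the uniform $\underline{T}$ directly, while the paper only cites the pointwise non-vanishing established in Lemma~\ref{prop_1}. (Note, incidentally, that the paper's own $\alpha_i$ step also tacitly relies on the same periodicity/compactness reasoning you make explicit, since non-collinearity at each $t$ must be upgraded to an angle bounded away from $0$ and $\pi$.)
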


% \begin{prop}\label{prop_2} If the assumptions of Proposition~\ref{prop_1} hold and the $\lambda_i(t)$'s are chosen via edge coloring of the selected Hamiltonian cycle using as library of functions~\eqref{eq:c1c2} for the case of an odd~$n$ or~\eqref{eq:c1c2c3} for the case of an even~$n$, then the Problem of Coordinated Trajectories for Non-stop Flying Carriers holding a Cable-Suspended Load is solved.
% \end{prop}
\begin{proof} 
We will show that all the requirements of the Problem are met by the proposed choice of coordinated trajectories.
Choosing  $\lambda_i(t)$'s via edge coloring from the libraries~\eqref{eq:c1c2} or~\eqref{eq:c1c2c3} ensures that the forces~\eqref{eq:forces_mu} belong to class~$C^1$.
    Furthermore~\eqref{eq:compact-statics_time} holds thanks to~\eqref{eq:time_var_force_Ni}. 
    Additionally, the forces do not vanish 
    for what is said 
    in the proof of Lemma~\ref{prop:sufficient_cond_nonzer_vel}. Furthermore,
    for any bounded $A$ in~\eqref{eq:c1c2} or~\eqref{eq:c1c2c3}, the forces are upper-bounded because each force consists of a constant term and a variable term with bounded periodic coordinates. Therefore,~\eqref{eq:problem3} is satisfied.

Finally, the next part of the proof is aimed at showing that~\eqref{eq:problem4} holds.
    Combining~\eqref{eq:fidot_mudot} and the time derivatives of~\eqref{eq:c1c2} or~\eqref{eq:c1c2c3}, we can write \begin{align}
        \dot{\bm{f}}_i&=
        \begin{bmatrix}
            \bm{\delta}_i &\bar{\bm{\delta}}_i
        \end{bmatrix}
        \begin{bmatrix}
            \dot{\mu}_i\\ \dot{\bar{\mu}}_i  
        \end{bmatrix}=\\ &=
        \begin{bmatrix}
            \bm{\delta}_i &\bar{\bm{\delta}}_i
        \end{bmatrix} 
        \xi A
        \left[
        \begin{smallmatrix}
            -\sin{\beta_i} & \;-\cos{\beta_i}\\ 
            -\sin{\bar{\beta}_i} & \; -\cos{\bar{\beta}_i}
        \end{smallmatrix}
        \right]
        \begin{bmatrix}
            \cos{(\xi t)}\\ \sin{(\xi t)}
        \end{bmatrix}:= \\ &:=
        \xi\Delta_i \bm{M}_i(\beta_i,\bar{\beta}_i,A)   \begin{bmatrix}
            \cos{(\xi t)}\\ \sin{(\xi t)} 
        \end{bmatrix},\label{eq:ellipses}
    \end{align}
    where $\Delta_i:=\begin{bmatrix}
            \bm{\delta}_i &\bar{\bm{\delta}}_i
        \end{bmatrix}$, 
        $\bm{M}_i:=
        A\left[
        \begin{smallmatrix}
            -\sin{\beta_i} & \;-\cos{\beta_i}\\ 
            -\sin{\bar{\beta}_i} & \; -\cos{\bar{\beta}_i}
        \end{smallmatrix}
        \right]$
        where $\beta_i\in\{0,\pi/3,2\pi/3\}$  and  $\bar{\beta}_i\in\{0,\pi/3,2\pi/3\}/\{\beta_i\}$ for an odd $n$ (or $\beta_i\in\{0,\pi/2\}$  and  $\bar{\beta}_i\in\{0,\pi/2\}/\{\beta_i\}$ for an even $n$) are the constant phases of the sinusoidal functions  in~\eqref{eq:c1c2}, ~\eqref{eq:c1c2c3} corresponding to the two edges adjacent to the $i$-th vertex. 
        Thanks to the edge coloring, the two constant phases are different; therefore, $\bm{M}_i$ is full-rank.
        For Assumption~\ref{assumpt:not_aligned}, $\Delta_i$ is also full-rank. Hence,  $\Delta_i \bm{M}_i 
        \left[
        \begin{smallmatrix}
            \cos{(\xi t)}\\ \sin{(\xi t)}, 
        \end{smallmatrix}
        \right]$ is a parametrization of a 2D ellipse whose minor axis length is nonzero and is denoted with $\underline\gamma_i>0$.  In other words, for \eqref{eq:ellipses} we have that  $\forall t, ||\dot{\bm{f}}_i||\geq\xi \underline\gamma_i$. For the same argument used in Lemma~\ref{prop_1}, we know that $\bm{f}_i$ and $\dot{\bm{f}}_i$ are not collinear, therefore we can state that for any non-zero $\dot{\bm{f}}_i$, there is a non-zero $\dot{\bm{f}_i^\perp}$. More specifically there exists $\alpha_i>0$ such that $||\dot{\bm{f}_i^\perp}||\geq\alpha_i ||\dot{\bm{f}}_i||.$ Finally, we have that $||\dot{\bm{f}_i^\perp}||\geq\xi \alpha_i\underline\gamma_i$. Applying~\eqref{eq:dpr_df}, 
        we have that  
        $\|\dpR{i}(t)\| \geq \tfrac{L_i}{T_i(t)}\xi \alpha_i\underline\gamma_i\geq\tfrac{L_i}{\overline{T_i}}\xi \alpha_i\underline\gamma_i$.
        Finally, we can conclude that there exists 
        \begin{align}
            \underline{v}=\frac{L_i}{\overline{T}}\xi\alpha_i\underline\gamma_i
            \label{eq:vel_lower_bound}
        \end{align} such that $\forall t, ||\dpR{i}||\geq \underline{v}$. Therefore, all the requirements of the Problem are met by the proposed choice of coordinated trajectories. Note also that, with analogous reasoning and calling the major axis of the 2D ellipses $\overline\gamma_i,$ we have $\|\dpR{i}(t)\| \leq \tfrac{L_i}{T_i(t)}\xi \alpha_i\overline\gamma_i\leq\tfrac{L_i}{\underline{T_i}}\xi \alpha_i\overline\gamma_i$. 
\end{proof}

\begin{algorithm}[t]
\small
\caption{Computation of the non-stop carrier trajectories}\label{alg:1}
$[\pR{1},\dpR{1},\ldots,\pR{n},\dpR{n}]=$\\compute\_carrier\_trajectories$(n,\massL, \rotMatLEq, \pLEq, {^B\bm{b}}_1,\ldots,{^B\bm{b}}_n)$
\begin{algorithmic}

\State $\bm{G}(\rotMatLEq, ^B\bm{b}_i) \gets$ \eqref{eq:grasp}
\State $\bm{f}_{0} \gets\bm{G}^\dagger\begin{bmatrix}
    \massL g\vE{3}\\\bm{0}_{3\times1}
\end{bmatrix}$

\Comment{\textit{Find a Hamiltonian cycle satisfying conditions 1) and 2) of Proposition \ref{prop_2}}}
\State $\mathcal{H}\gets $ all Hamiltonian cycles.
\While{TRUE}
\State $h\gets$ draw a cycle from $\mathcal{H}$

\State $\mathcal{H}\gets \mathcal{H}\backslash h$ 

\For{$j\neq n$}

$\vect{\delta}_i = \rotMatLEq({^B\vect{b}}_{e_{h_i}^2}-{^B\vect{b}}_{e_{h_i}^1})$

 $\bar{\vect{\delta}}_i =  \rotMatLEq({^B\vect{b}}_{e_{{h_i+1}}^2}-{^B\vect{b}}_{e_{{h_i+1}}^1})$
 
\EndFor

\If{rank($[\vect{f}_{0i} \; \vect{\delta}_i \; \bar{\vect{\delta}}_i])=3$, for all $\ioneton$}

\textbf{Break}
\EndIf
\EndWhile

 $\bm{H}\gets$ incidence matrix of $h$

$\bm{N}\gets$ \eqref{eq:choice_of_N}

\Comment{\textit{Apply edge coloring using \eqref{eq:c1c2} or \eqref{eq:c1c2c3}. }}

    \If{$n$ is even}
    \For{$j\leq n-2$}
        $\lambda_j(t)\gets c_1(t)$ in \eqref{eq:c1c2}
    
        $\lambda_{j+1}(t)\gets c_2(t)$ in \eqref{eq:c1c2}
    
        $j\gets j+2$
      \EndFor
    \Else
    \For{$j\leq n-3$}
    $\lambda_j(t)\gets c_1(t)$ in  \eqref{eq:c1c2c3}
    
        $\lambda_{j+1}(t)\gets c_2(t)$ in \eqref{eq:c1c2c3}

             $\lambda_{j+2}(t)\gets c_2(t)$ in \eqref{eq:c1c2c3}
    \State $j\gets j+3$
        \EndFor
    \EndIf

   \Comment{\textit{Compute the cable forces and use kinematics to compute the carriers' trajectories}}
   \For{$j=1:n$, $\forall t>0$}
   
   $\bm{f}_j(t)\gets$\eqref{eq:forces_mu}
   
   $T_j\gets ||\bm{f}_j||$
   
   $\bm{q}_j\gets \frac{\bm{f}_j}{T_j}$ 
   
   $\dot{\bm{f}}_j\gets$\eqref{eq:fidot_mudot}

   $\dot{\bm{q}}_j\gets (\dot{\bm{f}}_j- \bm{q}_j\bm{q}_j^\top\dot{\bm{f}}_j)/ T_j$

   $\pR{j}\gets$\eqref{eq:kinematics-const}

   $\dpR{j}\gets$\eqref{eq:diff_kin-const}
    \EndFor 
\end{algorithmic}
\end{algorithm}

%%%%%%%%%%%%%%%%%%%%%%%%%%%%%%
\section{Algorithm and Remarks}
\label{sec:algo}
%%%%%%%%%%%%%%%%%%%%%%%%%%%%%%

\subsection{Algorithm to generate Coordinated Trajectories for Non-stop Flying Carriers Holding a Cable-Suspended Load}
Algorithm~\ref{alg:1} presents a conceptual pseudo-code of the solution formally derived in Sec.~\ref{sec:method}. This algorithm computes the parameters of a set of $n$ coordinated trajectories for the $n$ carriers. These trajectories are periodic, never stopping, continuously differentiable, and ensure compatibility with the forces required to hold the load stationary.

\textcolor{black}{The main steps of the algorithm can be summarized as follows. The algorithm determines the cable forces; from those and their time derivatives, it computes, using the system's kinematics, the positions and velocities of the carriers over time. The steps to determine the cable forces are as follows. The component of each cable force that compensates the external wrench is computed by inverting \eqref{eq:compact-statics} as in \eqref{eq:f_t_form}. Then, the other component of each cable force, namely the internal force, is computed. To do so, a Hamiltonian cycle is selected to determine the nullspace $N$ of the grasp matrix in \eqref{eq:f_t_form}. The internal forces are expressed as in \eqref{eq:time_var_force_Ni}, where ${\vect{\delta}}_i$ and $\bar{\vect{\delta}}_i$ are given once the Hamiltonian cycle is selected, and $\mu_i$ and $\bar{\mu}_i$ are chosen with the procedure given in Sec. \ref{subsec:coloring}.}

\subsection{Qualitative Remarks about the Choice of Free Parameters}

\begin{remark} \emph{(Choice of the Hamiltonian cycle among the $\frac{(n-1)!}{2}$ possible for $n$ carriers).} 
There may be certain Hamiltonian cycles that are inadmissible because they would violate two of the assumptions of Proposition~\ref{prop_2}. 
First, there might exist Hamiltonian cycles for which Assumption 1 does not hold. In that case, 
$\operatorname{span}\{\vect{\delta}_i,\bar{\vect{\delta}}_i\}$ is a 1D subspace of 
$\mathbb{R}^3$, and 
$\nexists{\bm{\lambda}}(t)$ such that $\|\dpR{i}(t)\|\geq\underline{v}$ for  $\underline{v}>0,$ as we showed in \cite{gabellieri2024existence}. Note that if there exist no three collinear cable attachment points on the load, then Assumption 1 holds for any Hamiltonian cycle. 

Secondly, any Hamiltonian cycle for which $\vect{f}_{0i}\in \operatorname{span}\{\vect{\delta}_i,\bar{\vect{\delta}}_i\}$ is not admissible. That would imply that the total cable force $\vect{f}_i(t),$ and hence $T_i(t)$, may be equal to zero. This invalidates our model in~\eqref{eq:dpr_df}.
Secondly, $\vect{f}_{0i}\in \operatorname{span}\{\vect{\delta}_i,\bar{\vect{\delta}}_i\}$ would allow $\vect{f}_i(t)$ and $\dot{\vect{f}}_i(t)$ to be collinear (see Proof of Lemma~\ref{prop_1}), and hence a variation of the force $\vect{f}_i(t)$ (i.e., a non-zero  $\dot{\vect{f}}_i(t)$ would not necessarily imply a non-zero $\dpR{i}(t)$.

From what has been said, we hypothesize that it is desirable to choose a Hamiltonian cycle for which 
$\vect{\delta}_i$ and $\bar{\vect{\delta}}_i$ are as much as possible orthogonal. In that way, the carrier's motion is as far as possible from being constrained on a 1D space  
($\vect{\delta}_i$ and $\bar{\vect{\delta}}_i$ linearly dependent). Specifically, the 2D ellipses represented by~\eqref{eq:ellipses} becomes more and more skew as $\vect{\delta}_i$ and $\bar{\vect{\delta}}_i$ are more aligned. Consequently, the minimum of the carriers' velocities $\underline{v}$ in~\eqref{eq:vel_lower_bound} is lower (with everything else remaining unchanged).
Moreover, we hypothesize that it is 
 better to choose Hamiltonian cycles such that $\vect{f}_{0i}$ is as much orthogonal as possible to 
$\operatorname{span}\{\vect{\delta}_i,\bar{\vect{\delta}}_i\}$.  In that way, we  
avoid that, even for large $\dot{\mu}_i$ and  $\dot{\bar{\mu}}_i$, the value of 
$\alpha_i$, and hence $\dot{\vect{f}}_i^{\perp}(t)$, is 
small; that would result in a small 
minimum value $\underline{v}
(\alpha_i)$ of $\dpR{i}$ (see~\eqref{eq:vel_lower_bound}). 
\end{remark}
\begin{remark} \emph{\textcolor{black}{(Failure of one carrier).}} 
One of the advantages of multi-robot systems is the tolerance to a single point of failure. 
When one robot fails, e.g., it detaches from the system, \eqref{eq:compact-statics_time} is not exact anymore: \textcolor{black}{the grasp matrix has changed, and so has its nullspace; hence, the cable forces and the corresponding carriers' trajectories are not valid anymore to keep the load unperturbed. However, because the carriers are not controlled to generate a force on the cable but to follow a trajectory, we expect that} the load pose error remains bounded as the carriers keep tracking their periodic trajectories in a closed loop. \textcolor{black}{Since the load weight is compensated for by the $N-1$ remaining carriers, the cable forces must adapt to the missing carrier.} As the variable part of the cable forces depends on the Hamiltonian cycle selection, we may expect a different tolerance to cable detachment depending on the chosen cycle.
\textcolor{black}{If the cables are symmetrically attached around the CoM of the load and the Hamiltonian cycle is such that all cable forces are the same (more on this can be found in Sec.\ref{sec:uniform}), we expect that the error of the load pose in case of any carrier's failure is the same.}
\end{remark}

\begin{remark} \emph{(Choice of the frequency $\xi$)}
    From the proof of Proposition~\ref{prop_2}, we have that the minimum carriers' velocity $\underline{v}$ can be increased at will by increasing the frequency $\xi$ in~\eqref{eq:c1c2},~\eqref{eq:c1c2c3}. Note that increasing instead the amplitude A would increase $\overline{T}$ as well, thus defeating the purpose of increasing $\underline{v}$ in~\eqref{eq:vel_lower_bound}.
\end{remark}
\begin{remark}\label{rmk:coloring} \emph{(Choice of the library of periodic functions C)}
In Section~\ref{subsec:coloring}, we gave a library of 2 (for $n$ even) or 3 (for $n$ odd) sinusoidal functions to assign to $\lambda_i(t)$. 
Note that a library of $n$ functions $c_i(t)=A\cos{(\xi t+ \phi_i)}$ for $i=1,\cdots, n$ with phases all spaced by $\pi/n$, namely $\phi_i=\frac{\pi}{n}(i-1)$, can be used to ensure that the same assignment of $\lambda_i(t)$ is valid for any possible Hamiltonian cycle. As a drawback, the phases $\lambda_i(t)$ paired together are closer to each other, leading to a larger eccentricity of the force derivative ellipses in the plane $\text{span}(\delta_i, \overline{\delta}_i)$ and hence to large force (and carrier velocity)  variations.  Note also that a library of different bounded periodic functions that do not simultaneously attain stationary points may also be used. \textcolor{black}{Ultimately, optimizing the amplitude, frequency, and phase based on a desired objective function, e.g., to regulate the eccentricity of the ellipses and so the variations of the carriers' speed norm, is an interesting direction for future work.} 
\end{remark}
\subsection{\textcolor{black}{Uniformly Attached cables on a planar horizontal load.}}\label{sec:uniform}
\textcolor{black}{This section focuses more in depth on a case that attains special practical relevance: that of a payload where the $n$ cables are attached uniformly on a circle centered at the load center of mass, and for which the desired attitude is horizontal. Consider $n$ equal to an even number and apply the proposed coloring procedure. The complete analysis for any number $n$ of cables is more complex and is left for future work.  
Recalling that $\cos{(x+\pi/2)}=-\sin{(x)}$, and considering without loss of generality $\bm\delta_i=[1\,0\,0]^\text{T}$ and 
$\bar{\bm{\delta}}_i=[\cos{\alpha_i}\, \sin{\alpha_i}\,0]^\text{T}$, namely an angle $\alpha_i$ between the directions of the internal forces, the $\ith{i}$ cable force is equal to 
$$\bm{f}_i=\bm{f}_{0i}+\begin{bmatrix}A\cos{(\xi t)}\\0\\0\end{bmatrix}+\begin{bmatrix}
-A\sin{(\xi t)}\cos{\alpha_i}\\ -A\sin{(\xi t)}\sin{\alpha_i}\\0\end{bmatrix}$$ 
where $\bm{f}_{0i}=\begin{bmatrix}0&0&\bar{\star}\end{bmatrix}^\text{T}$ is constant by definition. 
Also, $$\dot{\bm{f}}_{i}=A\xi\begin{bmatrix}-\sin{(\xi t)}\\0\\0\end{bmatrix}+A\xi\begin{bmatrix}
-\cos{(\xi t)}\cos{\alpha_i}\\ -\cos{(\xi t)}\sin{\alpha_i}\\0\end{bmatrix}.$$ 
It holds that \begin{equation}||\bm{f}_i||_2=A\sqrt{\star^2 +1 - \cos{\alpha_i}\sin{(2\xi t)}}\label{norm_f}\end{equation} and 
$||\dot{\bm{f}}_i||_2=A\xi\sqrt{1+\cos{\alpha_i}\sin{(2\xi t)}}$, 
with $\star=\frac{\bar{\star}}{A}$. We recall that, for \eqref{eq:dpr_df} of the manuscript, 
\begin{equation}
 \label{eq:dpr_norm_}
||\dpR{i}(t)||_2=\frac{L_i}{||\bm{f}_i(t)||_2}||\dot{\bm{f}}^\perp_i||_2.
\end{equation}}

\textcolor{black}{\begin{prop}\label{prop_3}
Let be given a planar load with an even number of cables attached uniformly around its center of mass; let be given the coloring procedure \eqref{eq:c1c2}. If $\alpha_i=\pi/2$ then $||\dpR{i}||$ is constant 
while if $\alpha_i\neq \pi/2$ then $||\dpR{i}||$ is time varying and its minimum value is smaller  than the constant value it has for $\alpha_i=\pi/2$.
\end{prop}
\begin{proof}
    For $\alpha_i=\pi/2$, the force has a constant module, and its value is lower than the maximum value of $||\bm{f}_i(t)||_2$ for any $\alpha_i\neq\pi/2.$ Similarly, $||\dot{\bm{f}}_i||_2$ is constant, and its value is higher than the minimum value of $||\dot{\bm{f}}_i||_2$ for any $\alpha_i\neq\pi/2$. Since the force does not change in norm, $\dot{\bm{f}}_i=\dot{\bm{f}}_i^\perp$ for $\alpha_i=\pi/2$. As a consequence, and for \eqref{eq:dpr_norm_}, the norm of the vehicle's velocity is also constant for $\alpha_i=\pi/2$, and its value is higher than the minimum it can assume for any $\alpha_i\neq \pi/2$. 
\end{proof}
\begin{prop}\label{prop_4}
    Given the same assumptions of Prop. \ref{prop_3}. The minimum value of $||\dot{\bm{f}}_i||_2$  is the lower the further $\alpha_i$ is from $\pi/2$. Moreover, such minimum value happens for $t=\frac{3\pi}{4\xi}$ if $\cos{(\alpha_i)}>0$ and for $t=\frac{\pi}{4\xi}$ for $\cos{(\alpha_i)}<0$.
\end{prop}
\begin{proof}
From \eqref{norm_f}, $||\bm{f}_i||_2$ has its maximum value at $t=\frac{3\pi}{4\xi}$ for $\cos{\alpha_i}>0$ at $t=\frac{\pi}{4\xi}$ for  $\cos{\alpha_i}<0$, and such value is higher the further $\alpha_i$ is from $\pi/2$.
We write $\bm{q}_i=\bm{f}_i/||\bm{f}_i||$ and compute the component of $\dot{\bm{f}}_i$ parallel to $\bm{q}_i$ as \begin{align}&\bm{q}_i^\text{T}\dot{\bm{f}}_i=
&=\frac{A\xi\cos{\alpha_i}(\sin{\xi t}^2-\cos{\xi t}^2)}{\sqrt{\star^2 +1 - 
\cos{\alpha_i}
\sin{(2\xi t)}}}=||\dot{\bm{f}}_i^\parallel||
\end{align}
Using Pitagora's theorem, we have
\begin{align}
&||\dot{\bm{f}}_i^\perp||^2 =||\dot{\bm{f}}_i||^2 _2-||\dot{\bm{f}}_i^\parallel||^2=\nonumber \\&=A^2\xi^2\left(1+\sin{2\xi t}\cos{\alpha_i} -\frac{\cos{\alpha_i}^2(1-\sin{2\xi t}^2)}{\star^2+1-\sin{2\xi t}\cos{\alpha_i}}\right) \label{eq:fperp}
\end{align}
Defining $\sin{(2\xi t)}=\ut$, \eqref{eq:fperp} is a composite function $||\dot{\bm{f}}_i^{\perp}||^2=g(\ut)$, so its minima are those of $\ut$ and of $g()$. 
By defining $C=\cos{(\alpha_i)}$, $B=\cos{(\alpha_i)}^2$, and $D=1+\star^2$, from \eqref{eq:fperp} we have $g(\ut)=1 + C\ut -\frac{B(1-\ut^2)}{D-Cu}$. To find the minima of $g(\ut)$, we look at the zeros of its derivative
\begin{align}
    \frac{\partial g(\ut)}{\partial \ut}=\frac{C(D^2-B)}{(D-C\ut)^2}\label{eq:partial_g}
\end{align}
whose numerator is always different than zero.
In conclusion, the only minima of $||\dot{\bm{f}}_i^\perp||^2$ are those of $\ut=\sin{(2\xi t)}$. Hence, $||\dot{\bm{f}}_i^\perp||^2$, and for the monotonicity of the square root function also $||\dot{\bm{f}}_i^\perp||^2$ has its minimum value for $t=\frac{3\pi}{4\xi}$ if $\cos{(\alpha_i)}>0$ and for $t=\frac{\pi}{4\xi}$ for $\cos{(\alpha_i)}<0$. These correspond to the maxima of $||\bm{f}_i||$. The minimum value of  $||\dot{\bm{f}}_i^\perp||^2$ is lower the further $\alpha_i$ is from $\pi/2$. For what said and from \eqref{eq:dpr_norm_}, we conclude that the minimum value of $||\dpR{i}|| $ is for $||\dot{\bm{f}}_i^\perp||$  at its minimum, and hence $||{\bm{f}}_i||$ at its maximum, and it decreases as $\alpha_i$ gets further from $\pi/2$.   
\end{proof}}

\section{Numerical Results}\label{sec:sim}

\begin{figure}[t]
    \centering\includegraphics[width=0.44\linewidth,trim={4cm 0 4cm 0},clip ]{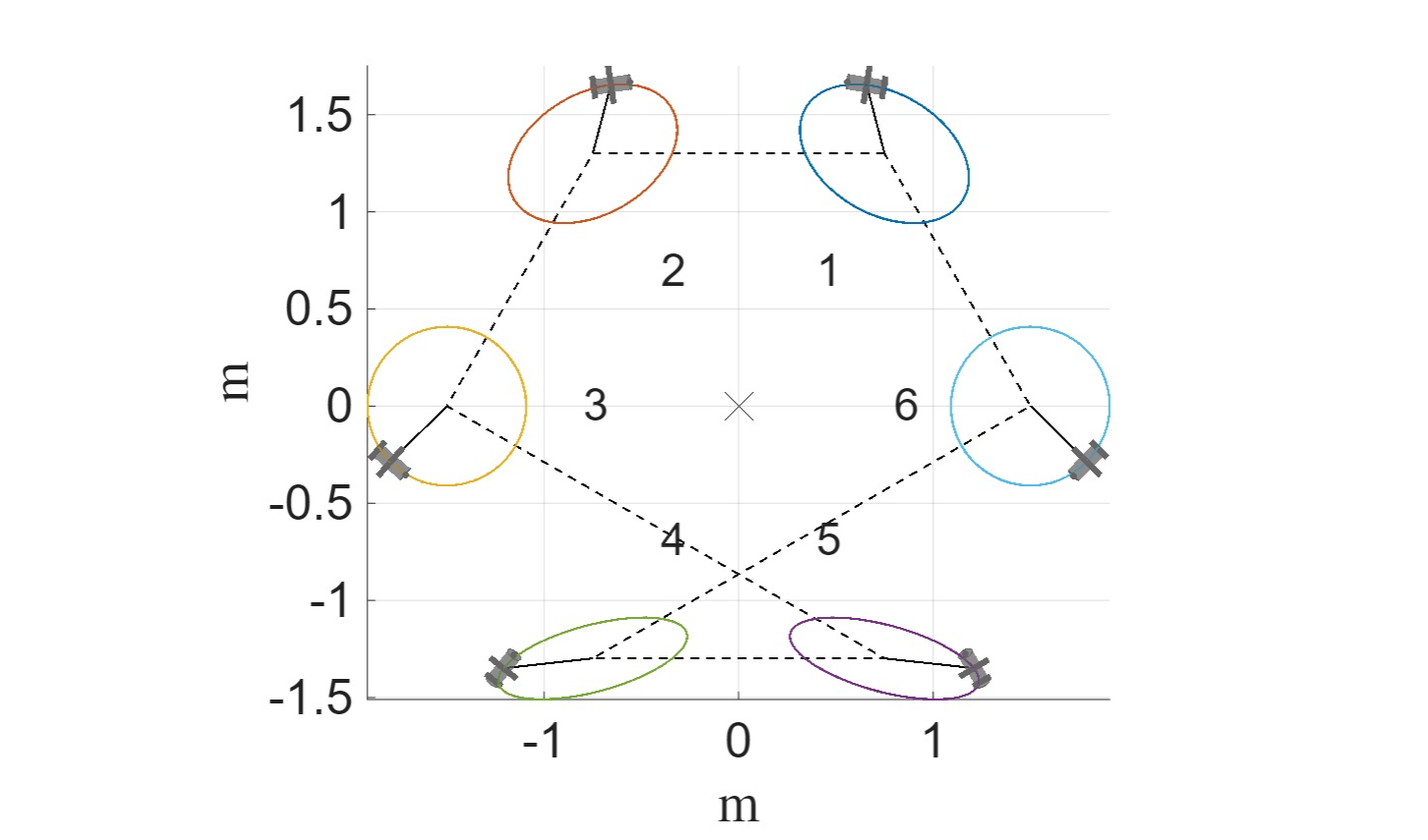}
\includegraphics[width=0.54\linewidth]{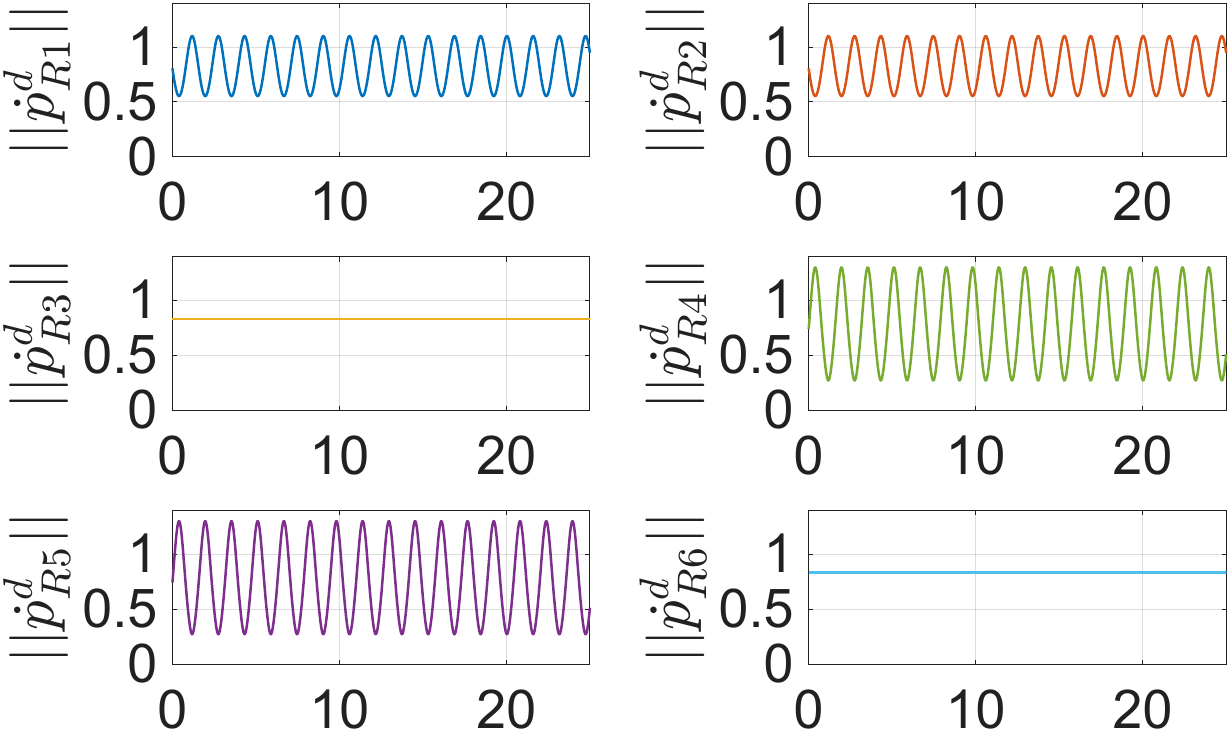}
    \caption{\textcolor{black}{Values of $||\dot{\bm{p}}_{Ri}^d||$ for a system with $n=6$ cables attached uniformly around a planar load, reported schematically on top. The carriers are numbered as indicated on the left, where a top view of the system is schematically shown. The carriers follow the colored planned paths; the cross indicates the load CoM, and dotted lines the edges of the Hamiltonian cycle. In the selected Hamiltonian cycle, the value of $\alpha_1=\alpha_2=\frac{3\pi}{2}$; $\alpha_3=\alpha_6=\frac{\pi}{2}$, and, as expected, the two vehicles numbered 3 and 6 have desired velocities with constant norms;  $\alpha_4=\alpha_5=\frac{\pi}{6}$, namely, they are the most distant from $\frac{\pi}{2}$, and the norm of these vehicle's velocity has the smallest minimum value. }}
    \label{fig:6N-alpha}
\end{figure}

\begin{figure}[t]
\includegraphics[width=0.48\columnwidth,trim={3cm 0 4cm 0},clip ]{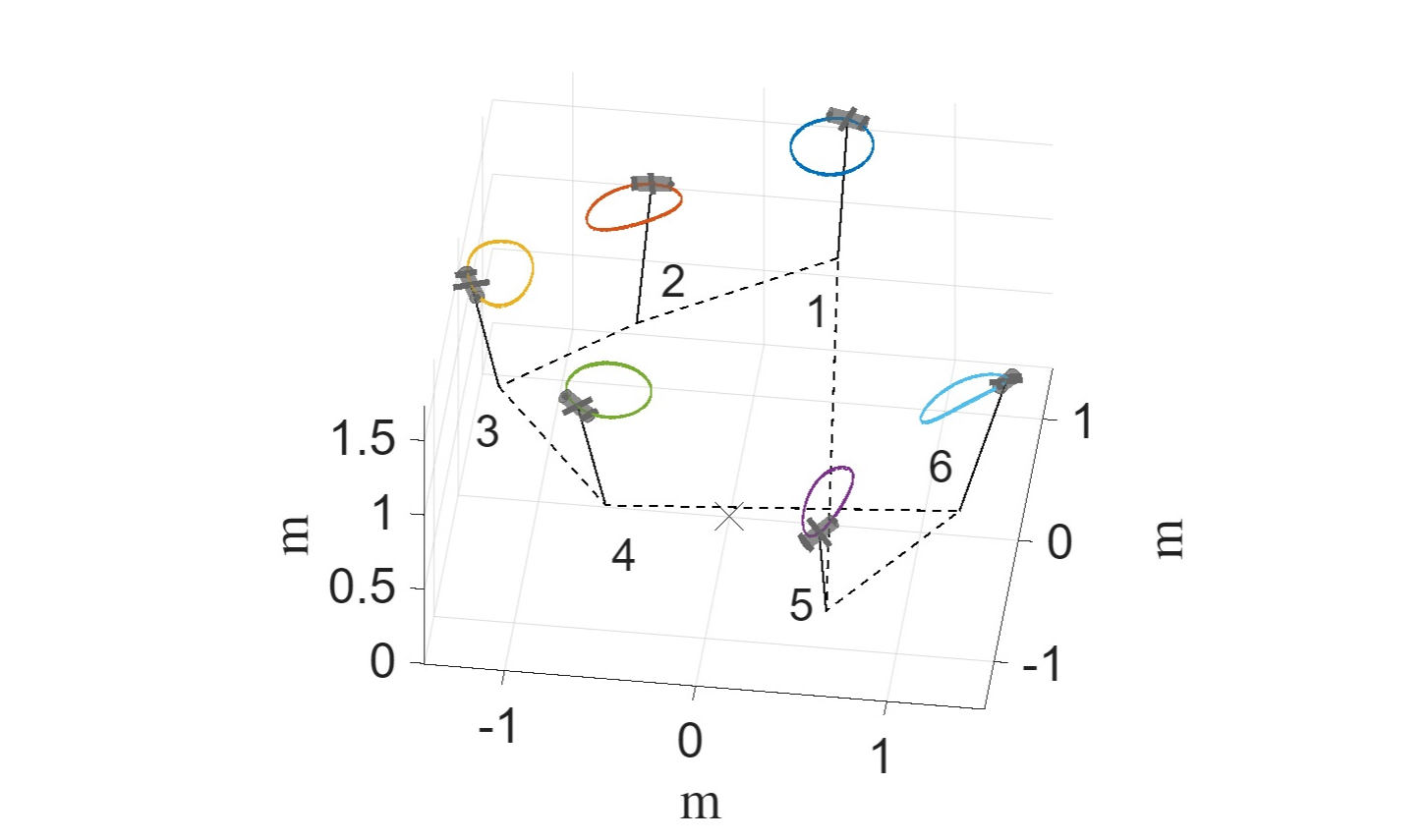}
\includegraphics[width=0.5\columnwidth]{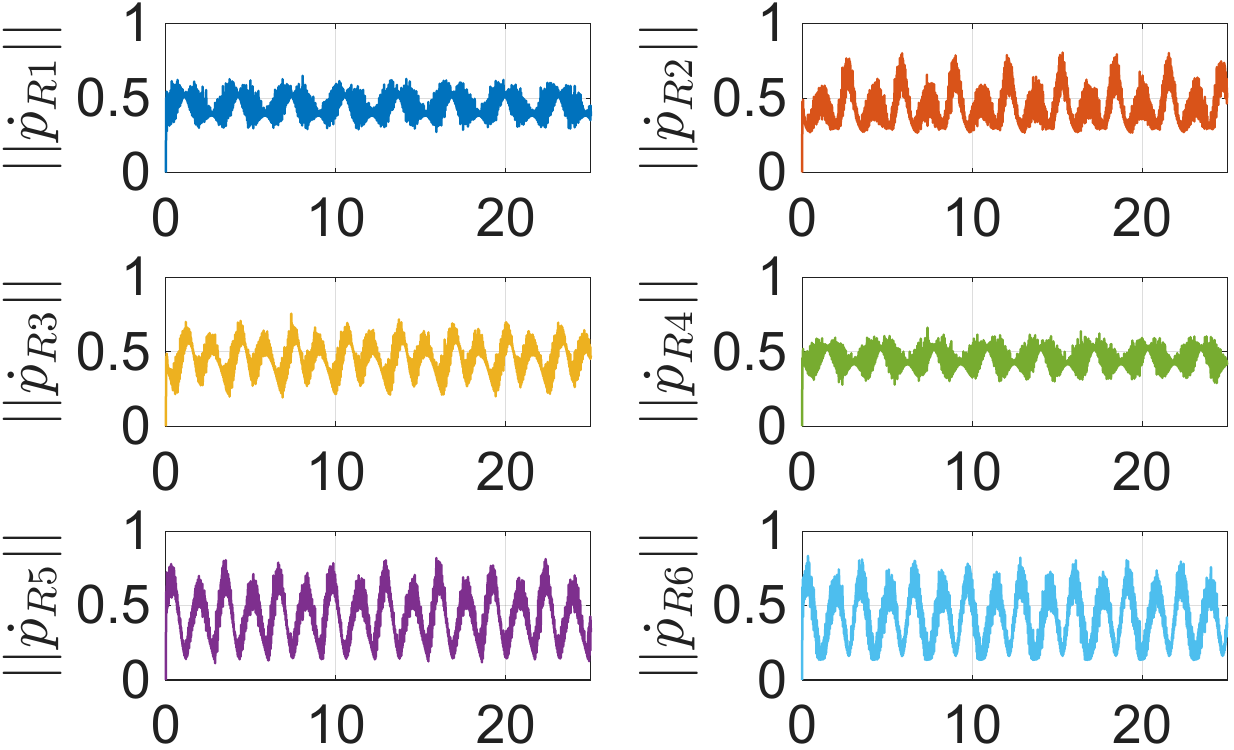}\\\includegraphics[width=0.5\columnwidth]{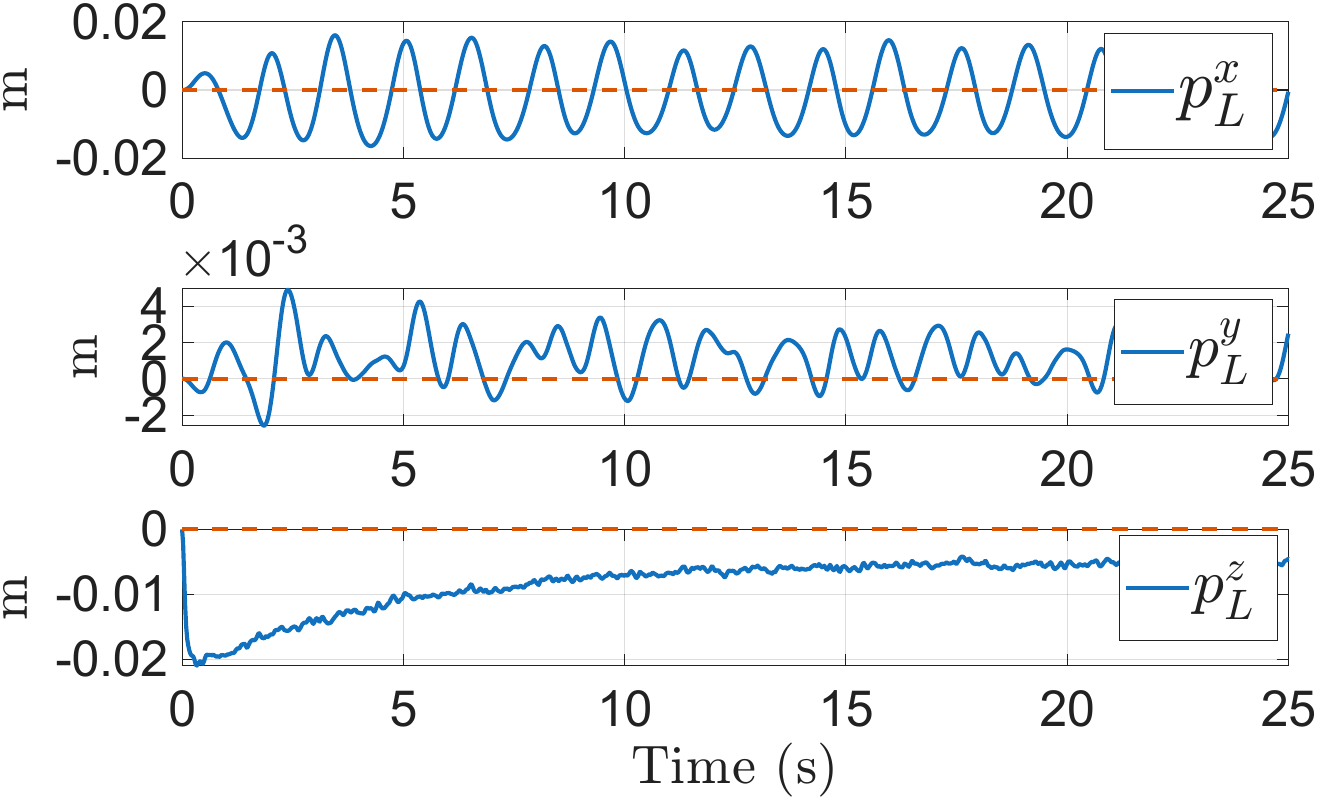}\includegraphics[width=0.5\columnwidth]{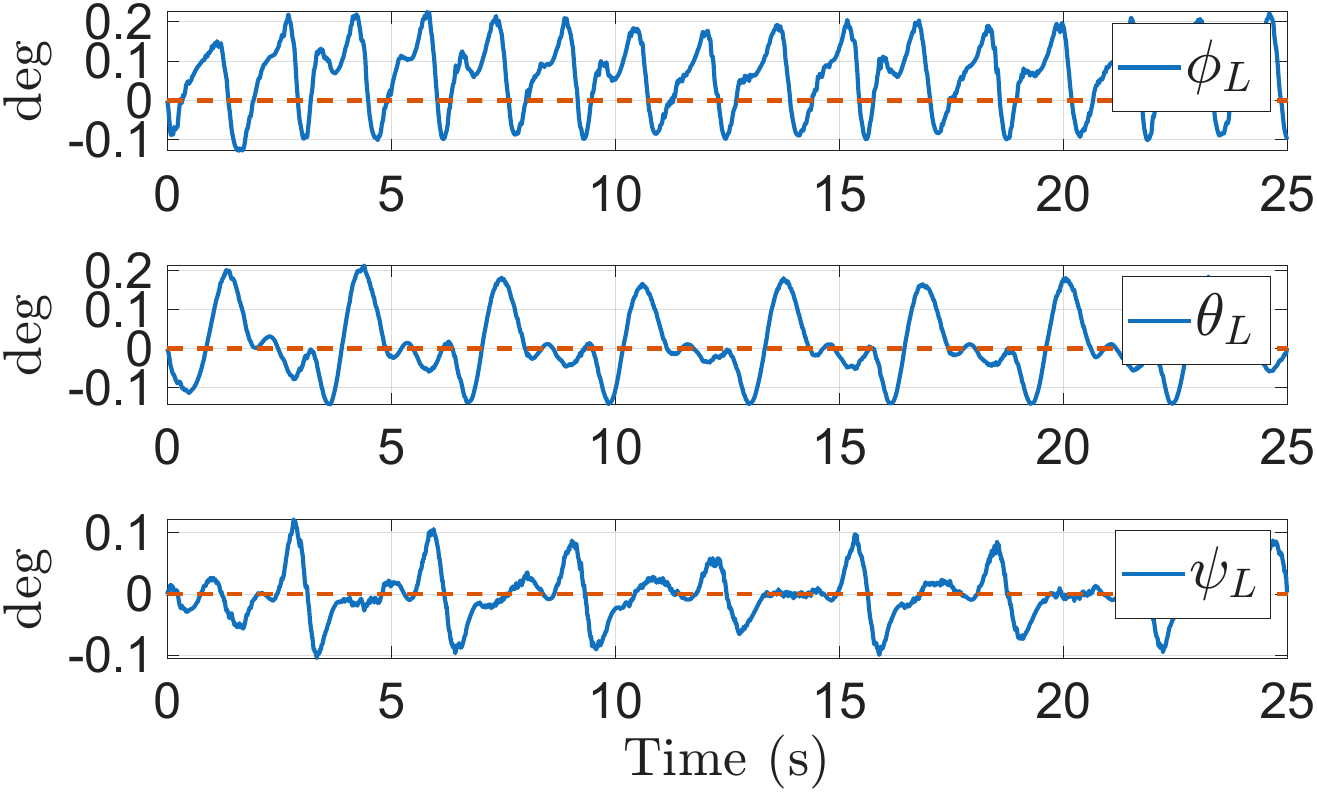}
\caption{\textcolor{black}{6 carriers manipulating a load. In the first column, a schematic representation.
 In the second column, $|\dpR{i}(t)|>0$. In the third and fourth columns, the position and attitude of the load are reported. A small error in the vertical position of the load is caused by the elastic cables' elongation, unknown to the carrier.}}\label{fig:sim_4-6}
\end{figure}

\begin{figure}[t]
    \centering    \includegraphics[width=0.65\linewidth]{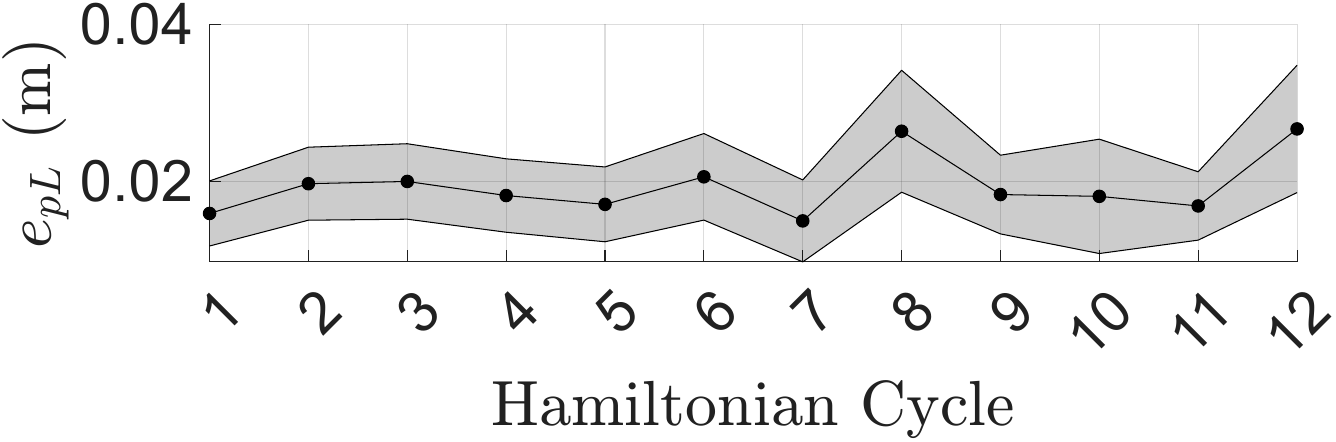}
\includegraphics[width=0.65\linewidth]{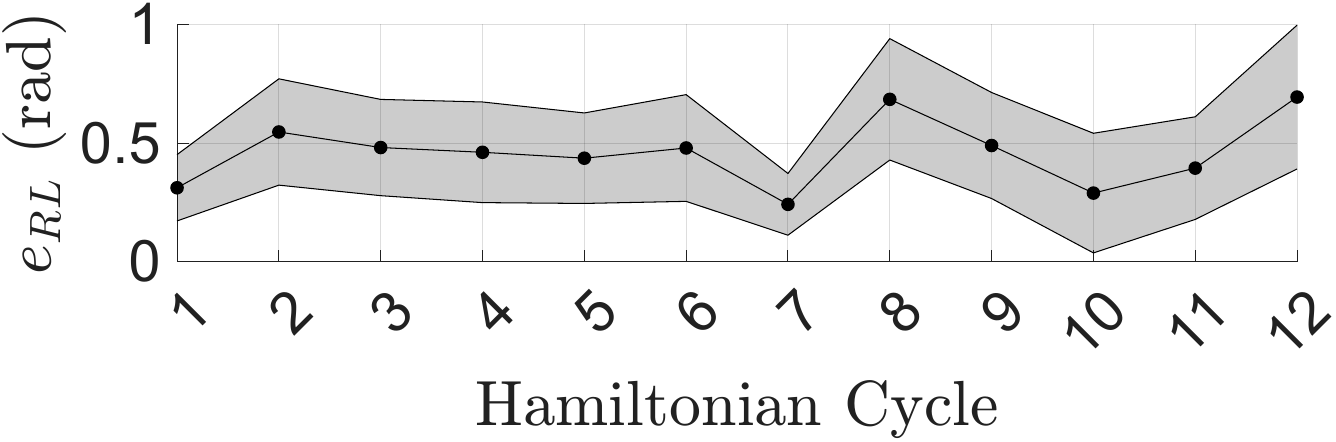}
    
    \caption{\textcolor{black}{Each point corresponds to the mean value of the position(left) and attitude(right) load error when one of the 12 Hamiltonian cycles is selected for a system of $N=5$ carriers. The colored bands correspond to the standard deviation. The choice of the Hamiltonian Cycle has a minimum effect on the resulting load pose errors.}}
    \label{fig:ham_n5}
\end{figure}

In this section, we show simulation results that validate and reinforce the analysis discussed earlier.

The simulations have been carried out in Matlab-Simulink. Each flying carrier is simulated as a double integrator controlled via a PID feedback law to follow a desired trajectory. %The force $f_i$ is generated by the P term as a consequence of the trajectory error (neither a feed-froward nor a integral term is present).  
The carriers' mass is $m=$ 0.1 kg, and the proportional, derivative, \textcolor{black}{and integral} gain matrices are diagonal matrices with values on the diagonal of $100$\,N/m and $10$\,Ns/m, \textcolor{black}{$15\,$ Ns$^2$/m} respectively. To simulate sensor noise, a Gaussian random signal is added to the position and velocity of the carriers, with a standard deviation of $0.005$\,m and $0.01$\,m/s, respectively. The suspended rigid body load has a mass of $1$\,kg and diagonal rotational inertia with diagonal elements equal to 0.01 N$\cdot$m/s. 
Viscous friction with a coefficient of $0.1$\,Ns/m is added to the load's translational and rotational dynamics to simulate friction with the air. The cables are modeled as linear springs with a rigidity of $K_c=500$\,N/m, damping coefficient $B_c=1$\,Ns/m, and a rest length of $0.8$\,m. The load equilibrium configuration is chosen without loss of generality in $\pLEq=[0,\,0,\,0]^\top$, $\rotMatLEq{}=\eye{3}$.

\subsection{The role of the Hamiltonian cycle on the carriers' velocity}

\textcolor{black}{In this section, we validate the results of Sec \ref{sec:uniform}.
We simulate a system with $n=6$ evenly distributed cable attachment points around the center of mass of a planar horizontal load, on a circumference of radius 1.2 m. 
Figure~\ref{fig:6N-alpha} shows how $||\dpR{i}||$ is constant for $\alpha_i=\pi/2$, namely for orthogonal directions of the internal force of the $\ith{i}$ cable. It also shows how its constant value is higher than the minimum value assumed for $\alpha_i\neq\pi/2$, and how such minimum is lower the further $\alpha_i$ gets from $\pi/2$.}

\subsection{\textcolor{black}{Load pose error}}
From now on, the attachment points of the cables on the load are randomly generated in 3D around the position of its CoM and at different altitudes as follows: 
\begin{equation}^B\bm{b}_i=[[\rotMat{\text{z}}(2\pi i/n+\zeta_i)[1.2,0]^\top]^\top,\hat{z}_i]^\top,\label{eq:b_selection_sim}
\end{equation} 
where $\rotMat{\text{z}}()$ is the elementary rotation around $\vE{3}$, $\zeta_i$ is a random angle in $[0,\, 0.2]$\,rad, and $\hat{z}_i$ a random number in $[0,\,1]$\,m, and $n=5$.

This section focuses on the objective of leaving the pose of the suspended load unperturbed. 
The proposed algorithm provides nominal carriers' trajectories that do not perturb the load's equilibrium pose, in a similar way in which internal joint motions of a redundant robotic manipulator leave the end-effector still. However, there may be non-idealities such as noise, tracking errors, etc. that produce a motion of the load. Figures~\ref{fig:sim_4-6} report the results of 6 non-stop carriers keeping the load at its constant pose, where the roll, pitch, and yaw angles describing the load's attitude are respectively indicated as $\phi_L,$ $\theta_L,$ and $\psi_L.$  Note that the norm of each carrier velocity is always greater than zero during the task execution. \textcolor{black}{The load pose error is under 2 degrees for the roll, pitch, and yaw, and the components of the position error are below a few centimeters. A more extensive analysis of the load pose error is provided in the following.}

We may expect that the load's pose error sensitivity to the non-idealities is different for different choices of the Hamiltonian cycle. 
\textcolor{black}{To assess this, we ran 12 simulations,  one for} each of the $(5-1)!/2=12$ Hamiltonian cycles with $N=5$. 
The phases of the periodic functions used to assign $\lambda_i(t)$ have been chosen as in Remark \ref{rmk:coloring} to be able to test any Hamiltonian cycle. 

In Fig.~\ref{fig:ham_n5}, the results are reported. \textcolor{black}{The load position error is defined as 
$\bm{e}_{pL}=||\pL-\pLEq||$ and the load attitude error as $\bm{e}_{RL}=|\psi_L|+|\theta_L|+|\phi_L|$. The dots represent the error mean value and the colored band its standard deviation.}
These results suggest that the selection of the Hamiltonian cycle has an effect on the performance of the proposed method, but that, despite it, the load pose error is kept low.

\textcolor{black}{Moreover, we tested the sensitivity of the load pose to parameter uncertainties besides the noise already present in all the simulations, as described earlier. Specifically, the load mass $m_L$, the carriers' masses $m$, the cable attachment points on the load with positions $\bm{b_i}$ for $i=1,...,N$, and the cable rest length $l_0$ as known to the carriers are perturbed by a relative error as follows. If we consider the parameter $p$, its uncertain value is computed as $p=p(1+\Delta_{\%p})$. Relative errors $\Delta_{\%p}\in[-40\%$, $+40\%]$ have been considered. The results for a system with $N=6$ carriers are reported in Figure~\ref{fig:uncert}.}

\begin{figure}[t]
\includegraphics[width=0.24\textwidth]{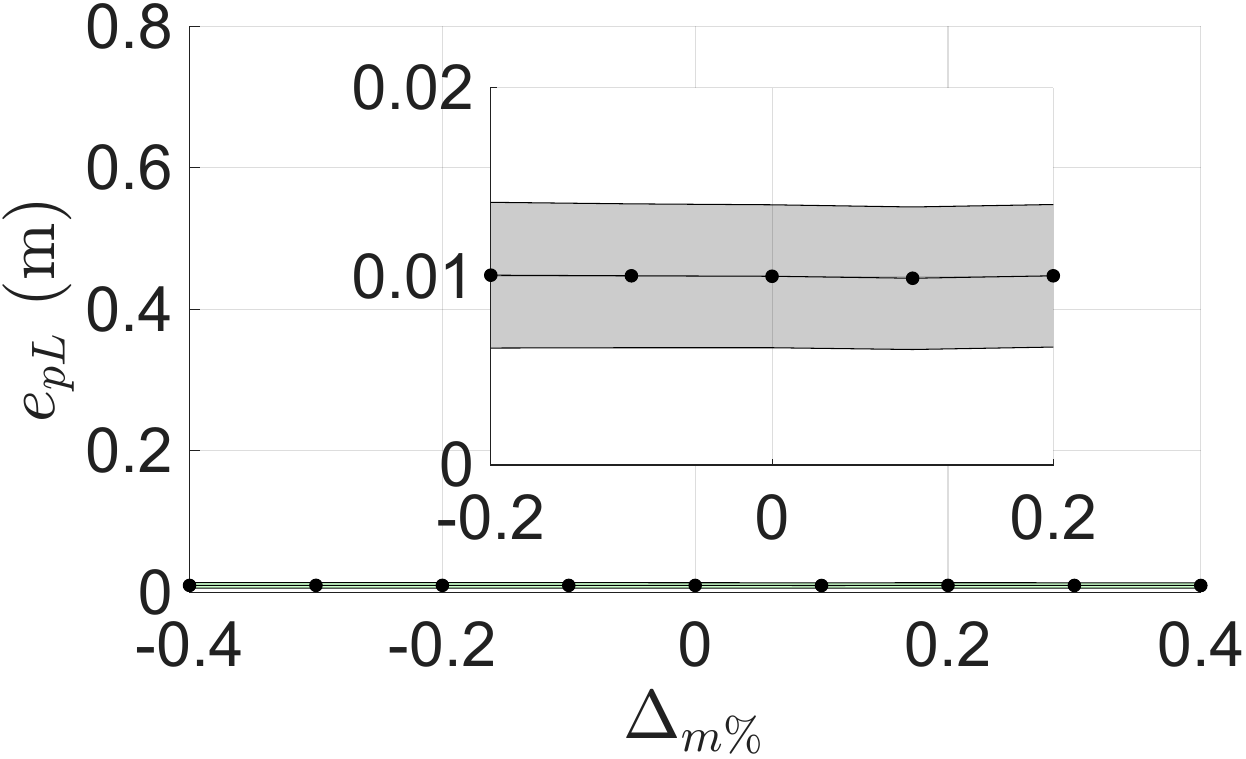} \includegraphics[width=0.24\textwidth,]{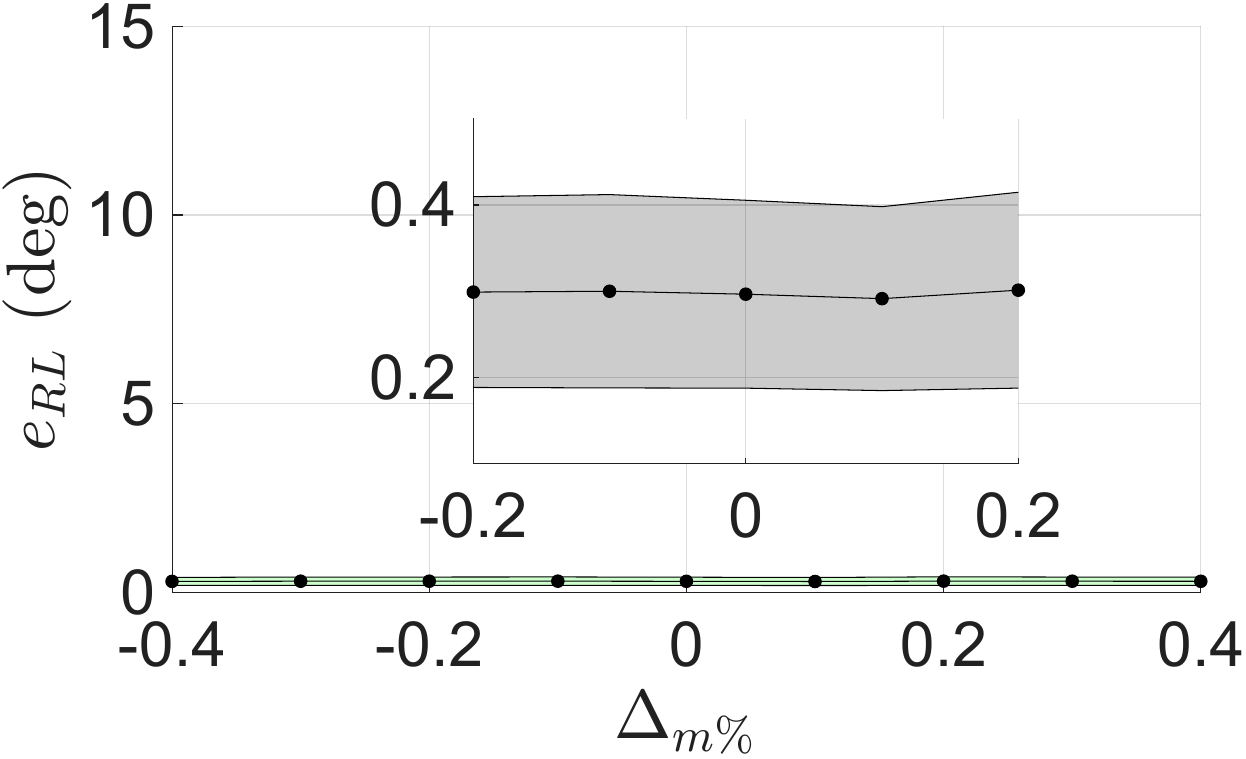}\\
\includegraphics[width=0.24\textwidth]{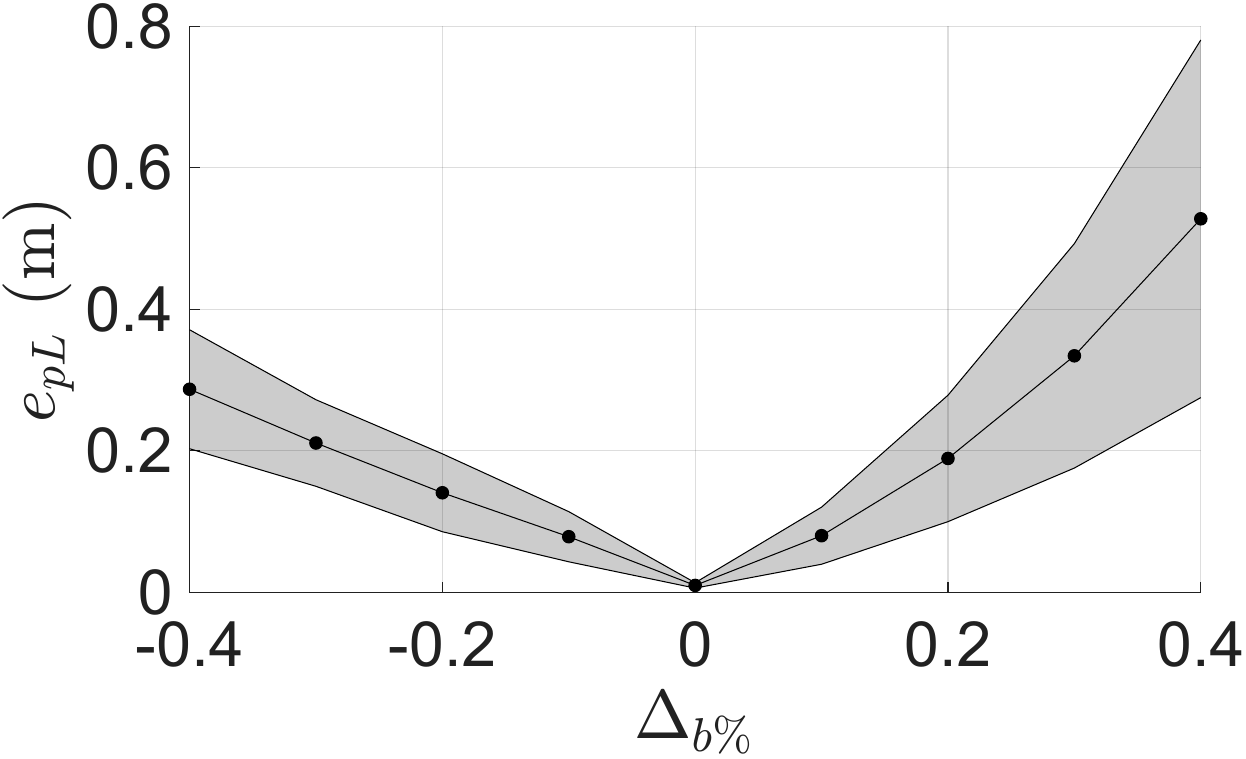} \includegraphics[width=0.24\textwidth]{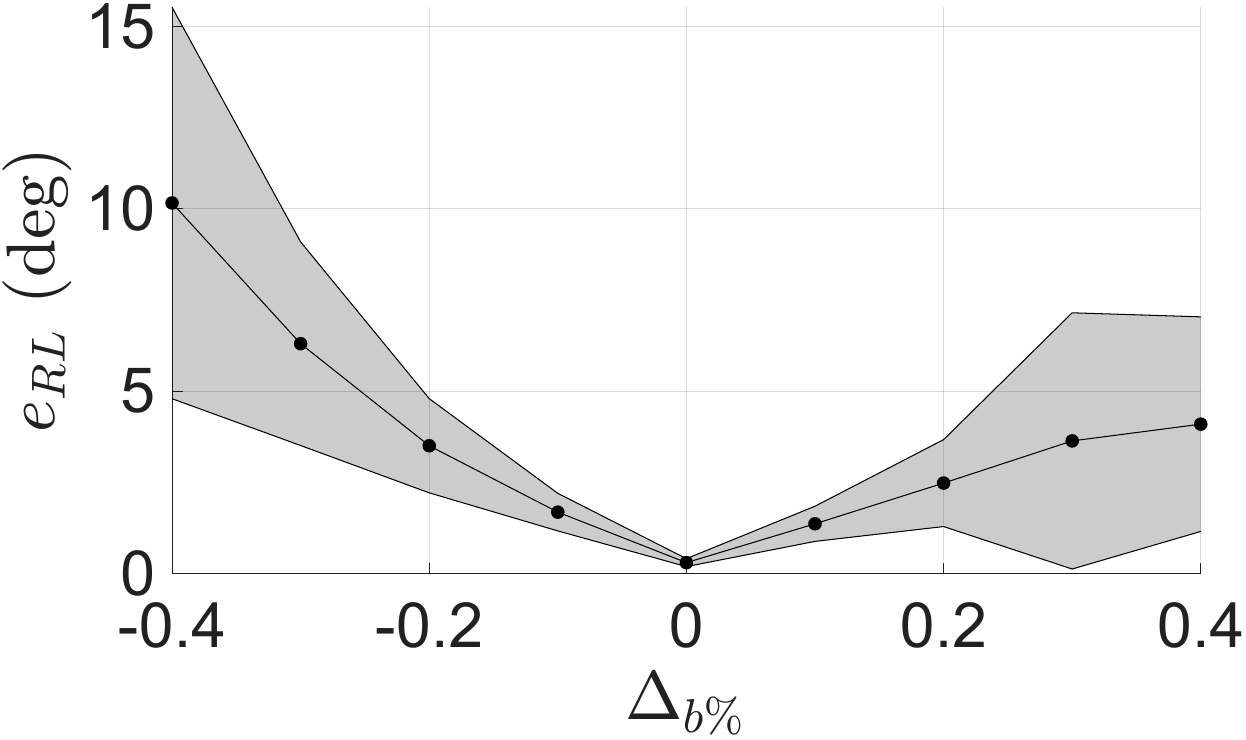}\\
\includegraphics[width=0.24\textwidth]{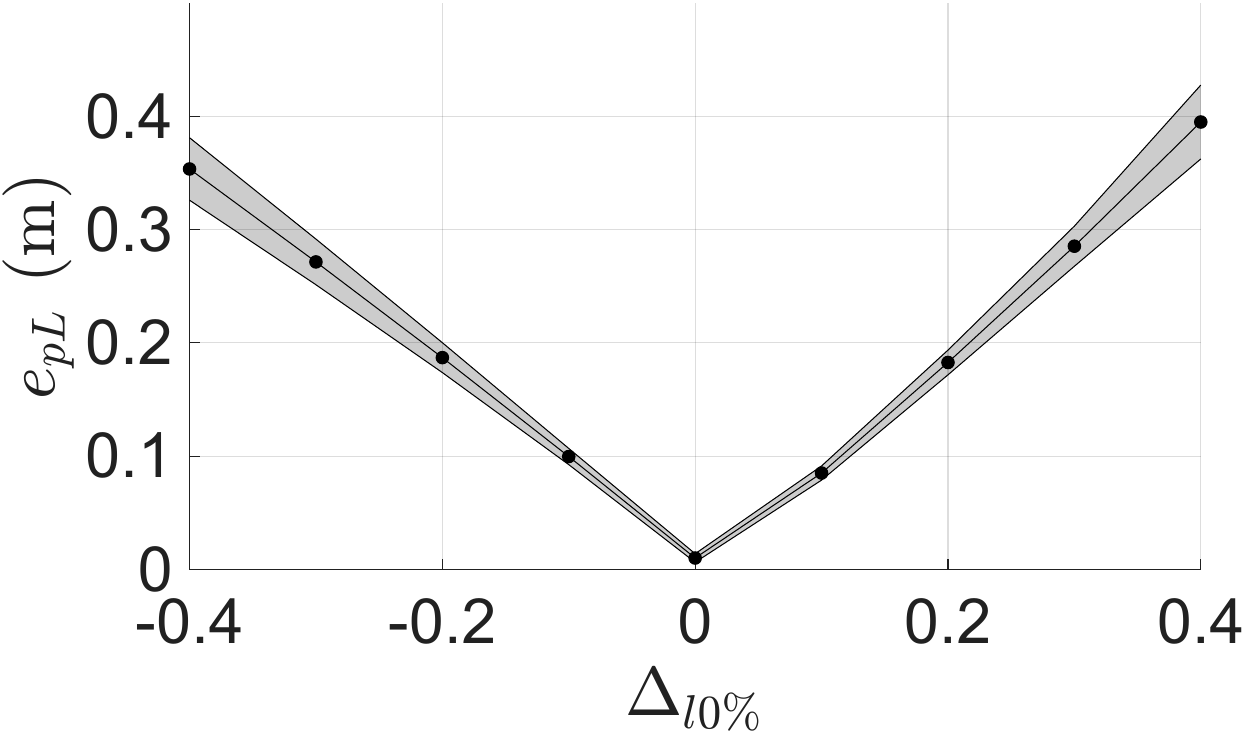}  \includegraphics[width=0.24\textwidth]{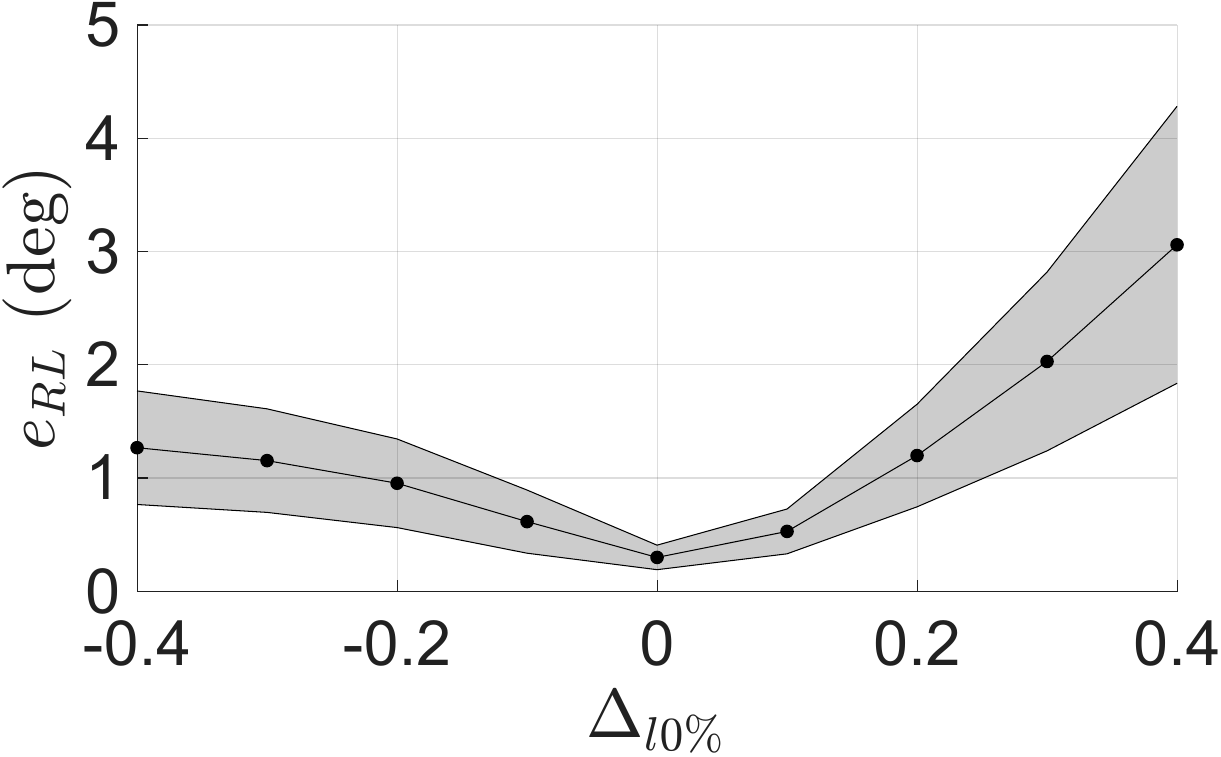}\\
\includegraphics[width=0.24\textwidth]{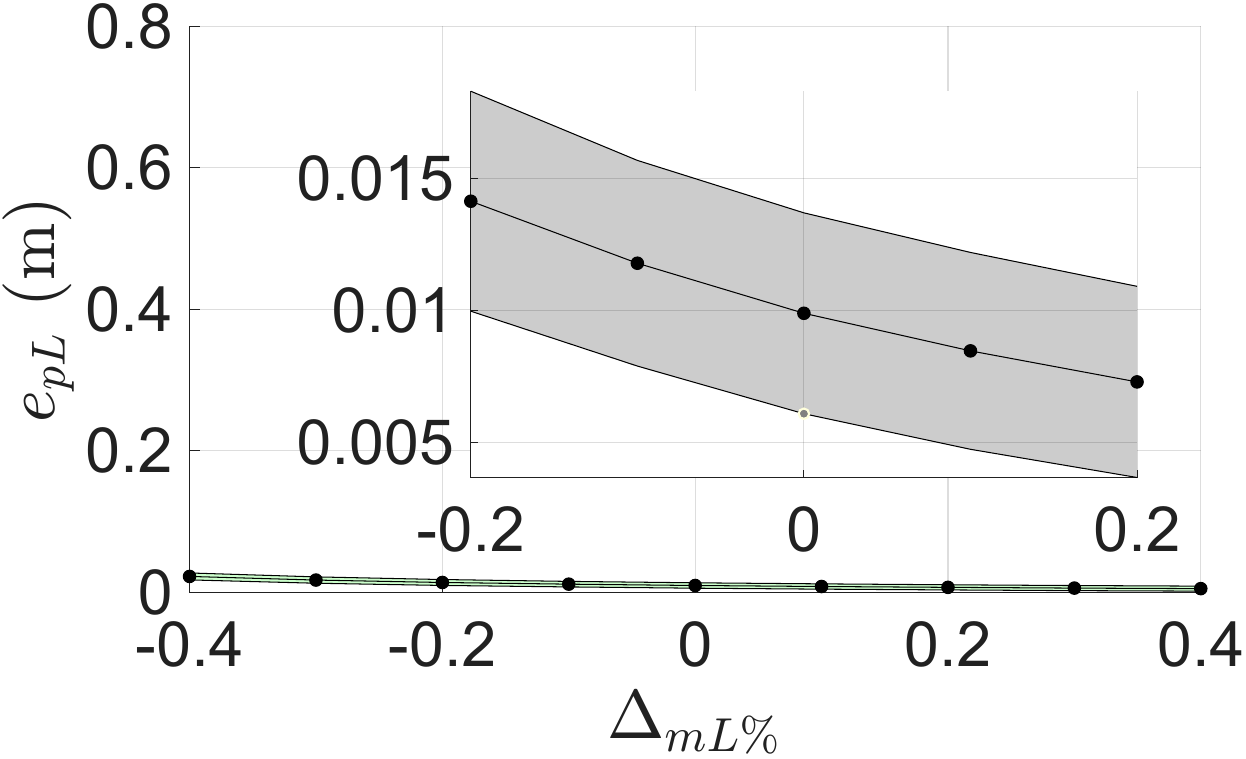} \includegraphics[width=0.24\textwidth]{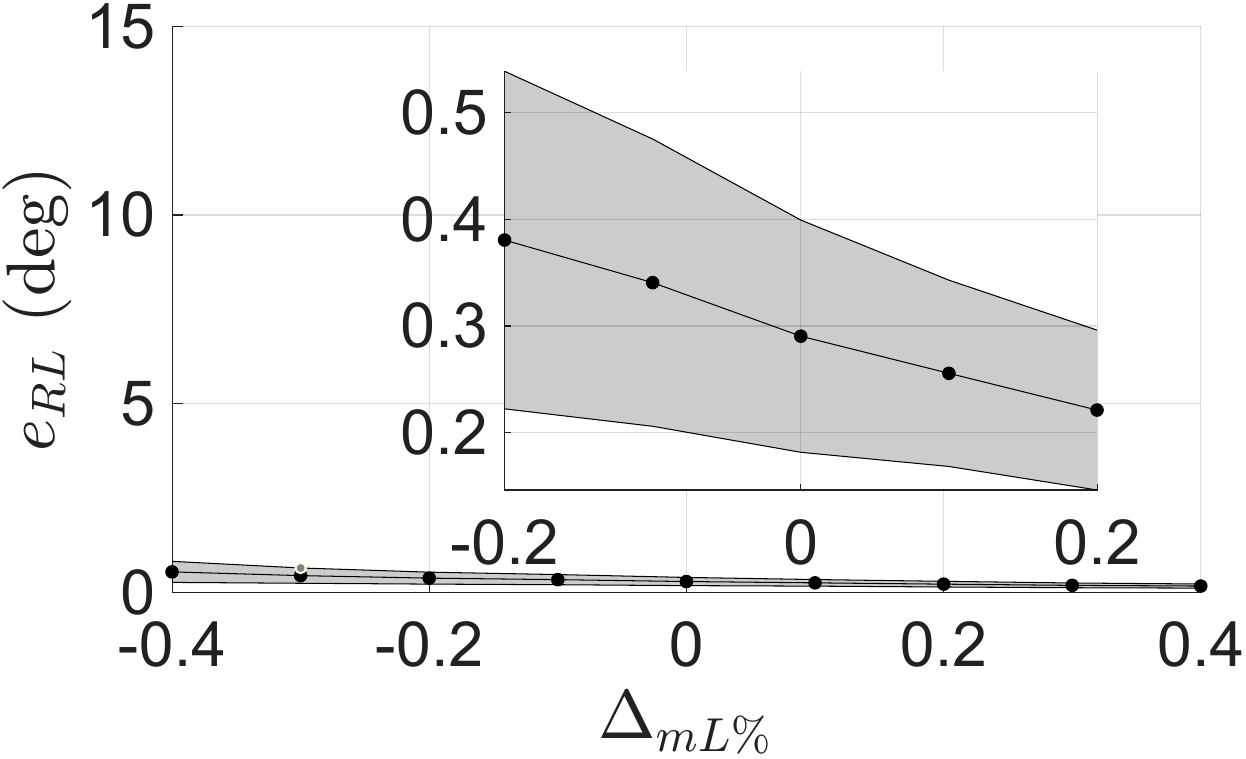}
     \caption{\textcolor{black}{Mean value (dots) and standard deviation (bands) of the load pose errors $\bm{e}_{pL}$  $\bm{e}_{RL}$ for a system with $N=6$ carriers and a relative parameter error as displayed in the x-axes of the plots.} }\label{fig:uncert}
\end{figure}

\textcolor{black}{From the results, we observe that, as expected, an error in the carriers' mass does not reflect an increased load pose error, because the control loop on the carriers' trajectories is closed on their state with an integral term. }

\textcolor{black}{Instead, an error on the cable length or attachment point is not corrected by the method, as no feedback control is performed on the load's state. Hence, the greater the parameter error is, the greater the norm of the load pose errors. The interested reader can find in \cite{girardello2025trajectory} a method proposed by the authors that closes the loop on the load's pose exploiting an online optimization. }

\textcolor{black}{Interestingly, the load pose error caused by an uncertain load mass $m_L$ is not symmetric: if the nominal value of $m_L$ is larger than its actual value, the resulting pose error decreases. We explain this by observing that a large nominal value of the load mass causes the trajectories of the carriers to be narrower, leading to lower tracking errors for the carriers. This is because the effect of the internal force becomes negligible compared to the external wrench in \eqref{eq:forces_mu}. Indeed, the decreasing trend of the load pose errors when increasing the nominal load mass disappears if the simulator is modified to let the cable end-points track the desired trajectories exactly. Eventually, note how the greatest load pose errors caused by $\Delta_{\%mL}\neq0$ are an order of magnitude smaller than those caused by the other considered uncertainties.}

\subsection{Role of the Hamiltonian cycle in case of cable detachment}

\begin{figure}[t]
     \includegraphics[width=0.48\textwidth,trim={3cm 9cm 3cm 8.5cm},clip]{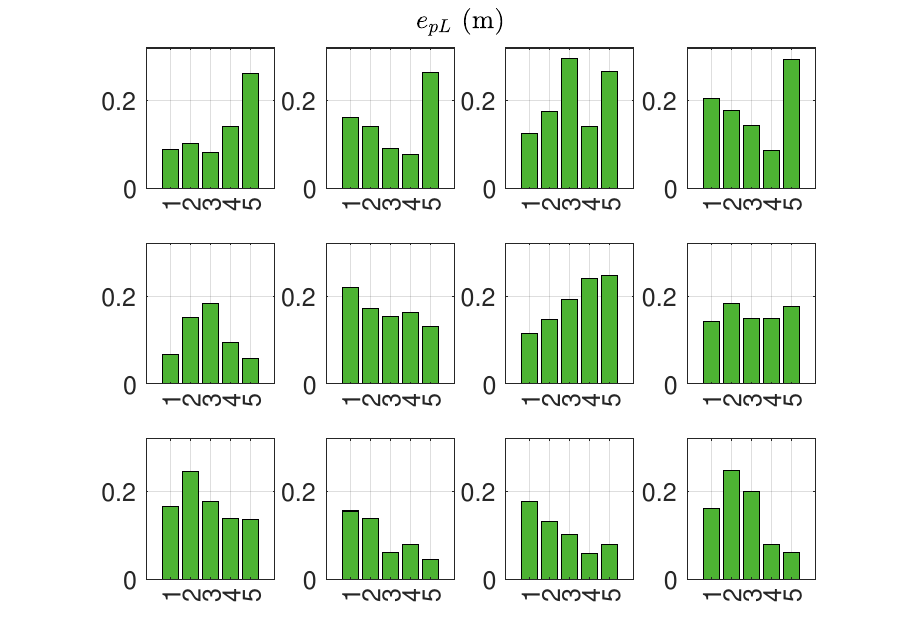}
      \includegraphics[width=0.48\textwidth,trim={3cm 9cm 3cm 8.5cm},clip]{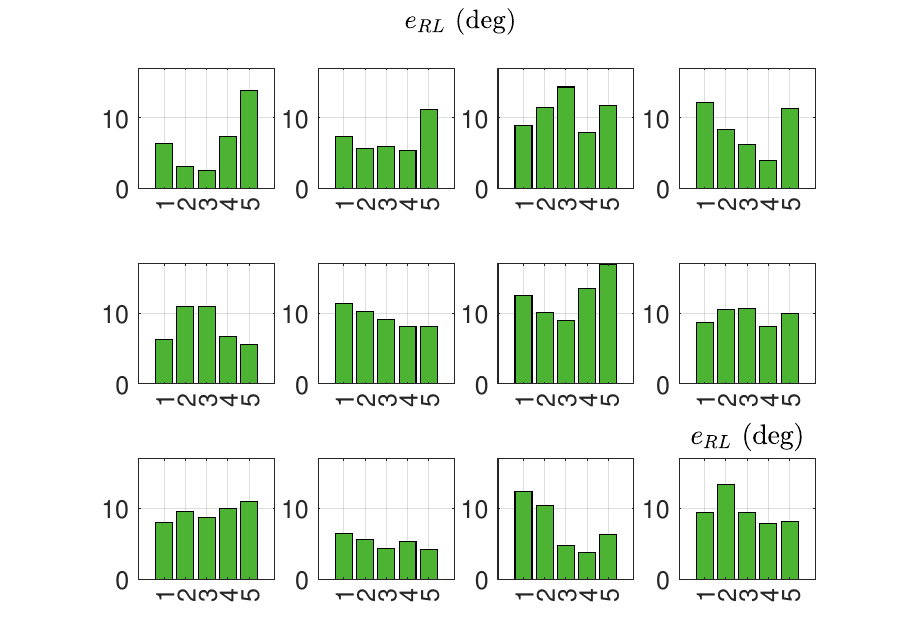}
      \caption{$\bm{e}_{pL}$ (on the bottom) and $\bm{e}_{RL}$ (on the top) for each of the 12 Hamiltonian cycles in a system with $n=5$ carriers. Each bar corresponds to the value of the error when one of the carriers is faulty: it is detached from the system.}\label{fig:faulty}
 \end{figure}

We simulated the case in which one cable breaks and the remaining $n-1$ carriers keep moving along the nominal trajectories. 
We carried out 60 simulation runs on a system with 5 carriers. Of the 60 runs, there are 5 for each of the 12 Hamiltonian cycles. For each cycle, 5 tests have been carried out, in which each of the cables detaches. The results, reported in Fig.~\ref{fig:faulty}, show that, despite a little variability, the errors are overall similar and do not seem to justify a specific choice of the cycle. 
\textcolor{black}{Additional simulation results performed on a planar horizontal load with symmetrically distributed cables confirmed that the error induced on the load by a cable detachment does not depend on the specific cable when $\alpha_i$ are the same for all robots.}

\subsection{\textcolor{black}{Translating load}}
\textcolor{black}{Let us assume that one wishes to apply the same proposed method to a moving load. Note that for a moving load, it may still be desirable to apply the proposed method when it is not feasible for the carriers to simply follow the motion of the load, for instance, if the load moves too slowly for them to keep a desired lower-bound forward speed. In general, the application is not straightforward and requires considerable adjustments of the proposed method, as the authors also elaborate in \cite{girardello2025trajectory}. However, the proposed method can be applied under the simplifying assumptions that (i) the load moves through consecutive static poses, and that (ii) its orientation remains unchanged. Based on the first assumption,~\eqref{eq:f_t_form} holds; based on the second assumption,~\eqref{eq:fidot_mudot} holds, because the grasp matrix is constant, being dependent on the orientation of the load but not its position. The load desired trajectory is a third-order polynomial one along the X-direction, moving by 2 meters in 10 seconds and then remaining static for the rest of the time. In figure~\ref{fig:moving}, we report the results obtained applying the proposed method to a time-varying $\pL$ in \eqref{eq:kinematics}.}

\begin{figure}[t]
    \centering
    \includegraphics[width=0.9\linewidth]{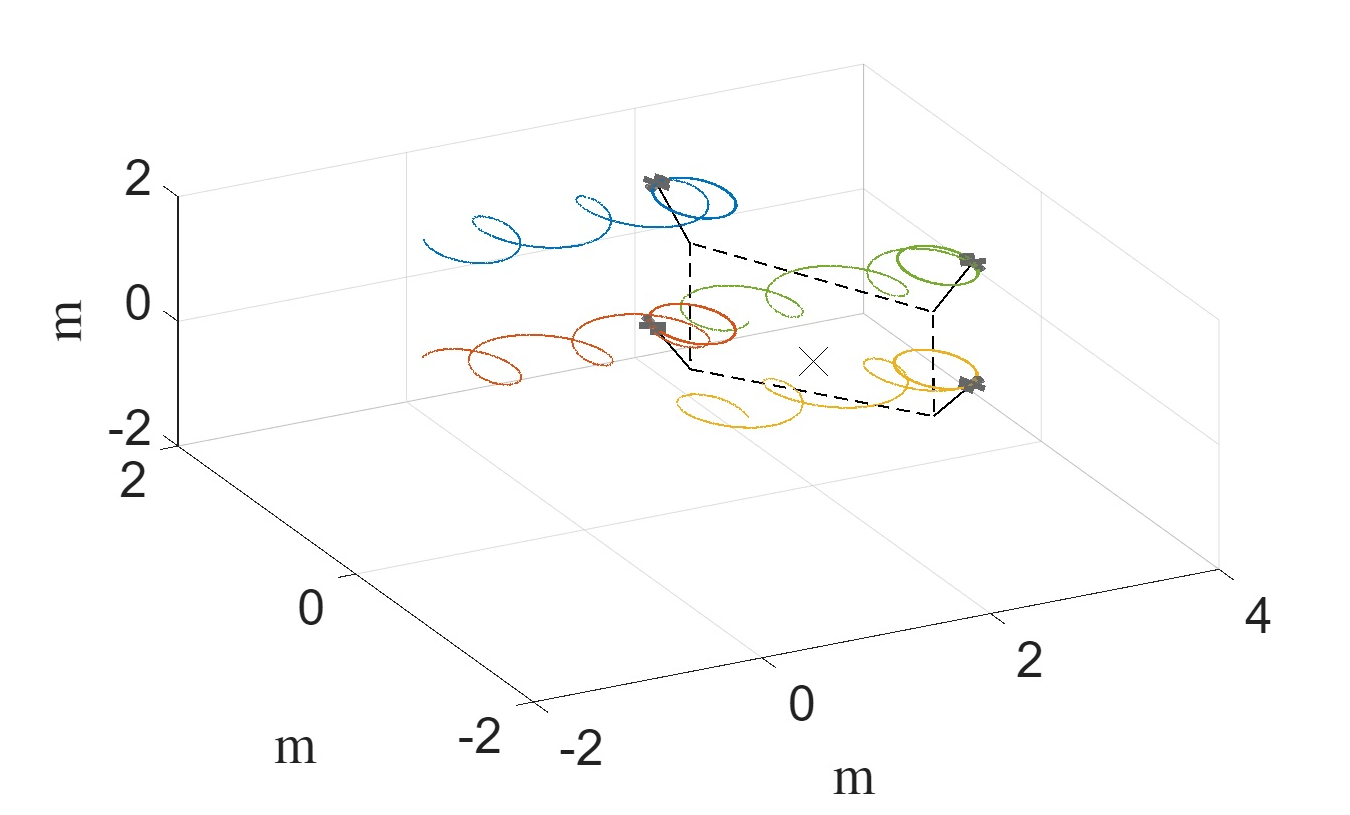}
    \includegraphics[width=0.8\linewidth]{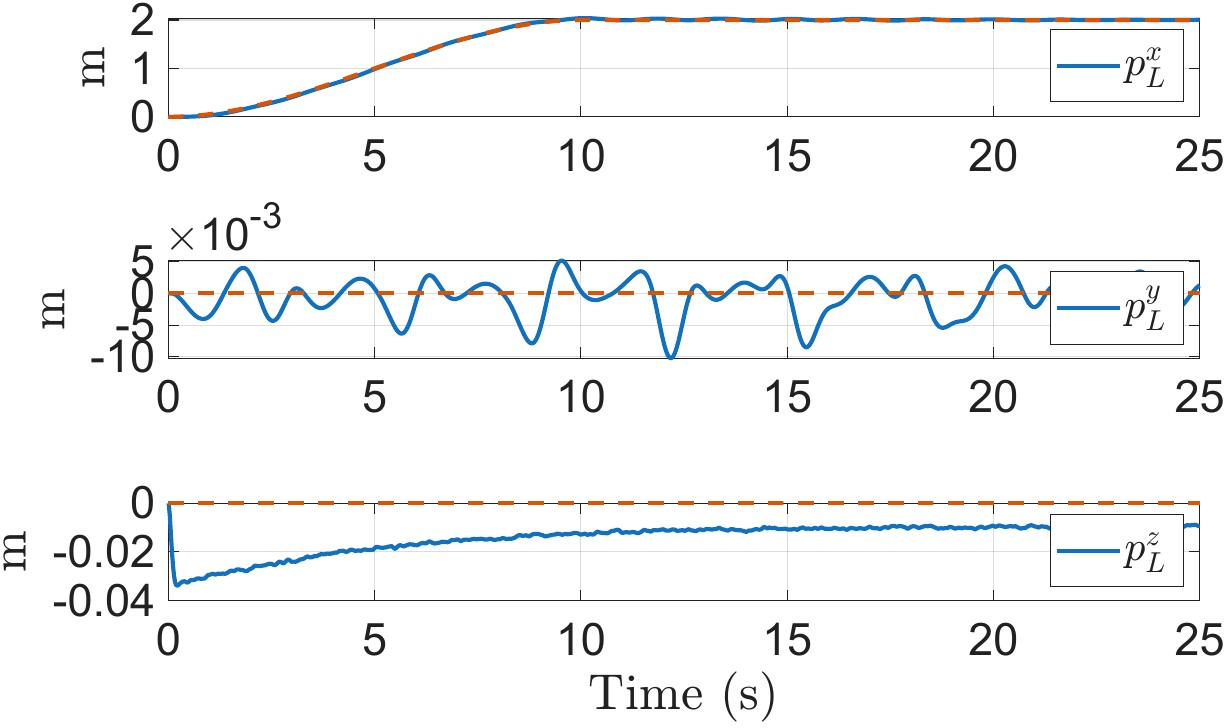}
    \includegraphics[width=0.8\linewidth]{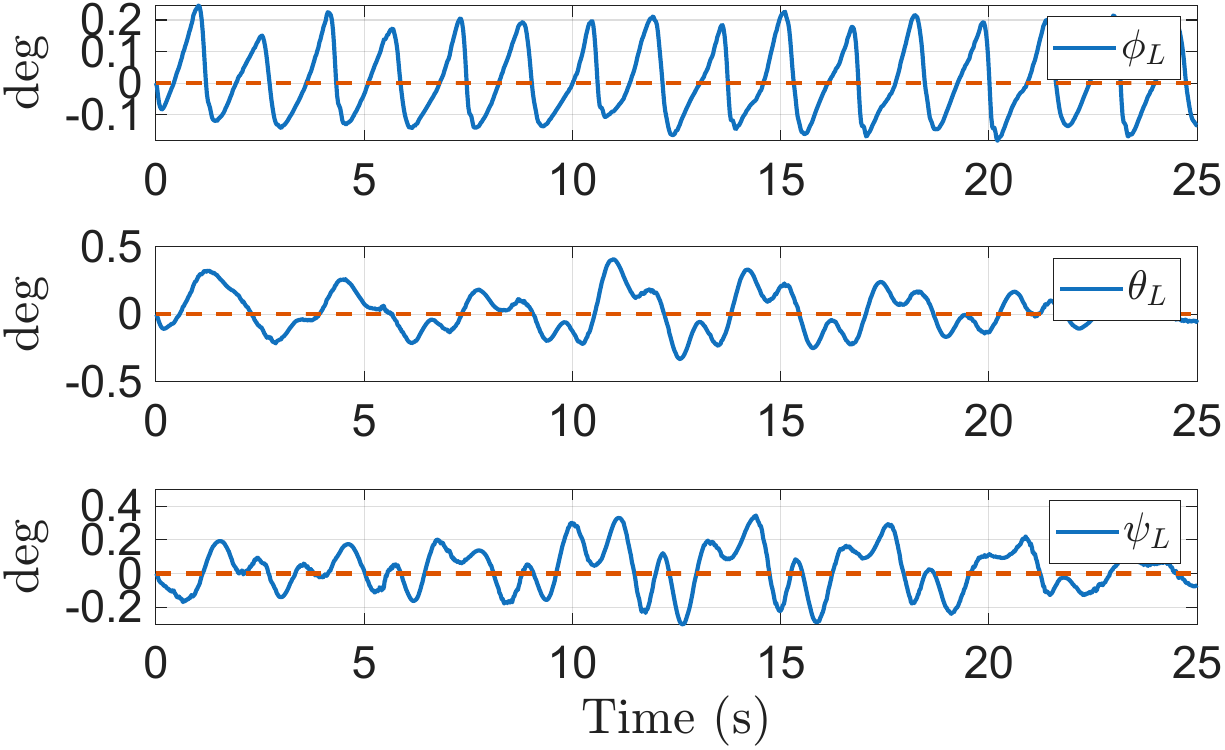}
    \caption{\textcolor{black}{Application of the proposed method to a translating load. The carriers steer the load along the desired trajectory and keep it at the desired final pose without perturbing its orientation. A small error in the load's altitude is caused by the elastic cables' elongations, unknown to the carriers. }}
    \label{fig:moving}
\end{figure}

%%% ===================================
\subsection{\textcolor{black}{Simulations with fixed-wing vehicles}}
%%% ===================================

\begin{figure*}[t!]
    \centering
\includegraphics[width=\linewidth, trim={0 1cm 0 1cm},clip]{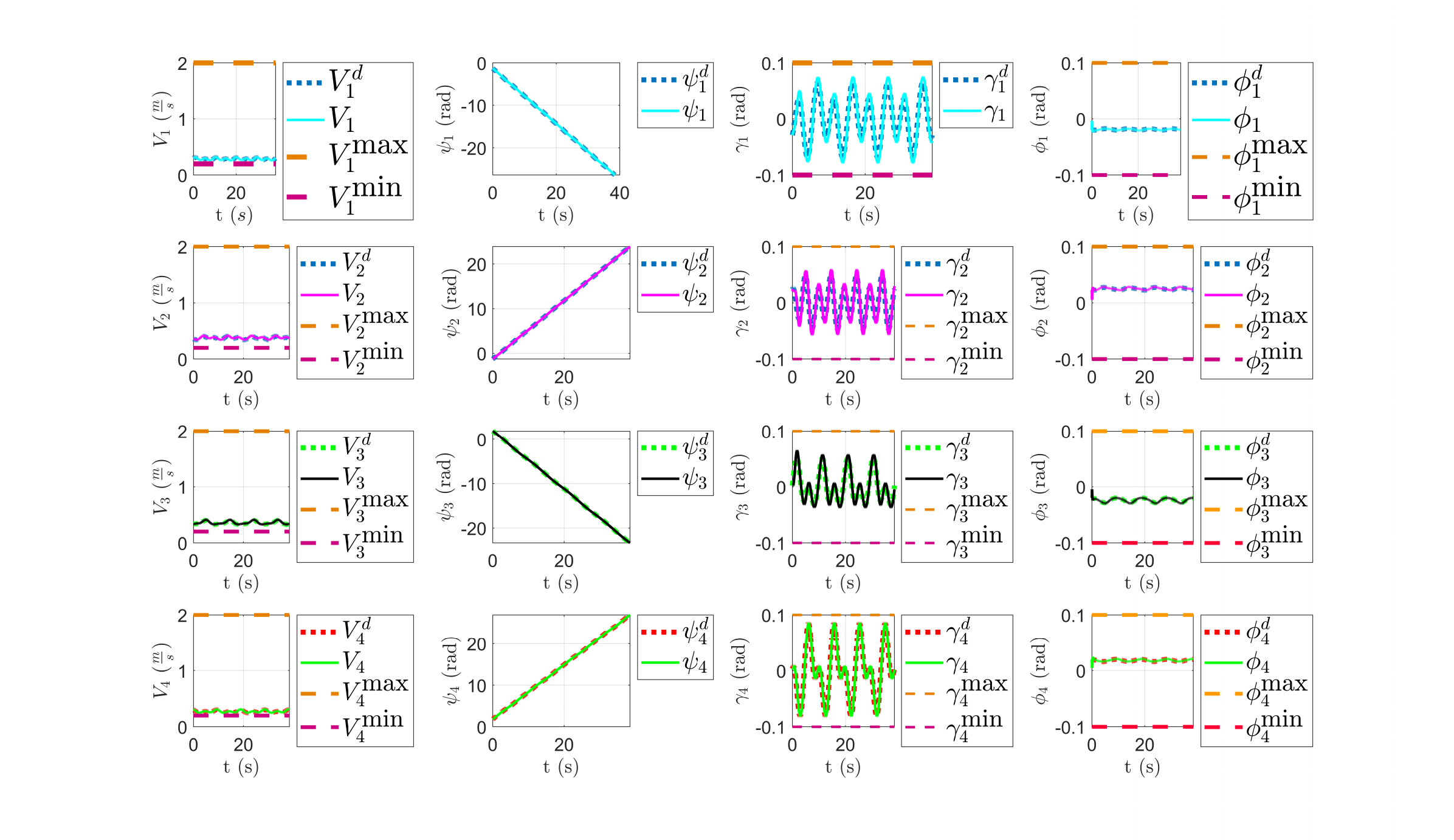}
\caption{\textcolor{black}{Time evolution of the norm of the velocity $V_{i}$, heading angle $\psi_{i}$, flight-path angle $\gamma_{i}$ and banking angle $\phi_{i}$ of the $i^{\textrm{th}}$ fixed-wing UAV in solid line while tracking planned trajectories, displayed as dotted lines.}}
\label{fig:4UAVs_tracking}
 
\end{figure*}
\begin{figure}
        \centering
        \includegraphics[width=\linewidth ]{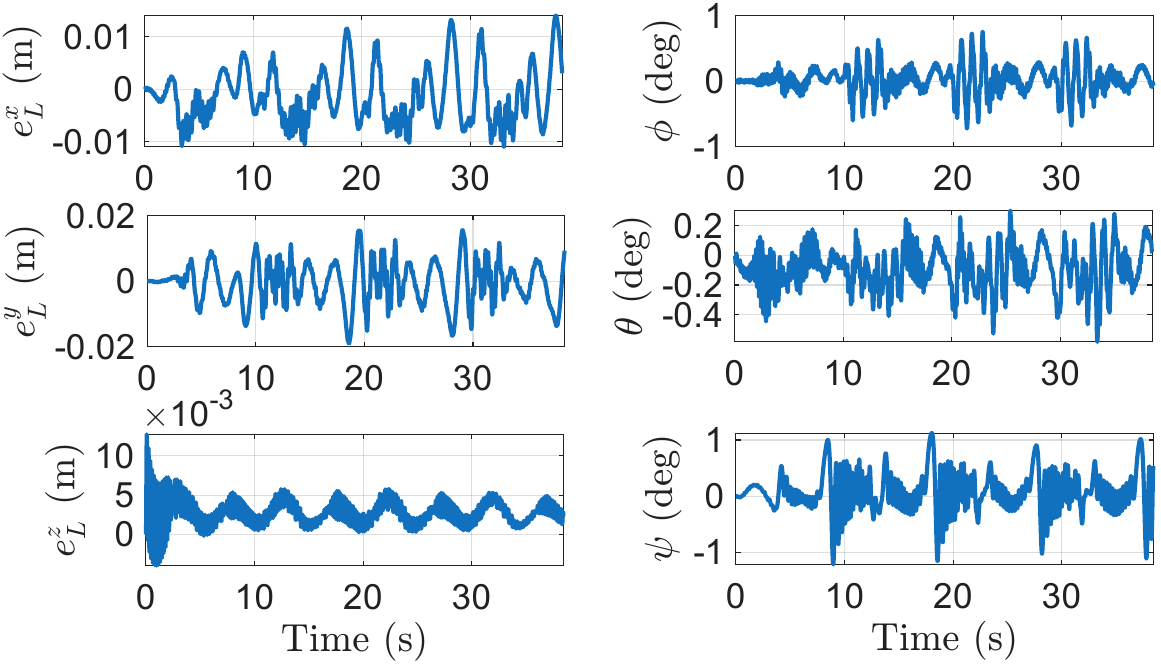}
        \caption{\textcolor{black}{Position (left column) and attitude (right column) error of the load. The proposed method allows keeping the load pose minimally perturbed while UAVs perform non-stop flight.}}
        \label{fig:payload_tracking_error}
\end{figure}
\textcolor{black}{In this section, we report simulation results obtained using a modified path planner and simulator where each carrier is modeled as a kinematics-only 3D Dubins airplane~\cite{owen2015implementing}.
Denote $V_i(A,\xi,t)$, $\psi_i(A,\xi,t)$, $\gamma_i(A,\xi,t)$, $\phi_{i}(A,\xi,t)$ respectively the norm of the velocity, heading angle, flight-path angle and banking angle of the $i^{\textrm{th}}$ UAV, $i \in [1,\dots, 4]$, parameterized with respect to the amplitude $A$ and the frequency $\xi$. We chose $\phi_{2n+2} = \frac{\pi}{2}$, $\phi_{2n+1} = 0$, $n = [0,1]$.
The UAV kinematics is 
\begin{equation}
    \label{Dubins}
       \begin{bmatrix}  \dot{p}_{R_{i}} \\\dot{\psi}_{i}\end{bmatrix}
       =\begin{bmatrix} V_{i}\cos{\psi_i}\cos{\gamma_{i}},\\
       V_{i}\sin{\psi_i}\cos{\gamma_{i}},\\
         -V_{i}\sin{\gamma_{i}}
        \\ \cfrac{g}{V_{i}}\tan{\phi_{i}}\end{bmatrix}
\end{equation}
The path planner is defined as follows:
\begin{mini!}|s|
{A,\xi}{\int_{0}^{t_f} \sum_{i=1}^{4} \kappa_i\dot{V}_i^{2}}
{}{}\label{costfcn}
\addConstraint{V_{i}^{\textrm{min}}\leq V_i \leq V_{i}^{\textrm{max}}}\label{diseq:BoundedNormAcc}
%\addConstraint{\abs{V_i - \frac{V_i^{\textrm{max}}+V_i^{\textrm{min}}}{2}}-\frac{V_i^{\textrm{max}}-V_i^{\textrm{min}}}{2}}{ \leq 0}\label{diseq:BoundedNormAcc}
\addConstraint{\phi_{i}^{\textrm{min}}\leq \phi_{i}\leq\phi_{i}^{\textrm{max}}, \ \gamma_i^{\textrm{min}}\leq\gamma_i\leq\gamma_i^{\textrm{max}}} \label{diseq:BoundedBankAngle}
% \addConstraint{\abs{\phi_{u_i} - \frac{\phi_{u_i}^{\textrm{max}}+\phi_{u_i}^{\textrm{min}}}{2}} - \frac{\phi_{u_i}^{\textrm{max}}-\phi_{u_i}^{\textrm{min}}}{2}}{ \leq 0} \label{diseq:BoundedBankAngle}
% \addConstraint{\abs{\gamma_i - \frac{\gamma_i^{\textrm{max}}+\gamma_i^{\textrm{min}}}{2}} - \frac{\gamma_i^{\textrm{max}}-\gamma_i^{\textrm{min}}}{2}}{ \leq 0} \label{diseq:BoundedFlightPathAngle}
\addConstraint{{A^\textrm{min}\leq A\leq A^\textrm{max}}}\label{diseq:BoundedAmplitude}
\addConstraint{\xi = \frac{2\pi}{t_f}.}\label{eq:ClosedPaths}
% \addConstraint{\abs{A - \frac{A^\textrm{max}+A^\textrm{min}}{2}} - \frac{A^\textrm{max} - A^\textrm{min}}{2}}{ \leq 0}\label{diseq:BoundedAmplitude}
% \addConstraint{\xi}{ = \frac{2\pi}{t_f}.}\label{eq:ClosedPaths}
\end{mini!}
The dependency of the norm of the velocity, banking angle, and flight-path angle of each aerial robot with respect to $A$, $\xi$, $t$ is not made explicit for seek of simplicity.
The cost function \eqref{costfcn} aims at minimizing the energy consumption of each flying carrier within a given period $t_f$. $\kappa_i \in \mathbb{R}$ represents weights. Constraints \eqref{diseq:BoundedNormAcc} impose a bound on the norm of the velocity of the $i^\textrm{th}$ UAV. \eqref{diseq:BoundedBankAngle}  are constraints representing physical limits of each flying carrier \cite{owen2015implementing}. The amplitude of the internal forces is constrained between a minimum and a maximum value as in \eqref{diseq:BoundedAmplitude}, whereas the frequency of the internal forces is constrained to be equal to $\frac{2\pi}{t_f}$ in order to obtain closed pseudo-elliptical-like trajectories. The paths are obtained by parameterizing (1) with respect to $A,\xi$ and plugging into the resulting expression the optimal values of both amplitude and frequency of the internal forces returned by the planner. The criteria so far described is applied to four non-stop fixed-wing UAVs sustaining a homogeneous sphere of radius \SI{1.5}{\meter}, mass \SI{1}{\kilogram} and inertia moment $0.9\bm{I}_{3} \textrm{ kgm}^2$ at desired constant equilibrium $\pLEq = [1/,1/,1/,]^\text{T}$, $\rotMatLEq = \bm{I}_3$. The length of each massless inextensible cable is equal to $\SI{1}{\meter}$. The Hamiltonian cycle chosen is $1 \to 2 \to 3 \to 4 \to 1$. The weights of the cost function $\kappa_i$ are equal to 1 $\forall i \in [1,\dots,4]$. The minimum and maximum values are summarized in the following: $\phi_i^{\textrm{max}} = \gamma_{i}^{\textrm{max}} = -\phi_{i}^{\textrm{min}} = -\gamma_{i}^{\textrm{min}} = \SI{0.1}{\radian}, V_{i}^{\textrm{max}} = \SI{2}{\meter/\second}, V_{i}^{\textrm{min}} = \SI{0.2}{\meter/\second}, A^{\textrm{min}} = \SI{0.3}{\newton}, A^{\textrm{max}} = \SI{4}{\newton}.$ We validated the applicability of the proposed method by building a more realistic simulator in MATLAB Simulink. The model of each aerial robot approximates the behavior of a closed-loop system consisting of an autopilot controller and a 3D kinematics-only fixed-wing model \cite{owen2015implementing}. The cables are modelled as linear springs with a
rigidity of $K_{c} = 500$ N/m, damping coefficient $B_c = {0.1}$ Ns/m
and a rest length of 1 m. Cables' tensions are assumed to affect only the payload, not the aerial vehicles. As a consequence, cable force compensation is not accounted for in the UAV's control design. 
The results are shown in Figure  \ref{fig:4UAVs_tracking},  and \ref{fig:payload_tracking_error}.}

\section{Experimental Results}\label{sec:exp}
\begin{figure}[t]
    \centering
\includegraphics[width=0.47\columnwidth]{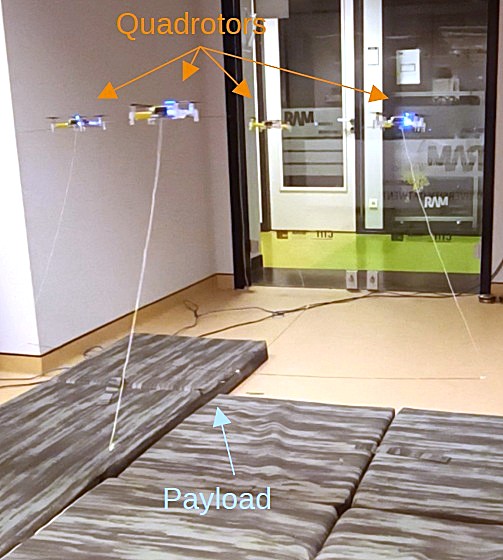}\quad\includegraphics[width=0.47\columnwidth]{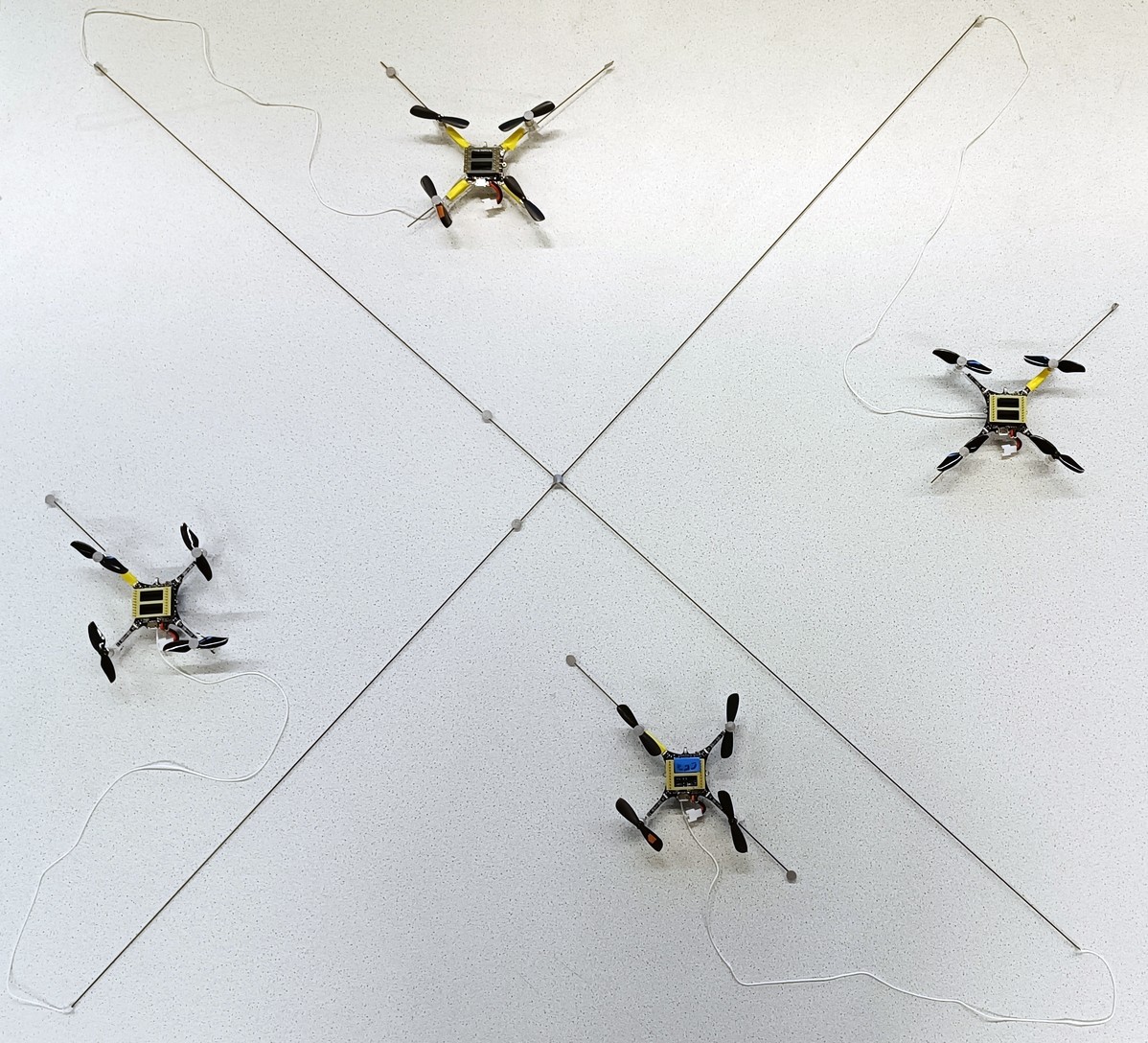}
\caption{\textcolor{black}{Experimental setup. On the right, a closer look at the robotic platform. 
Two carbon fiber bars glued to each other form the payload. Four cables are attached to the endpoints of the bars.}}\label{fig:setup}
\label{fig:setup}
\end{figure}
\begin{figure*}[!h]
    \centering
    \includegraphics[width=0.9\textwidth]  {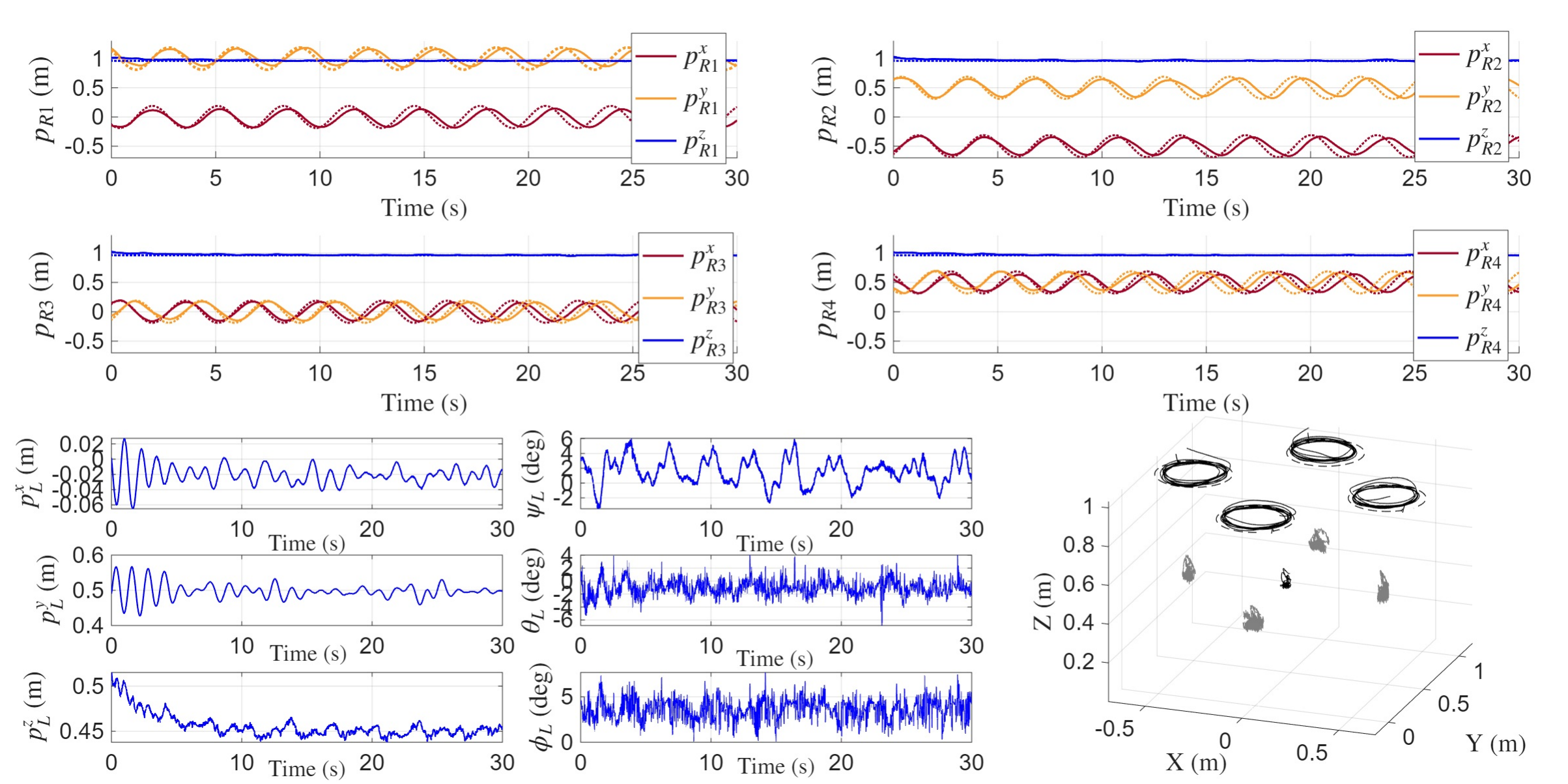} \\
    {(a) With the proposed method}
    \includegraphics[width=0.88\textwidth]{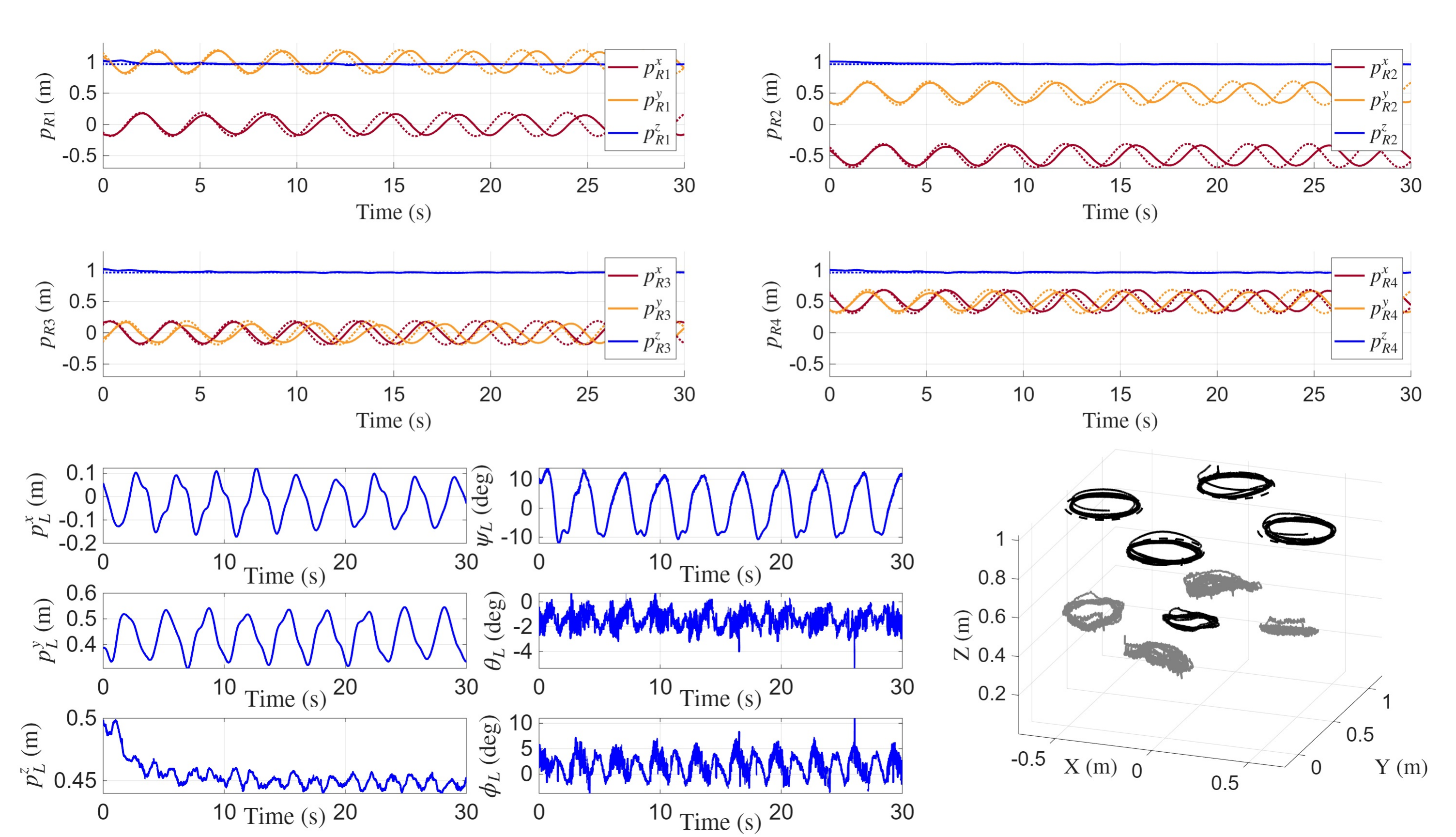}\\
    {(b) Without the proposed method}
   \caption{\textcolor{black}{Experimental results showing the position of the four UAVs (the dotted lines of the same color represent reference values), the position and attitude of the load, and an aggregated plot with the UAV reference (dotted lines) and actual (black solid lines) positions, $\pL$  in black, and the position of cable anchoring points on the load displayed in grey. (a) Proposed method: the load is kept at a static pose while the robots move. (b) One of the quadrotor trajectories is delayed: the quadrotors fail to keep the object static.}}
    \label{fig:exp_2_plot}
\end{figure*}
\begin{figure}[!h]
    \centering
    \includegraphics[width=\linewidth]{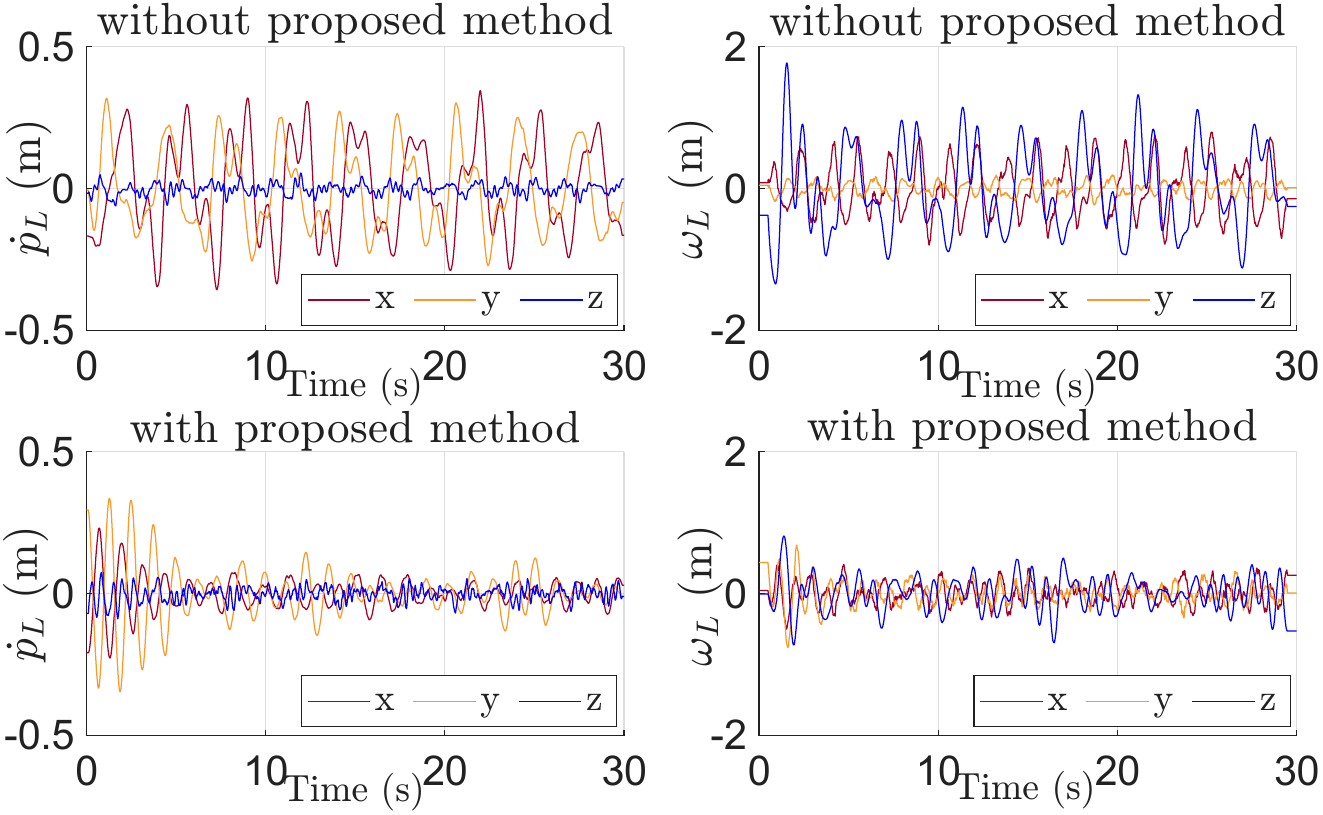}
    \caption{\textcolor{black}{Velocity of the suspended object without the proposed method (top) and with it (bottom). The proposed method effectively keeps the object velocities close to zero.}}
    \label{fig:exp_vel}
\end{figure}

This section outlines experiments conducted to validate the proposed manipulation methodology using real UAVs. The objective is to demonstrate that UAVs, by tracking the trajectories generated with the proposed method, can maintain motion while minimally disturbing the pose of the suspended load, effectively keeping it stationary.

Given the complexity of conducting experiments with continuously moving UAVs, such as fixed-wing or convertible UAVs, the tests were performed using quadrotors. The mechatronics and experimental efforts required to implement tests with fixed-wing UAVs constitute significant research in their own right and are therefore deferred to future work.

The experimental setup is shown in Fig.~\ref{fig:setup}, \textcolor{black}{where four quadrotors were tethered to the four endpoints of a cross-shaped payload via four cotton cables, each 0.50 m in length. Two 1.5\,mm-diameter and 1\,m-length carbon fiber bars perpendicularly glued to each other at the center point form the 0.001\,kg payload.}    Crazyflie 2.1 quadrotors were used, with the Crazyswarm software framework employed for data communication, multi-drone control, and state estimation~\cite{crazyswarm}. An offboard computer transmitted pre-computed reference positions (calculated offline using the proposed method) to the UAVs via radio communication. An external motion capture system (Optitrack, NaturalPoint, Inc.) provided positional feedback for UAV position tracking. 

\textcolor{black}{The payload's reference was $\pLEq=[0,\,0.5,\, 0.5] ~$\,m  and zero roll, pitch and yaw}. Experimental results, shown in Fig.~\ref{fig:exp_2_plot}(a), revealed an average load position error %$\bm{e}_{pL}$ of $0.016$\,m with a standard deviation of $0.007$\,m for test 1 and average position error 
\textcolor{black}{ of $0.056$\,m with a standard deviation of $0.009$\,m. The average yaw error was $1.96$\,deg, the pitch error $1.33$\,deg, and the roll error $3.65$\,deg.} Disparities in cable lengths contributed to the roll error. It is important to note that no feedback on the load's state was implemented, so zero error was not expected. \textcolor{black}{The success of the experiment is instead that the load's pose remained nearly constant throughout the task, with standard deviations of $1.34$\,deg,  $0.91$\,deg, and $1.15$\,deg for yaw, pitch, and roll angles, respectively.}

\textcolor{black}{To further support the proposed trajectory generation method, we carried out a similar experiment where the trajectory of one of the quadrotors is delayed by half of its period, while the others' remain unchanged. The results, collected in Fig.~\ref{fig:exp_2_plot}(b), show that the load moves considerably. Figure~\ref{fig:exp_vel} collects the load's linear and angular velocity in the two experiments, i.e., with and without the proposed method; one can see how the proposed method is effective in keeping the object's velocities close to zero.}

The experimental data confirm that, \textcolor{black}{with the proposed method}, the UAVs maintained loitering flight with nonzero forward speed while holding the load at a constant pose.
\subsection{Multimedia Material}
 Multimedia videos showing  simulations and experimental results can be watched at \url{https://youtu.be/mt2vVczUwW4}  

\section{Conclusions and Future Work}\label{sec:conclusion}
This work demonstrated the compatibility of non-stop carrier flights with keeping the pose of cable-suspended loads constant for any $n\geq3$ non-stop flying carriers. We proposed and formally derived a novel method to generate coordinated trajectories, allowing the non-stop carriers to hold the load in a static equilibrium. We reported numerical and experimental results in support of the method.

Relevant future work concerns the design of an optimal planning algorithm that generates smooth non-stop trajectories accounting for obstacle avoidance. 
A  more realistic simulator will be developed to start assessing the practical applicability of the manipulation method. From an analytical point of view, the identification of families of trajectories compatible with the considered problem and the rigorous study of how the method's parameters affect such trajectories remain an open and interesting point. \textcolor{black}{Another interesting direction for extending the work is studying the most general internal-force allocation that does not restrict the internal force of each cable on a 2D plane.}  Eventually, experimental tests will be carried out on non-stop flying carriers after considering possible necessary \textcolor{black}{hardware and software} vehicle adaptations. \textcolor{black}{The energy efficiency of the proposed manipulation concept will then be assessed experimentally.}

\bibliographystyle{ieeetr}
\bibliography{biblio.bib}
\begin{IEEEbiography}[{\includegraphics[width=1in,height=1.25in,clip,keepaspectratio]{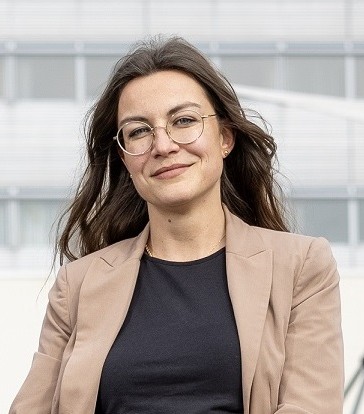}}]{Chiara Gabellieri} received her Ph.D. in Information Engineering in 2021, her MSc. in Robotics and Automation Engineering with honors in 2017, and her BSc. in Bioengineering in 2014 from the University of Pisa. She is an Assistant Professor in the Robotics and Mechatronics group at the University of Twente, in the Netherlands, where she also received the Marie Skłodowska-Curie postdoctoral fellowship in 2022. She was at LAAS-CNRS in Toulouse, France, in 2018, and a visiting Ph.D. student at the German Aerospace Center (DLR) in 2020. She is an Associate Editor (AE) for the IEEE RA-L.  She works on modeling and control for aerial robotic manipulation. 
\end{IEEEbiography}
\begin{IEEEbiography}[{\includegraphics[width=1in,height=1.25in,clip,keepaspectratio]{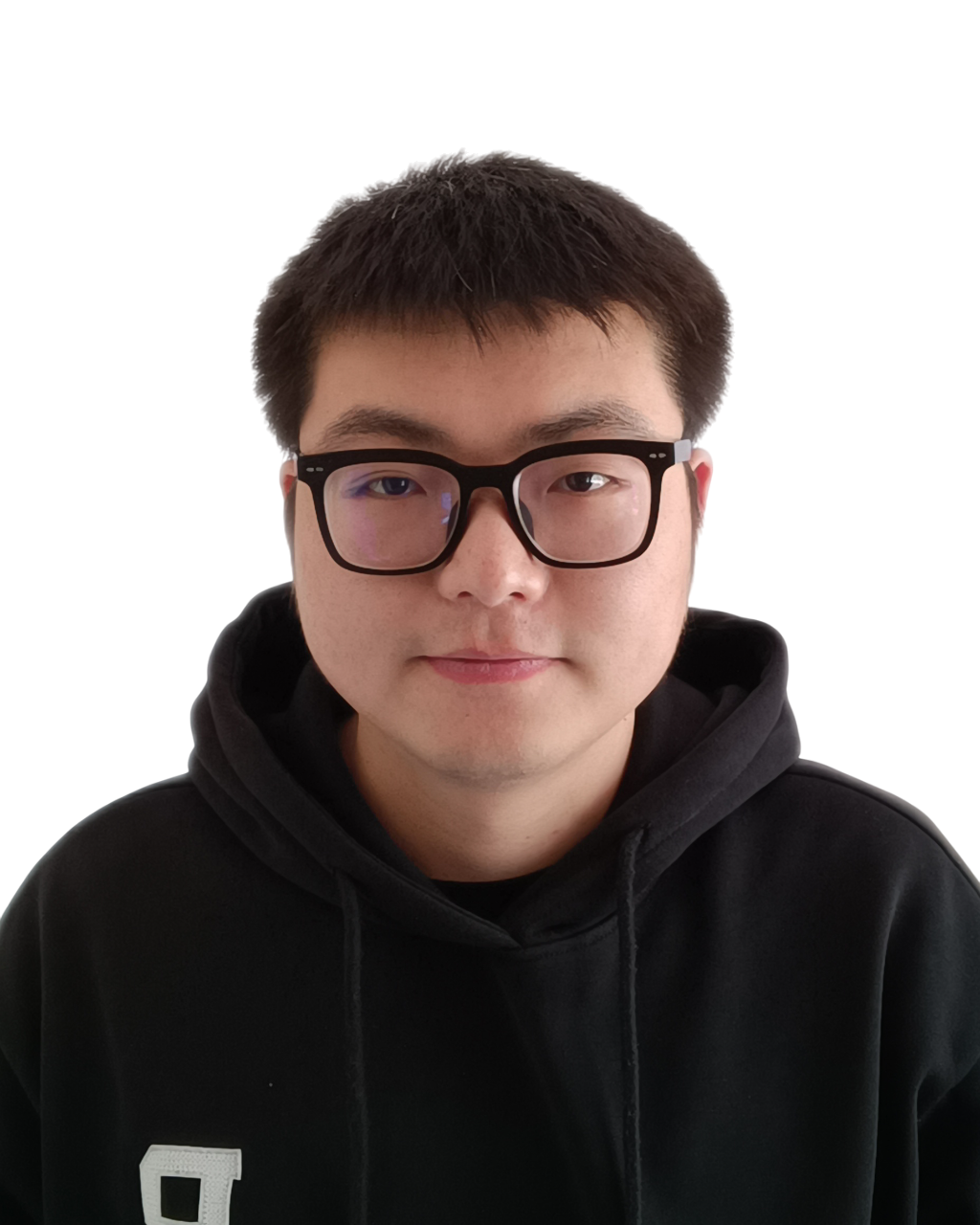}}]{Yaolei Shen} received the B.Eng. and M.Eng.~degree in mechanical engineering from Northwestern Polytechnical University, Xi'an, China, in 2019 and 2022, respectively.  He is currently a Ph.D. candidate in the Robotics and Mechatronics group at the University of Twente, the Netherlands. 

His research interests are modeling and control for aerial robotic manipulation.
\end{IEEEbiography}
\begin{IEEEbiography}[{\includegraphics[width=1in,height=1.25in,clip,keepaspectratio]{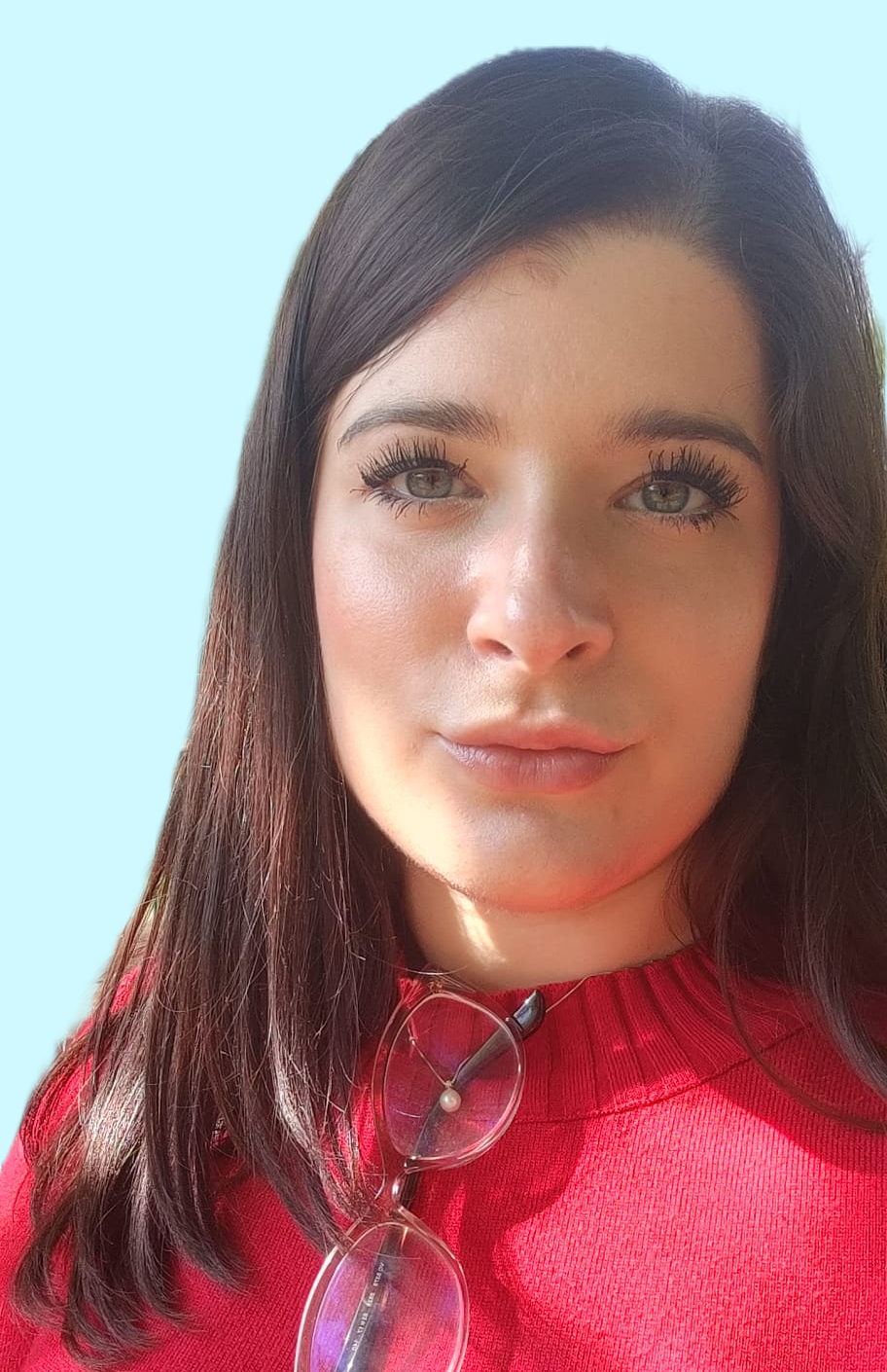}}] {Martina Paolucci} received the BSc. in Ingegneria Informatica e Automatica and MSc. in Control Engineering from Sapienza University of Rome, Italy, in 2020 and 2024, respectively.
\end{IEEEbiography}
\begin{IEEEbiography}[{\includegraphics[width=1in,height=1.25in,clip,keepaspectratio]{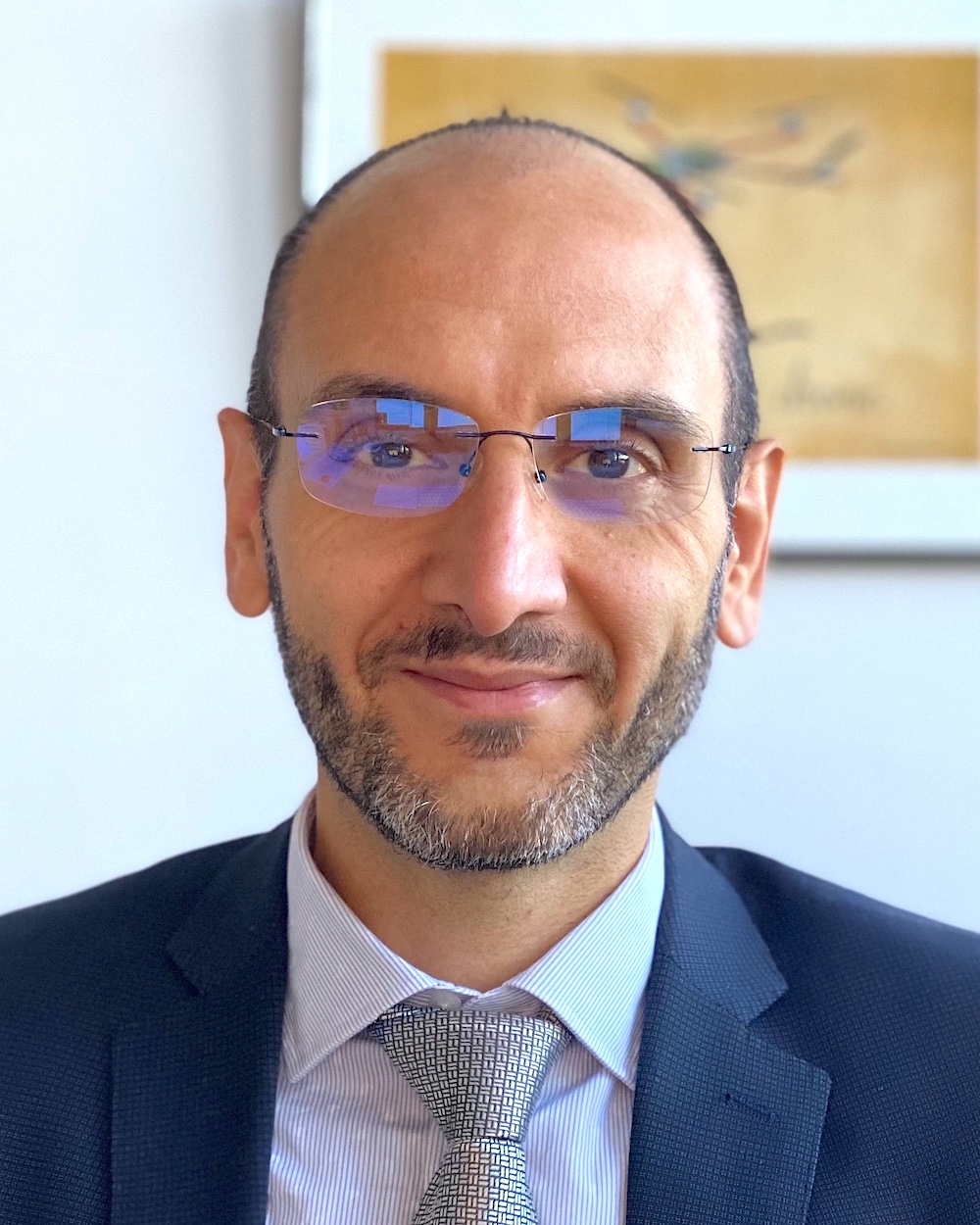}}]{Antonio Franchi} (F'23--SM'16--M'07) received the Ph.D.~degree in system engineering from the Sapienza University of Rome, Rome, Italy in 2010, and the HDR~degree in Science, from the  National Polytechnic Institute of Toulouse in 2016. From 2014 to 2019 he was a tenured CNRS researcher at LAAS-CNRS, Toulouse, France. From 2019 to 2021 he was an Associate Professor and then since 2022 he has been a Full Professor in aerial robotics control, both in the Robotics and Mechatronics group at the University of Twente. Enschede, The Netherlands. Since 2023 he is also a Full Professor in the Department of Computer, Control and Management Engineering at Sapienza University of Rome.
He co-authored more than 180 publications in peer-reviewed international journals, books, and conferences on design, control, and estimation for robotic systems applied to multi-robot systems and aerial robots.
\end{IEEEbiography}
\end{document}